\definecolor{citecolor}{HTML}{2D61D5}
\definecolor{urlcolor}{HTML}{2D61D5}
\definecolor{gold}{HTML}{c9b037}
\definecolor{silver}{HTML}{d7d7d7}
\definecolor{bronze}{HTML}{ad8a56}
\definecolor{GoogleGreen}{RGB}{52,168,83}
\definecolor{GoogleYellow}{RGB}{251,188,4}
\definecolor{GoogleBlue}{RGB}{66, 133, 244}
\DeclareRobustCommand{\colorrectangle}[1]{%
    \begin{tikzpicture}[baseline=0ex]
        \draw[#1, very thick, fill=#1] (0, 0.02) rectangle (0.3, 0.17);
    \end{tikzpicture}%
}
\newtcolorbox{callout2}[1][]{
    breakable,
    arc=3pt,
    left=0pt,
    right=0pt,
    top=0pt,
    bottom=0pt,
    colback=GoogleGreen!10,
    colframe=GoogleGreen!10,
    coltitle=black,
    boxsep=5pt,
    #1
}
\theoremstyle{plain}
\newtheorem{theorem}{Theorem}[section]
\newtheorem{lemma}[theorem]{Lemma}
\theoremstyle{definition}
\theoremstyle{remark}
\newmdenv[
  linewidth=1pt,
  roundcorner=3pt,
  innerleftmargin=4pt,
  innerrightmargin=4pt,
  innertopmargin=-5pt,
  innerbottommargin=4pt,
  leftmargin=0pt,
  rightmargin=0pt,
  skipabove=2pt, 
  skipbelow=0pt, 
]{eqbox}
\DeclareMathOperator{\softmax}{\mathbf{softmax}}
\DeclareMathOperator{\diag}{\mathbf{diag}}
\DeclareMathOperator{\KL}{D_{KL}}
\newcommand{\inv}{^{-1}}
\newcommand{\all}{\;\forall\:}
\def\eqref#1{equation~\ref{#1}}
\def\1{\bm{1}}
\DeclareMathAlphabet{\mathsfit}{\encodingdefault}{\sfdefault}{m}{sl}
\SetMathAlphabet{\mathsfit}{bold}{\encodingdefault}{\sfdefault}{bx}{n}
\newcommand{\E}{\mathbb{E}}
\newcommand{\R}{\mathbb{R}}
\newcommand{\Var}{\mathrm{Var}}
\DeclareMathOperator*{\argmax}{arg\,max}
\title{Rethinking Approximate Gaussian Inference \\ in Classification}
\author{Bálint Mucsányi\thanks{Equal contribution.}
        \And
        Nathaël Da Costa\footnotemark[1] \\
        \\
        Tübingen AI Center\\
        University of Tübingen
        \And
        Philipp Hennig}
\begin{document}

\maketitle

\begin{abstract}
    In classification tasks, softmax functions are ubiquitously used as output activations to produce predictive probabilities. Such outputs only capture aleatoric uncertainty. To capture epistemic uncertainty, approximate Gaussian inference methods have been proposed. We develop a common formalism to describe such methods, which we view as outputting Gaussian distributions over the logit space. Predictives are then obtained as the expectations of the Gaussian distributions pushed forward through the softmax. However, such softmax Gaussian integrals cannot be solved analytically, and Monte Carlo (MC) approximations can be costly and noisy. We propose to replace the softmax activation by element-wise normCDF or sigmoid, which allows for the accurate sampling-free approximation of predictives. This also enables the approximation of the Gaussian pushforwards by Dirichlet distributions with moment matching. This approach entirely eliminates the runtime and memory overhead associated with MC sampling. We evaluate it combined with several approximate Gaussian inference methods (Laplace, HET, SNGP) on large- and small-scale datasets (ImageNet, CIFAR-100, CIFAR-10), demonstrating improved uncertainty quantification capabilities compared to softmax MC sampling. Our code is available at \href{https://github.com/bmucsanyi/probit}{\texttt{github.com/bmucsanyi/probit}}.
\end{abstract}

\section{Introduction}

Given an input space $\mathcal X$, $C$ classes and training data $(x_n, c_n)_{n=1}^N \subset \mathcal X\times \{1,\dots, C\}$, the goal of probabilistic classification is to learn a function $\bm h\colon \mathcal X \to \Delta^{C-1}$, where
\begin{equation}
    \Delta^{C-1} = \{(p_1,\dots,p_C)\in [0,1]^C : p_1+\dots+p_C=1\}
\end{equation}
is the $(C-1)$-dimensional probability simplex. The model $\bm h$ outputs the probability of inputs $x\in \mathcal X$ belonging to each class, $h_c(x) = p(c\mid x)$. To learn such a map to the simplex, $\bm h$ is typically defined as a composition
\begin{equation}\label{eq:vanilla_model}
    \bm h\colon \mathcal X \xrightarrow{\quad \bm f \quad} \R^C \xrightarrow{\softmax} \Delta^{C-1}
\end{equation}
where a function $\bm f$, learned from the training data, is mapped through the softmax activation function. In this case, $\R^C$ is called the \emph{logit space}, and, for $x\in\mathcal X$, $\bm f(x)$ is a \emph{logit}.

With the usual assumption of i.i.d.~data, the likelihood under the model $\bm h$ is given by
\begin{equation}
    \begin{aligned}
        p((c_n)_{n=1}^N\mid (x_n)_{n=1}^N) & = \prod_{n=1}^N p(c_n\mid x_n) = \prod_{n=1}^N \prod_{c=1}^C h_c(x_n)^{\delta_{c_n,c}},
    \end{aligned}
\end{equation}
where $\delta$ is the Kronecker delta. This yields a natural loss function for classification problems, the cross-entropy (CE) loss
\begin{equation}\label{eq:cross_entropy}
    \begin{aligned}
        \mathcal L((x_n, c_n)_{n=1}^N) & = -\log(p((c_n)_{n=1}^N\mid (x_n)_{n=1}^N)) = -\sum_{n=1}^N\sum_{c=1}^C \delta_{c_n,c} \log(h_c(x_n)).
    \end{aligned}
\end{equation}
For typical models $\bm h$ trained through such approximate maximum likelihood estimation, an output probability vector $\bm h(x)\in\Delta^{C-1}$ should be interpreted as an estimate of $p_{\text{gen}}(\bm y\mid  x)$, the probability under the generative model. Notably, the output probability does not take into account the uncertainty of the model due to the finite nature of the data. In the uncertainty quantification formalism, such models can only estimate \emph{aleatoric uncertainty} and disregard \emph{epistemic uncertainty}~\cite{hullermeier2021aleatoric}.

\begin{figure}
    \centering
    \begin{minipage}{0.5\textwidth}
        \includegraphics{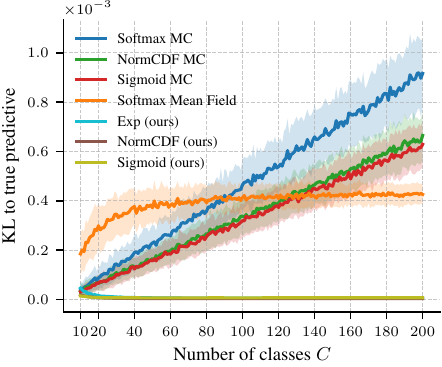}
    \end{minipage}
    \hfill
    \begin{minipage}{0.45\textwidth}
        \caption{Mean KL divergence between the `true' predictive (approximated with a $10000$ sample MC approximation) and different predictive approximations on a synthetic data set. This synthetic dataset consists of 100 logit means $\bm \mu$ and standard deviations $\bm\sigma$, whose components are i.i.d.~uniformly distributed. For a fair comparison we scale the logit dataset for the different activations to match the activations to first order, so that $\mu_c \in [-1,1]$, $\sigma_c\in[0,1]$ for sigmoid, $\mu_c \in [-1/2-\log 2,1/2-\log 2]$, $\sigma_c\in[0,1/2]$ for softmax, $\mu_c\in [-\sqrt{\pi/8},\sqrt{\pi/8}]$, $\sigma_c\in [0,\pi/8]$ for normCDF. The MC approximations are capped at a fixed computational budget of $10000$ class samples, i.e.~$\lceil 10000/C\rceil$ samples (see also \cref{app:mc_quality}).}
        \label{fig:synthetic_exp}
    \end{minipage}
\end{figure}

The probabilistic way to capture epistemic uncertainty is to require the model to output a \emph{second-order distribution} (that is `a distribution over probability distributions'). Namely, for each input $x\in \mathcal X$, the model should output a probability measure over the simplex $\mathrm h(x)\in \mathcal P(\Delta^{C-1})$, as opposed to a point estimate in the simplex $\bm h(x)\in\Delta^{C-1}$~\cite{sensoy2018evidential,charpentier2020posterior}.

A number of methods for such distributional uncertainty quantification in classification rely on \textit{approximate Gaussian inference} in logit space. That is, $\mathrm h$ is written as a composition
\begin{equation}\label{eq:gaussian_inference_softmax_model}
    \mathrm h\colon \mathcal X \xrightarrow{\quad \mathrm f \quad} \mathcal G(\R^C) \xrightarrow{\softmax_*} \mathcal P(\Delta^{C-1})
\end{equation}
where $\mathrm f$ is learned from the training data, $\mathcal G(\R^C)$ is the set of Gaussian measures on the logit space $\R^C$, and $\softmax_*$ pushes forward the Gaussian probability measures through the softmax. For example, Heteroscedastic Classifiers (HET) \cite{collier2021correlated} learn the output means and covariances of $\mathrm f$ explicitly with a neural network. In linearised Laplace approximations \cite{daxberger2021laplace}, only the means are learned explicitly with a neural network, while the covariances are obtained post-hoc from that neural network by approximating its parameter posterior distribution with a Gaussian, and then pushing it forward to logit space by linearising the network in the parameters. Last-layer Laplace approximations \cite{kristiadi_being_2020} are a variant of linearised Laplace approximations where only the posterior over the last-layer parameters is Gaussian-approximated, requiring no linearisation of the network. Spectral-Normalised Gaussian processes (SNGP) \cite{liu2020simple} are neural networks with last layers that are approximately Gaussian processes, through random feature expansions and Laplace approximations.

To obtain predictive probabilities from a distributional classifier as in \cref{eq:gaussian_inference_softmax_model}, we marginalise it out by the measure-theoretic change of variables, using $\mathrm h(x) = \softmax_*\mathrm f(x)$:
\begin{equation}\label{eq:gaussian_softmax_predictive}
    \begin{aligned}
        \E_{\bm P\sim \mathrm h(x)}[\bm P] & = \int_{\Delta^{C-1}} p\;\mathrm h(x)(dp) = \int_{\R^C} \softmax(y)\; \mathrm f(x)(dy).
    \end{aligned}
\end{equation}
In addition to predictive probabilities, the distributional framework allows for the acquisition of other quantities of interest, such as variances, entropy, and information-theoretic decompositions of the predictive uncertainty into aleatoric and epistemic parts~\cite{depeweg2018decomposition,mukhoti2023deep,wimmer2023quantifying}. These are usually obtained by Monte Carlo (MC) sampling from the Gaussian measure $\mathrm f(x)$. Indeed, while the probability density of $\mathrm h(x) = \softmax_*\mathrm f(x)$ can be obtained in closed form using a change of variables formula \cite{atchison_logistic-normal_1980}, this does not seem of practical use---even predictives cannot be computed analytically. On the other hand, given a variance, the runtime and memory cost required for an MC approximation to achieve an estimator of said variance grows linearly with the number of classes. Crucially, this computational cost is not limited to \emph{training time}, but also to \emph{inference time}, when the computational budget may be much more limited. This limits the application of such Gaussian inference methods in classification with a large number of classes, such as those common in computer vision and language modelling.

\textbf{Summary of Contributions:} Our work extends the classical probit approximation for the Gaussian sigmoid integral in binary classification \cite{spiegelhalter_sequential_1990,mackay_evidence_1992} in the following manner:

\begin{enumerate}[(i)]
    \item Generalising it to the multi-class setting (\cref{sec:motivation,sec:general_framework}).
    \item Analysing it theoretically (\cref{sec:making_approx_exact}).
    \item Analysing it empirically, and how it outperforms previous approximations (\cref{fig:synthetic_exp,sec:experiments}).
    \item Generalising it to second-order distributional Dirichlet approximations (\cref{sec:distributions}).
    \item Combining it with advantageous training (\cref{sec:training,sec:experiments}).
\end{enumerate}

Our proposed framework can be summarised as a 3-step recipe:
\begin{callout2}
\begin{enumerate}[(i)]
    \item Choose an approximate Gaussian inference method that outputs logit means and covariances.
    \item Choose an output activation that yields a closed-form Gaussian integral (\cref{sec:predictives}).
    \item At inference time, use our closed-form formulas (\cref{sec:predictives} for the predictives, \cref{sec:distributions} for other quantities of interest).
\end{enumerate}
\end{callout2}

\section{Related Work}

In uncertainty quantification, methods that output logit Gaussians, which we refer to as \emph{approximate Gaussian inference} methods, have become one of the dominant paradigms in uncertainty quantification \cite{collier2021correlated,daxberger2021laplace,kristiadi_being_2020,liu2020simple}. In classification tasks, such methods have the downside of requiring sampling at inference to transform Gaussian distributions into predictive probability vectors.

In parallel, prior work has developed ways to propagate probabilistic uncertainty through neural networks \cite{gast_lightweight_2018}. Several works develop analytic propagation of covariances up to the logits \cite{wright_analytic_2024,li_streamlining_2024}; our method complements these by addressing the propagation of the logit-space Gaussian distributions onto the probability simplex. Few works address this last step. The Laplace bridge \cite{pmlr-v180-hobbhahn22a} proposes a map from Gaussians to Dirichlets on the probability simplex. However, this approach has no guarantees on the quality of its predictives, and it severely underperforms Monte Carlo sampling. Other approaches include the mean-field approximation to the Gaussian softmax integral \cite{lu_mean-field_2021} and the approximations of \citet{shekhovtsov_feed-forward_2018}. In this work, we propose approximate predictives that outperform existing closed-form predictive techniques and provide second-order Dirichlet distributions for other closed-form quantities of interest. Furthermore, this is the first work to provide a theoretical analysis of such an approximation.

Other works have also proposed various activations for classification to replace the softmax. Particularly relevant is \cite{galy-fajou_multi-class_2020}, which composes element-wise sigmoids with normalisation and utilises auxiliary-variable augmentation to obtain closed-form ELBO terms during training; however, its predictives require Monte Carlo estimation. Related conjugate/augmentation-based objectives exist for stick-breaking and tree-structured multi-class GPs \cite{linderman_dependent_2015,achituve_gp-tree_2021}, and for BNNs, a tempered-likelihood/Dirichlet perspective with a factorised-Gaussian surrogate has been explored \cite{kapoor_uncertainty_2022}. Yet, all these approaches still require sampling or numerical quadrature at inference time.
\cite{spiegelhalter_sequential_1990,mackay_evidence_1992,kristiadi_being_2020} use the probit approximation in the binary case, which we generalise to the multi-class setting. \cite{titsias_rc_aueb_one-vs-each_2016} approximates the softmax by providing lower bounds for point estimate training, but replacing the softmax with these bounds during training does not yield a normalised multi-class likelihood. The binary cross-entropy loss with the element-wise sigmoid activation has been reported to outperform cross-entropy in large-scale settings \cite{wightman_resnet_2021,li_bce_2025}. Our work is the first to use this advantageous loss while normalising the output to obtain proper multi-class probabilistic models with closed-form predictives from logit Gaussians.

Evidential deep learning (EDL) approaches \cite{sensoy2018evidential} offer sample-free outputs through Dirichlet distributions but have been shown to be unreliable for uncertainty estimation and disentanglement \cite{NEURIPS2022_bc1d640f,jürgens2024epistemicuncertaintyfaithfullyrepresented,pandey2023learn,mucsányi2024benchmarking}. We focus on making approximate Gaussian inference methods sample-free in a principled and performant manner; however, for completeness, we compare our approach with EDL methods in \Cref{app:edl}.

Finally, multiple authors have explored continuous approximations to categorical distributions, which enable gradient-based optimization for discrete stochastic models through sampling, such as the Gumbel-max trick \cite{maddison_ast_2014,jang_categorical_2017,maddison_concrete_2017,huijben_review_2023}. These methods principally optimise a single categorical distribution, while the present work is concerned with constructing probability densities over the (continuous) simplex of such categorical distributions.

\section{Gaussian Inference with Closed-Form Predictives}\label{sec:predictives}
\subsection{Motivation from Softmax Models}\label{sec:motivation}
A key difficulty in obtaining exact or closed-form approximate predictive probabilities of $\mathrm h(x) = \softmax_*\mathrm f(x)$ is that pushforwards of Gaussian distributions through the softmax are intractable. The softmax function is the composition $\bm \softmax=\bm n \circ \bm \exp$, where $\bm n$ is the normalisation function given by $\bm n(\bm q):= \bm q/(q_1+\dots+q_C)$ and $\bm{\exp}$ is applied element-wise. Thus, $\mathrm h$ can be written as the composition (see~\cref{eq:gaussian_inference_softmax_model})
\begin{equation}\label{eq:gaussian_inference_softmax_model decomposed}
    \mathrm h\colon \mathcal X \xrightarrow{\quad \mathrm f \quad} \mathcal G(\R^C) \xrightarrow{\bm\exp_*} \mathcal P((0,\infty)^C) \xrightarrow{\bm n_*} \mathcal P(\Delta^{C-1}).
\end{equation}
Now it is straightforward to see that (\cref{app:gaussian_exp_integral})
\begin{equation}\label{eq:gaussian_exp_integral}
    \int_{\R}\exp(y)\;\mathcal N(\mu,\sigma^2)(dy) = \exp\left(\mu+\frac{\sigma^2}{2}\right)
\end{equation}
and thus
\begin{equation}\label{eq:exact_exp_predictive}
    \begin{aligned}
        \E_{\bm Q\sim \bm \exp_*\mathrm f(x)}[\bm Q] & = \int_{\R^C}\bm \exp(\bm y) \; \mathrm f(x)(d\bm y) = \bm \exp\left(\bm\mu(x)+\frac{\bm\sigma^2(x)}{2}\right)
    \end{aligned}
\end{equation}
where $\bm \mu(x)$ and $\bm \sigma^2(x)$ are, respectively, the mean and the diagonal of the covariance of $\mathrm f(x)$, and the vector division is element-wise. Note that no approximations have been made so far.

One can use this to approximate the model's predictive in closed form:
\begin{equation}\label{eq:approximate_exp_predictive}
    \begin{aligned}
        \E_{\bm P\sim \mathrm h(x)}[\bm P] =  \E_{\bm Q\sim \bm \exp_*\mathrm f(x)}\left[\frac{\bm Q} {\sum_{c=1}^CQ_c}\right]  &\approx 
        \frac{\E_{\bm Q\sim \bm \exp_*\mathrm f(x)}[\bm Q]}{\E_{\bm Q\sim \bm \exp_*\mathrm f(x)}\left[\sum_{c=1}^CQ_c\right]} \\
        &=\frac{\bm \exp\left(\bm\mu(x)+\frac{\bm\sigma^2(x)}{2}\right)}{\sum_{c=1}^C\exp\left(\mu_c(x)+\frac{\sigma^2_c(x)}{2}\right)}.
    \end{aligned}
\end{equation}

\subsection{The General Framework}\label{sec:general_framework}
The previous argumentation can be applied to a general family of output activation functions: suppose the output activation is of the form $\bm n\circ\bm\varphi$, where $\bm\varphi$ is the element-wise application of some activation function $\varphi\colon \R\to (0,\infty)$. Then we have (see~\cref{eq:gaussian_inference_softmax_model decomposed})
\begin{equation}\label{eq:gaussian_inference_model}
    \mathrm h\colon \mathcal X \xrightarrow{\quad \mathrm f \quad} \mathcal G(\R^C) \xrightarrow{\bm\varphi_*} \mathcal P((0,\infty)^C) \xrightarrow{\bm n_*} \mathcal P(\Delta^{C-1}).
\end{equation}
Solving the one-dimensional integrals $\int_{\R}\varphi(y)\;\mathcal N(\mu,\sigma^2)(dy)$ (see~\cref{eq:gaussian_exp_integral}) allows us to obtain the approximate predictive (see~\cref{eq:approximate_exp_predictive})
\begin{equation}\label{eq:approximate_predictive}
    \begin{aligned}
        \E_{\bm P\sim \mathrm h(x)}[\bm P]\approx \frac{\E_{\bm Q\sim \bm \varphi_*\mathrm f(x)}[\bm Q]}{\sum_{c=1}^C\E_{Q_c\sim  \varphi_*\mathrm f_c(x)}\left[Q_c\right]}.
    \end{aligned}
\end{equation}
For instance, taking $\varphi =\Phi$, the standard Normal cumulative distribution function (normCDF)\footnote{The resulting model is distinct from a multinomial probit model, see \cref{app:probit_model}.}, we have by a classical result (\cref{app:gaussian_normcdf_integral})
\begin{equation}\label{eq:gaussian_normcdf_integral}
    \int_{\R}\Phi(y)\;\mathcal N(\mu,\sigma^2)(dy) = \Phi\left(\frac{\mu}{\sqrt{1+\sigma^2}}\right).
\end{equation}
and thus
\begin{equation}\label{eq:approximate_normcdf_predictive}
    \begin{aligned}
         & \E_{\bm P\sim \mathrm h(x)}[\bm P] \approx \frac{\bm \Phi\left(\frac{\bm \mu(x)}{\sqrt{1+\bm \sigma^2(x)}}\right)} {\sum_{c=1}^C\Phi\left(\frac{\mu_c(x)}{\sqrt{1+\sigma^2_c(x)}}\right)}.
    \end{aligned}
\end{equation}

Taking instead $\varphi =\rho$, the logistic sigmoid, we approximate \cref{eq:gaussian_normcdf_integral} using the probit approximation (\cref{app:gaussian_sigmoid_integral}),
\begin{equation}\label{eq:gaussian_sigmoid_integral}
    \int_{\R}\rho(y)\;\mathcal N(\mu,\sigma^2)(dy) \approx \rho\left(\frac{\mu}{\sqrt{1+\frac{\pi}{8}\sigma^2}}\right)
\end{equation}
yielding in this case
\begin{equation}\label{eq:approximate_sigmoid_predictive}
    \begin{aligned}
        \E_{\bm P\sim \mathrm h(x)}[\bm P]\approx \frac{\bm \rho\left(\frac{\bm \mu(x)}{\sqrt{1+\frac{\pi}{8}\bm \sigma^2(x)}}\right)}{ \sum_{c=1}^C\rho\left(\frac{\mu_c(x)}{\sqrt{1+\frac{\pi}{8}\sigma^2_c(x)}}\right)}.
    \end{aligned}
\end{equation}

\subsection{Theoretical Analysis of the Approximation's Quality}\label{sec:making_approx_exact}
In \cref{fig:synthetic_exp}, we see empirically that the quality of our predictive approximations outperforms existing predictive approximations on synthetic data. We now explore the quality of our predictive approximations theoretically. We conduct the analysis for the case $\varphi=\rho$. Let
\begin{equation}
    q = q(\mu,\sigma^2) := \E_{Q \sim \rho_*\mathcal N(\mu,\sigma^2)}[Q] \approx \rho\left(\frac{\mu}{\sqrt{1+\frac{\pi}8 \sigma^2}}\right) =: \hat q(\mu,\sigma^2) = \hat q,
\end{equation} 
where $\hat q$ is the probit approximation (\cref{app:gaussian_sigmoid_integral}). Moreover we write $q_c := q(\mu_c, \sigma_c^2)$, $\hat q_c := \hat q (\mu_c,\sigma_c^2)$ where $(\mu_c,\sigma_c^2)$ is the logit mean-variance pair for class $c$,
$\bm p$ for the true predictive and $\bm{\hat p}$ for our approximate predictive, i.e.
\begin{equation}
    \bm p := \E_{\bm P\sim (\bm n\circ \bm \rho)_*\mathcal N(\bm \mu, \bm \Sigma)}[\bm P], \quad \bm {\hat p} := \frac{\hat q_c}{\sum_{c'=1}^C \hat q_{c'}},
\end{equation}
where $\bm \Sigma$ has $\bm \sigma$ as diagonal. To analyse the quality of the approximation, we thus want to bound $\KL(\bm p, \bm{\hat p})$. Clearly, when all the $Q_c$ are deterministic (i.e.~constant random variables), where $\bm Q\sim \bm \varphi_*\mathrm f(x)$, we have $\bm p = \bm {\hat p}$. Alternatively, from the derivation \cref{eq:approximate_exp_predictive}, we see that in the limiting case \cref{eq:approximate_predictive} is exact when $\sum_{c=1}^C Q_{c}$ is deterministic. This is a weaker assumption than requiring all $Q_c$ to be deterministic individually. In fact, we can derive a rate of decay of $\KL(\bm p, \bm {\hat p})$ in terms of $\Var\left(\sum_{c=1}^C Q_c\right)$:
\begin{theorem}\label{thm:kl_approximation}
    Suppose the means and variances $(\mu_c,\sigma_c^2)$ lie in some compact set $\mathcal K \subset \R\times [0,\infty)$ for each class $c$. Then
    \begin{equation}\label{eq:kl_approximation}
        \KL(\bm p,\bm {\hat p}) \leq M(\mathcal K) +O\left(\Var\left(\sum_{c=1}^C Q_c\right)\right)
    \end{equation}
    as $\Var\left(\sum_{c=1}^C Q_c\right) \to 0$ and $M(\mathcal K)\in [0,\infty]$ some constant which only depends on $\mathcal K$, not on $C$, such that $M(\mathcal K) <\infty$ when $\sup_{(\mu,\sigma^2)\in \mathcal K}\frac{q-\hat q}{q}<1$, and $M(\mathcal K) = 0$ when $q = \hat q$ for all $(\mu,\sigma^2)\in\mathcal K$.
\end{theorem}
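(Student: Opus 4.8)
The plan is to isolate the two independent error sources hidden in $\bm{\hat p}$: the pointwise \emph{probit approximation} $q_c\mapsto\hat q_c$, whose effect persists even when $S:=\sum_c Q_c$ is deterministic and which will produce the constant $M(\mathcal K)$; and the \emph{normalisation} step of commuting the expectation through the division by $S$, which is exactly the Jensen gap of $\bm n$ and must vanish with $\Var(S)$. I introduce the auxiliary predictive $\tilde p_c := q_c/\bar q$, $\bar q:=\sum_{c'}q_{c'}$, built from the \emph{exact} integrals, and split
\begin{equation}
    \KL(\bm p,\bm{\hat p}) = \KL(\tilde{\bm p},\bm{\hat p}) + \big(\KL(\bm p,\bm{\hat p})-\KL(\tilde{\bm p},\bm{\hat p})\big),
\end{equation}
so the first term is $\Var(S)$-independent and will give the constant, and the parenthesised remainder carries the normalisation error.

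For the first term, set $w_c:=q_c/\bar q$ and $r_c:=\hat q_c/q_c$. Substituting $\hat p_c=\hat q_c/\sum_{c'}\hat q_{c'}$ and simplifying, the normalisers cancel into a single Jensen gap,
\begin{equation}
    \KL(\tilde{\bm p},\bm{\hat p}) = \log\Big(\sum_c w_c r_c\Big)-\sum_c w_c\log r_c \;\ge\;0,
\end{equation}
with equality iff all $r_c$ coincide, in particular $=0$ when $q=\hat q$ on $\mathcal K$. On the compact set $\mathcal K$ the continuous, strictly positive $q,\hat q$ confine $r_c$ to an interval $[r_{\min},r_{\max}]$ depending on $\mathcal K$ alone. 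Since for fixed mean the concave $\log$ has its expectation minimised by the most spread-out (two-point) law on the endpoints, the gap over any measure supported in $[r_{\min},r_{\max}]$ is maximised by a two-point mass, hence bounded by a finite constant $M(\mathcal K)$ that depends on $(r_{\min},r_{\max})$ but not on $C$ or the weights. This bound is finite precisely when $r_{\min}>0$; as $\sup_{\mathcal K}\frac{q-\hat q}{q}=1-r_{\min}$, this is exactly the stated condition $\sup_{\mathcal K}\frac{q-\hat q}{q}<1$, and $M(\mathcal K)=0$ recovers the case $q\equiv\hat q$.

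For the remainder I use that $\bm n$ is scale invariant, so $\bm p=\tilde{\bm p}$ whenever $S$ is a.s.\ constant (then $\E[Q_c/S]=q_c/\bar q$). Expanding $\KL(\,\cdot\,,\bm{\hat p})$ around $\tilde{\bm p}$ and using $\sum_c(p_c-\tilde p_c)=0$ to kill the constant part of the gradient gives
\begin{equation}
    \KL(\bm p,\bm{\hat p})-\KL(\tilde{\bm p},\bm{\hat p}) = \sum_c (p_c-\tilde p_c)\log\frac{\tilde p_c}{\hat p_c} + \frac12\sum_c\frac{(p_c-\tilde p_c)^2}{\tilde p_c}+\dots,
\end{equation}
whose quadratic part is $O(\Var(S))$: a Jensen-gap expansion of $F_c(\bm q):=q_c/\sum_{c'}q_{c'}$ around $\E[\bm Q]$ yields $p_c-\tilde p_c=-\Cov(Q_c,S)/\bar q^{2}+q_c\Var(S)/\bar q^{3}+\dots$, and with $Q_c\in(0,1)$, $q_c\ge q_{\min}>0$ and $\bar q\ge Cq_{\min}$ on $\mathcal K$ one checks $\sum_c(p_c-\tilde p_c)^2/\tilde p_c=O(\Var(S))$ uniformly in $C$.

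The main obstacle is the \emph{linear} term $\sum_c(p_c-\tilde p_c)\log(\tilde p_c/\hat p_c)$. Its coefficients $\log(\tilde p_c/\hat p_c)$ are bounded on $\mathcal K$, but it inherits the covariance piece $\Cov(Q_c,S)$, and a crude Cauchy–Schwarz bound gives only $\Cov(Q_c,S)=O(\sqrt{\Var(S)})$, which is too weak for the claimed rate. For independent logits the difficulty disappears, since then $\Cov(Q_c,S)=\Var(Q_c)\le\Var(S)$ makes this term $O(\Var(S))$ directly; in the general correlated case the rate $O(\Var(S))$ hinges on showing $\Cov\!\big(\sum_c\log(\tilde p_c/\hat p_c)\,Q_c,\,S\big)$ is subordinate to $\Var(S)$, and establishing this is the delicate quantitative heart of the argument.
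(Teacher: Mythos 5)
Your handling of the constant term is correct, and your decomposition \emph{is} genuinely different from the paper's---but the gap you flag at the end is not a delicate technicality waiting for a clever covariance estimate: it is a dead end, because the subordination $\Cov(\,\cdot\,,S)=O(\Var(S))$ you hope for is false once the logits are correlated (which the theorem permits: only the diagonal of $\bm\Sigma$ is constrained to $\mathcal K$). Concretely, take $C=2$, $Q_c=\rho(Y_c)$, $Y_1\sim\mathcal N(\mu_1,1)$, and $Y_2=-Y_1+\epsilon(aY_1+bW)$ with $W$ an independent standard normal, $a\neq0$, and $\mu_1\neq 0$ chosen so that $q(\mu_1,1)\neq\hat q(\mu_1,1)$ and $\Cov\bigl(\rho(Y_1),Y_1\rho'(Y_1)\bigr)\neq0$ (both hold for generic small $\mu_1$, the latter being strictly positive at $\mu_1=0$). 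Since $\rho(-y)=1-\rho(y)$ and $\rho'(-y)=\rho'(y)$,
\begin{equation}
S=1+\epsilon\,(aY_1+bW)\,\rho'(Y_1)+O(\epsilon^2),\qquad \Var(S)=\Theta(\epsilon^2),
\end{equation}
while
\begin{equation}
\Cov(Q_1,S)=\epsilon a\,\Cov\bigl(\rho(Y_1),\,Y_1\rho'(Y_1)\bigr)+O(\epsilon^2)=\Theta(\epsilon)=\Theta\bigl(\Var(S)^{1/2}\bigr).
\end{equation}
Hence $p_1-\tilde p_1=\Theta(\Var(S)^{1/2})$, so the rate of \cref{lem:expectation_ratio} is tight; and since the per-class probit errors converge to the distinct limits $\log\frac{q(\mu_1,1)}{\hat q(\mu_1,1)}$ and $\log\frac{1-q(\mu_1,1)}{1-\hat q(\mu_1,1)}$ (one positive, one negative), your linear term $\sum_c(p_c-\tilde p_c)\log(\tilde p_c/\hat p_c)$ is $\Theta(\Var(S)^{1/2})$. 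Your split therefore yields only $\KL(\bm p,\bm{\hat p})\leq M(\mathcal K)+O(\Var(S)^{1/2})$, a strictly weaker rate. (This does not contradict the theorem: the $\Var(S)^{1/2}$ fluctuation is absorbed in the slack between your constant $\KL(\tilde{\bm p},\bm{\hat p})$ and the worst-case constant $M(\mathcal K)$.)

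The missing idea is to split the \emph{logarithm} rather than the KL, keeping the weights $p_c$ on the probit-error term, which is what the paper does:
\begin{equation}
\KL(\bm p,\bm{\hat p})=\underbrace{\sum_{c=1}^C p_c\log\frac{p_c}{\tilde p_c}}_{=\,\KL(\bm p,\tilde{\bm p})}\;+\;\underbrace{\sum_{c=1}^C p_c\log\frac{\tilde p_c}{\hat p_c}}_{\leq\, M(\mathcal K)\text{ pointwise}}.
\end{equation}
The second sum needs no rate at all: each coefficient $\log(\tilde p_c/\hat p_c)$ is bounded on $\mathcal K$ by exactly the compactness constants appearing in your Jensen-gap argument, uniformly in $C$; the $\Var(S)^{1/2}$-sized cross term never appears because it is swallowed into this uniformly bounded sum. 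The first sum is now a genuine KL divergence between two normalized vectors, so in its Taylor expansion the first-order coefficients are \emph{constant} and are annihilated by $\sum_c(p_c-\tilde p_c)=0$; only squares $(p_c-\tilde p_c)^2$ survive, and the per-class bound $|p_c-\tilde p_c|=O(\Var(S)^{1/2})$---which, as shown above, is the best available---suffices after squaring to give $O(\Var(S))$. Your analysis of the constant term (finiteness iff $\sup_{\mathcal K}(q-\hat q)/q<1$, vanishing iff $q\equiv\hat q$ on $\mathcal K$) transfers essentially verbatim to this decomposition, since the paper's bound on the second sum is a uniform-in-$c$ version of your Jensen-gap estimate.
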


\cref{app:closed_form_analysis} and \cref{thm:app_version} provide a formula for $M(\mathcal K)$ and a proof. We also conduct an empirical investigation of whether the assumptions made in the theorem hold in practice, namely whether the logit means and variances lie in a compact set, and whether $M(\mathcal K)<\infty$.

The term $M(\mathcal K)$ in \cref{eq:kl_approximation} results from the error due to the probit approximation, while the $O$ term is due to the splitting of the expectation into the numerator and denominator in the derivation \cref{eq:approximate_exp_predictive}. Importantly, \cref{thm:kl_approximation} states that the former does not increase with the number of classes. For the latter, requiring $\Var\left(\sum_{c=1}^C Q_c\right) \to 0$ is weaker than requiring $\Var(Q_c)\to 0$ for all $c$ (see \cref{eq:variance-inequality} in \cref{app:closed_form_analysis}); but we show in \cref{app:closed_form_analysis} that assuming $\Var(Q_c)\to 0$ for all $c$ gives almost twice as strong a decay rate in $\KL$. We include this additional result in \cref{thm:app_version}.

\section{Distributional Approximations of the Gaussian Pushforwards}\label{sec:distributions}
\subsection{Dirichlet Matching}\label{sec:dirichlet_matching}
While in many applications, predictive probabilities are all that is needed from a model, we set out to build a model that outputs a tractable probability distribution over the simplex $\mathrm h(x)\in \mathcal P(\Delta^{C-1})$. As previously discussed, such a distribution allows, for instance, closed-form decompositions of aleatoric and epistemic uncertainties.

The formulation of \cref{eq:gaussian_inference_model} does not yet allow this. However, we will now see that one can obtain good Dirichlet approximations by constructing a tractable approximation of \cref{eq:gaussian_inference_model} of the form
\begin{equation}\label{eq:dirichlet_model}
    \mathrm h\colon \mathcal X \xrightarrow{\quad \mathrm f \quad} \mathcal G(\R^C) \xrightarrow{\quad \mathrm a \quad} \mathcal D(\Delta^{C-1}),
\end{equation}
where $\mathcal D(\Delta^{C-1})$ is the space of Dirichlet distributions on $\Delta^{C-1}$. The family of Dirichlets presents an ideal choice of distributions over the simplex as, for such distributions, all quantities of interest are closed-form (see~\cref{app:list_estimators}). The map $\mathrm a$ is constructed by moment matching: we match the first two moments of the Dirichlet distribution to the moments of the Gaussian pushforward $\bm\varphi_*\mathrm f(x)$.

Conveniently, like the first moments described in \cref{sec:predictives}, we can obtain second moments for $\exp$
\begin{equation}\label{eq:gaussian_exp2_integral}
    \int_{\R}\exp(y)^2\;\mathcal N(\mu,\sigma^2)(dy) = \exp(2\mu+2\sigma^2)
\end{equation}
and for $\Phi$ (\citet[Eq. 20,010.4]{owen_table_1980})
\begin{equation}\label{eq:gaussian_normcdf2_integral}
    \begin{aligned}
         & \int_{\R}\Phi(y)^2\;\mathcal N(\mu,\sigma^2)(dy) = \Phi\left(\frac{\mu}{\sqrt{1+\sigma^2}}\right) - 2 T\left(\frac{\mu}{\sqrt{1+\sigma^2}}, \frac{1}{\sqrt{1+2\sigma^2}}\right),
    \end{aligned}
\end{equation}
where $T$ is Owen's T function. For $\rho$, we have the following approximation (\citet[Equation 23]{daunizeau_semi-analytical_2017})
\begin{equation}\label{eq:gaussian_sigmoid2_integral}
    \begin{aligned}
        \int_{\R}\rho(y)^2\;\mathcal N(\mu,\sigma^2)(dy) \approx \rho\left(\frac{\mu}{\sqrt{1+\frac{\pi}{8}\sigma^2}}\right)- \frac{\rho\left(\frac{\mu}{\sqrt{1+\frac{\pi}{8}\sigma^2}}\right)\left(1-\rho\left(\frac{\mu}{\sqrt{1+\frac{\pi}{8}\sigma^2}}\right)\right)}{\sqrt{1+\frac{\pi}{8}\sigma^2}}.
    \end{aligned}
\end{equation}

This enables us to compute the second moment of the pushforward distribution via
\begin{equation}\label{eq:approximate_second_moment}
    \begin{aligned}
        \E_{\bm P\sim \bm n_*\bm \varphi_* \mathrm f(x)}[\bm P^2] & = \E_{\bm Q\sim \bm \varphi_*\mathrm f(x)}\left[\frac{\bm Q^2} {\left(\sum_{c=1}^CQ_c\right)^2}\right]\approx \frac{\E_{\bm Q\sim \bm \varphi_*\mathrm f(x)}[\bm Q^2]}{\E_{Q_c\sim \varphi_*\mathrm f_c(x)}\left[\sum_{c=1}^CQ_c\right]^2}.
    \end{aligned}
\end{equation}
As in \cref{eq:approximate_predictive}, the approximation in \cref{eq:approximate_second_moment} is exact when $\sum_{c=1}^C Q_c$ is a constant random variable.

If we were to directly attempt to infer the parameters of the Dirichlet $\mathrm h(x)$ from \cref{eq:approximate_predictive,eq:approximate_second_moment}, we would obtain an overparametrised system of equations with $2C$ equations and $C$ unknowns. Thus, we instead use a classical Dirichlet method of moments \citep[Eqs. 19 \& 23]{minka_2000}, which reduces these equations, as we will now describe.

The sum of the Dirichlet parameters $\sum_{c=1}^C\gamma_c$ may be estimated by
\begin{equation}\label{eq:estimate_sum_dirichlet_parameters}
    \frac{\E[P_c]-\E[P^2_c]}{\E[P^2_c]-\E[P_c]^2}
\end{equation}
for any $1\leq c\leq C$. So, a natural estimate of $\sum_{c=1}^C\gamma_c$ that uses all $P_c$ is the geometric mean of \cref{eq:estimate_sum_dirichlet_parameters}. Since the mean of the Dirichlet is $\bm\gamma / \sum_{c=1}^C \gamma_c$, we obtain using \cref{eq:estimate_sum_dirichlet_parameters} an expression for the Dirichlet parameters
\begin{equation}
    \left(\prod_{c=1}^C \frac{\E[P_c]-\E[P^2_c]}{\E[P^2_c]-\E[P_c]^2}\right)^{1/C}\E[\bm P].
\end{equation}
Hence, using \cref{eq:approximate_predictive,eq:approximate_second_moment}, we obtain the computationally feasible Dirichlet parameters
\begin{equation}\label{eq:dirichlet_parameters}
    \bm\gamma := \left(\prod_{c=1}^C\frac{\E[Q_c]\cdot S-\E[Q^2_c]}{\E[Q^2_c]-\E[Q_c]^2}\right)^{1/C}\left(\frac{\E[\bm Q]}{\sum_{c=1}^C\E[Q_c]}\right)
\end{equation}
for $\mathrm h(x)$, where $S = \max(\sum_{c=1}^C\E[Q_c], 1)$. Taking the maximum with $1$ in the expression for $S$ is an additional approximation to ensure $\bm \gamma >\bm 0$. Note that by performing such moment matching to obtain the Dirichlet parameters in \cref{eq:dirichlet_parameters}, the mean of this Dirichlet matches precisely the approximate predictive \cref{eq:approximate_predictive}. That is, \emph{the approximate predictive of the exact distributional model} (\cref{eq:gaussian_inference_model})\emph{ matches the exact predictive of the approximate distributional model} (\cref{eq:dirichlet_model}).

\subsection{Intermediate Beta Matching}\label{sec:beta_matching}
\begin{figure}[t]
    \centering
    \begin{minipage}{0.5\textwidth}
        \includegraphics{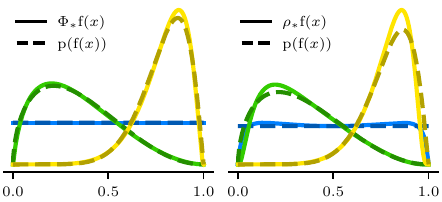}
    \end{minipage}
    \hfill
    \begin{minipage}{0.45\textwidth}
        \caption{Approximating Gaussian pushforwards $\Phi_*\mathrm f(x)$ and $\rho_*\mathrm f(x)$ through normCDF and sigmoid respectively by Beta distributions $\mathrm p(\mathrm f(x))$ with moment matching. With normCDF, we can match exact moments, whereas with sigmoid, we match approximate moments. Moreover, when $\mathrm f(x)$ is the standard Normal $\mathcal N(0,1)$ (on the left in blue), the approximation for the pushforward through normCDF is exact (\cref{app:information_geometry}).}
        \label{fig:beta_approx}
    \end{minipage}
\end{figure}
When $\varphi = \Phi \text{ or } \rho$, noting that their range is $(0,1)$, we can decompose the approximate distributional model (\cref{eq:dirichlet_model}) further:
\begin{equation}\label{eq:beta_dirichlet_model}
    \mathrm h\colon \mathcal X \xrightarrow{\quad \mathrm f \quad} \mathcal G(\R^C) \xrightarrow{\mathrm p} \mathcal B((0,1))^C \xrightarrow{\mathrm n} \mathcal D(\Delta^{C-1}),
\end{equation}
where $\mathcal B((0,1))$ is the space of Beta distributions on $(0,1)$. Like the output Dirichlets, we obtain the intermediate Beta distributions by moment matching (\cref{app:beta_matching}). \Cref{fig:beta_approx} indicates that such Beta approximations tend to be of high quality. In \cref{app:information_geometry}, we give an information-geometric motivation for this, which shows that the choice of activation $\Phi$ is ideal for approximating the Gaussian pushforwards with Beta distributions. The quality of such Beta approximations is encouraging for that of the output Dirichlet approximations. Analysing this theoretically presents an interesting direction for future research.

\section{Training}\label{sec:training}
With the choices $\varphi= \Phi$ or $\rho$, we could still train the underlying model with a cross-entropy (CE) loss, replacing the softmax by the activation $\bm a= \bm n\circ \bm \Phi$ or $\bm n\circ \bm \rho$. However, we can instead leverage the fact that $\Phi$ and $\rho$ have values in $(0,1)$ to train using the binary cross-entropy (BCE) loss.

The BCE loss is a valid choice of loss as it is a (negative) strictly proper scoring rule (see~\cref{app:benchmark_tasks}). Moreover, it has been observed to provide advantageous training dynamics compared to the CE loss \cite{rw2019timm,li_bce_2025}. The choice of $\varphi= \Phi$ or $\rho$ not only fits our framework for closed-form predictives, but allows one to leverage such advantageous training dynamics (see experimental results in \cref{sec:experiments}).

For example, in linearised Laplace and SNGP, a neural network $\bm f\colon \mathcal X \to \R^C$ is trained for the mean of $\mathrm f$, while the covariance is obtained post-hoc. Thus, for these methods, we can train the mean function $\bm f$ with the BCE loss
\begin{equation}\label{eq:classwise_loss}
    \begin{aligned}
        \mathcal L((x_n, c_n)_{n=1}^N) = -\sum_{n=1}^N\sum_{c=1}^C \big(\delta_{c_n,c}\log(\varphi(f_c(x)))+ (1-\delta_{c_n,c})\log(1-\varphi(f_c(x))\big).
    \end{aligned}
\end{equation}

In contrast to the previously mentioned methods, HET directly trains not only for the mean $\bm f$ but also for the covariance matrix $\bm\Sigma$ of $\mathrm f$ through the predictives. Thus, in this case, we use the BCE with the pushforward means $\bar{Q}_c := \E_{Q_c\sim\varphi_*\mathrm f_c(x)}[Q_c]$:
\begin{equation}\label{eq:het_classwise_loss}
        \mathcal L((x_n, c_n)_{n=1}^N) = -\sum_{n=1}^N\sum_{c=1}^C \big(\delta_{c_n,c}\log(\bar{Q}_c) + (1-\delta_{c_n,c})\log(1-\bar{Q}_c)\big).
\end{equation}

\section{Computational Gains}\label{sec:computational_gains}
The runtime cost of the MC predictive approximation is $\mathcal O(S\cdot C+C^3)$, and the memory cost is $\mathcal O(S\cdot C+C^2)$, where $S$ is the number of samples. The $C^2$ term in memory comes from the logit covariance matrix; the $C^3$ term in runtime comes from the need to Cholesky-decompose such a covariance matrix before sampling. For example, on a ResNet-50, sampling one thousand logit vectors from the logit-space Gaussian incurs approximately 7\% overhead on the forward pass: see \cref{app:runtime_overhead} for MC timing experiments. In embedded systems with much smaller models, this overhead is even larger. In contrast, the runtime and memory cost of calculating our predictives is $\mathcal O(C)$, which is negligible in comparison.

Indeed, the approximation \cref{eq:approximate_predictive} uses only the mean and \emph{the diagonal} of the logit covariance to calculate the predictive. That is, the logit quantities required to compute a predictive scale as $\mathcal O(C)$ instead of $\mathcal O(C^2)$. This provides further opportunities for computational savings in the methods used to obtain the Gaussian distributions. E.g., in HET, the logit covariance is trained with both a diagonal term and a low-rank term. Our framework thus allows dropping the low-rank term (\cref{app:het}).

\section{Experiments}\label{sec:experiments}

We now investigate our two research questions:
\begin{enumerate}[(i)]
    \item Do we have to sacrifice performance for sample-free predictives? (\cref{sec:quality_of_sample_free_predictives})
    \item What are the effects of changing the learning objective? (\cref{sec:changing_the_learning_objective})
\end{enumerate}

To answer the first question, we consider fixed (method, activation) pairs and verify that our methods perform on par with the Monte Carlo sampled predictives. In the main paper, we report results on \textbf{Softmax} ($\varphi = \exp$, trained with vanilla CE), \textbf{NormCDF} ($\varphi=\Phi$) and \textbf{Sigmoid} ($\varphi=\rho$) models. In~\cref{app:closed_form_softmax_predictives}, we investigate the performance of Softmax models with closed-form predictives.

\begin{wraptable}[16]{R}{0.33\columnwidth}
    \scriptsize
    \caption{Comparison of ECE results for different predictives using a fixed Laplace backbone.}
    \label{tab:imagenet_pred_probs}
    \centering
    \begin{tabular}{lc}
        \toprule
        \textbf{Method} & \textbf{Mean $\pm$ 2 std} \\
        \midrule
        \multicolumn{2}{c}{\textbf{Sigmoid Laplace}}   \\
        \midrule
        Closed-Form     & 0.0101 $\pm$ 0.0006       \\
        MC 1000         & 0.0153 $\pm$ 0.0011       \\
        MC 100          & 0.0153 $\pm$ 0.0012       \\
        MC 10           & 0.0165 $\pm$ 0.0014       \\
        \midrule
        \multicolumn{2}{c}{\textbf{NormCDF Laplace}}   \\
        \midrule
        MC 10           & 0.0112 $\pm$ 0.0005       \\
        Closed-Form     & 0.0114 $\pm$ 0.0007       \\
        MC 100          & 0.0115 $\pm$ 0.0006       \\
        MC 1000         & 0.0118 $\pm$ 0.0007       \\
        \bottomrule
    \end{tabular}
\end{wraptable}

We consider \textbf{Heteroscedastic Classifiers (HET)}~\cite{collier2021correlated}, \textbf{Spectral-Normalised Gaussian Processes (SNGP)}~\cite{liu2020simple}, and last-layer \textbf{Laplace-Approximated Networks}~\cite{daxberger2021laplace} as backbones supplying logit-space Gaussian distributions. The resulting 21 (method, activation, predictive) triplets are evaluated on ImageNet-1k \citep{deng2009imagenet}, CIFAR-10, and CIFAR-100 \citep{krizhevsky2009learning} on five metrics aligning with practical needs from uncertainty estimates~\cite{mucsanyi2023trustworthy}. Further, we test the Out-of-Distribution (OOD) detection capabilities of the models on balanced mixtures of In-Distribution (ID) and OOD samples. For ImageNet, we treat ImageNet-C~\citep{hendrycks2019benchmarking} samples with 15 corruption types and 5 severity levels as OOD samples. For CIFAR-10 and CIFAR-100, we use the CIFAR-10C corruptions.

For the second question, we evaluate our closed-form predictives and moment-matched Dirichlets against softmax models with the Laplace Bridge~\cite{pmlr-v180-hobbhahn22a} and Mean Field~\cite{lu_mean-field_2021} approximations, the linear-time approximation by \citet{shekhovtsov_feed-forward_2018}, and MC sampling; see \cref{sec:changing_the_learning_objective}.

To provide a fair comparison, we reimplement each method as simple-to-use wrappers around deterministic backbones. For ImageNet evaluation, we use a ResNet-50 backbone from the \texttt{timm} library~\citep{rw2019timm} pretrained with the softmax activation function, and train each (method, activation) pair for 50 ImageNet-1k epochs following~\citet{mucsányi2024benchmarking}. For Vision Transformer~\cite{dosovitskiy2020image} experiments, refer to~\cref{app:vit_imagenet}. On CIFAR-10, we train ResNet-28 models from scratch for 100 epochs. The CIFAR-100 experiments train WideResNet-28-5 models for 200 epochs. We search for ideal hyperparameters with a ten-step Bayesian Optimization scheme~\cite{shahriari2015taking} in Weights \& Biases~\cite{wandb}.

The main paper focuses on ImageNet results to highlight the scalability of our framework and only shows proper scoring results for CIFAR-10. For a complete overview of CIFAR-10 and CIFAR-100 results, refer to~\cref{app:cifar_10} and~\cref{app:cifar_100}, respectively.

In all plots of the main paper and the Appendix, the error bars are calculated over five independent runs with different seeds and show the minimum and maximum metric value.

\begin{wraptable}[21]{R}{0.46\columnwidth}
    \scriptsize
    \caption{Comparison of OOD AUROC results on ImageNet using severity level one for different uncertainty estimators on the three best-performing (method, activation) pairs: Sigmoid Laplace, NormCDF HET, and NormCDF Laplace. The best estimator is derived from the moment-matched Dirichlet distributions in all three cases. MC estimates use 1000 samples.}
    \label{tab:imagenet_dirichlet}
    \centering
    \begin{tabular}{lc}
        \toprule
        \textbf{Method}                & \textbf{Mean $\pm$ 2 std} \\
        \midrule
        \multicolumn{2}{c}{\textbf{Sigmoid Laplace}}                  \\
        \midrule
        Dirichlet Mutual Information   & 0.6388 $\pm$ 0.0005       \\
        Closed-form Predictive Entropy & 0.6353 $\pm$ 0.0003       \\
        Monte Carlo Expected Entropy       & 0.6321 $\pm$ 0.0001       \\
        \midrule
        \multicolumn{2}{c}{\textbf{NormCDF HET}}                      \\
        \midrule
        Dirichlet Expected Entropy     & 0.6353 $\pm$ 0.0021       \\
        Monte Carlo Expected Entropy       & 0.6281 $\pm$ 0.0020       \\
        Closed-form Predictive Entropy & 0.6277 $\pm$ 0.0021       \\
        \midrule
        \multicolumn{2}{c}{\textbf{NormCDF Laplace}}                  \\
        \midrule
        Dirichlet Expected Entropy     & 0.6321 $\pm$ 0.0009       \\
        Monte Carlo Predictive Entropy     & 0.6296 $\pm$ 0.0012       \\
        Closed-Form Predictive Entropy & 0.6296 $\pm$ 0.0012       \\
        \bottomrule
    \end{tabular}
\end{wraptable}

\subsection{Quality of Sample-Free Predictives}
\label{sec:quality_of_sample_free_predictives}

In this section, we investigate whether there is a price to pay for sample-free predictives and second-order distributions. To this end, we take the two best-performing methods on the Expected Calibration Error (ECE) metric~\cite{naeini2015obtaining}---NormCDF and Sigmoid Laplace (see~\cref{fig:imagenet_ece})---and evaluate their per-predictive performance. As \cref{tab:imagenet_pred_probs} shows, the closed-form predictive is always on par with the (unbiased) MC estimates on ImageNet while being cheaper. \cref{app:kl_true_predictives} reports the KL divergence of different approximations to the `true' predictive (estimated using 10,000 MC samples) on ImageNet.
\cref{tab:imagenet_dirichlet} shows that the moment-matched second-order Dirichlet distributions consistently outperform both MC and sample-free predictives on the OOD detection task, showcasing the practical utility of second-order distributional knowledge.

\begin{figure}[t]
    \centering
    \begin{subfigure}[t]{0.48\linewidth}
        \centering
        \includegraphics{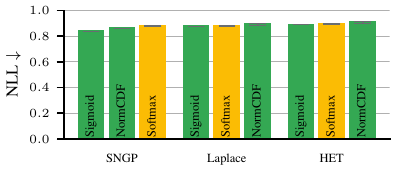}
        \caption{Closed-form predictives (\colorrectangle{GoogleGreen}) are either on par with or outperform Softmax (\colorrectangle{GoogleYellow}) across all methods.}
        \label{fig:imagenet_log_prob}
    \end{subfigure}%
    \hfill
    \begin{subfigure}[t]{0.48\linewidth}
        \centering
        \includegraphics{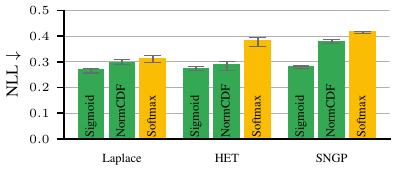}
        \caption{Closed-form predictives (\colorrectangle{GoogleGreen}) consistently outperform Softmax (\colorrectangle{GoogleYellow}) across all methods.}
        \label{fig:cifar10_log_prob}
    \end{subfigure}
    \caption{ImageNet (left) and CIFAR-10 (right) test negative log likelihood results.}
\end{figure}


\subsection{Effects of Changing the Learning Objective}
\label{sec:changing_the_learning_objective}

The previous section shows that for models trained with Sigmoid or NormCDF, our closed-form predictives always perform on par with MC sampling while being more efficient. However, our proposed objective (see \cref{eq:classwise_loss}) changes the training dynamics and, subsequently, the performance of models. In this section, we showcase the performance of methods equipped with our closed-form predictives and learning objectives against Softmax models. For the latter, we use the \emph{best-performing} predictive and estimator (see~\cref{app:list_predictive,app:list_estimators} for an overview of predictive approximations and estimators). We employ our methods with the closed-form predictives and second-order Dirichlet distributions to demonstrate their competitive nature while being more efficient than MC sampling.

We first evaluate how calibrated the models are using the test NLL~\cite{gneiting2007strictly} and Expected Calibration Error (ECE)~\cite{naeini2015obtaining} metrics. \cref{fig:cifar10_log_prob} shows that on CIFAR-10, the NLL score of our closed-form predictives (Sigmoid, NormCDF) is consistently better than the corresponding Softmax results for all methods. On the large-scale ImageNet dataset, Sigmoid and NormCDF predictives either outperform or are on par with Softmax (see~\cref{fig:imagenet_log_prob}). The ECE metric requires the models' confidence (i.e., maximum class probability) to match their accuracy. \cref{fig:imagenet_ece} shows that on ImageNet, our closed-form predictives have a clear advantage over Softmax across all methods.

\begin{figure}[t]
    \centering
    \begin{subfigure}[t]{0.48\linewidth}
        \centering
        \includegraphics{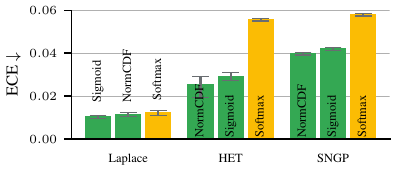}
        \caption{Closed-form predictives (\colorrectangle{GoogleGreen}) are more calibrated than Softmax (\colorrectangle{GoogleYellow}) across all methods. Note the restricted $y$-limits for readability.}
        \label{fig:imagenet_ece}
    \end{subfigure}%
    \hfill
    \begin{subfigure}[t]{0.48\linewidth}
        \centering
        \includegraphics{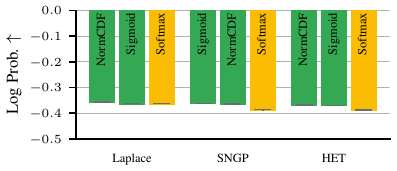}
        \caption{Our closed-form predictives (\colorrectangle{GoogleGreen}) consistently outperform Softmax (\colorrectangle{GoogleYellow}) on predicting their own correctness.}
        \label{fig:imagenet_log_prob_corr_pred}
    \end{subfigure}
    \caption{ImageNet ECE (left) and log probability proper scoring results for the binary correctness prediction task (right).}
\end{figure}

Next, we turn to the correctness prediction task: whether the models can predict the correctness of their own predictions. We consider \emph{correctness estimators} $t(x) := \max_{1\leq c\leq C} \mathbb{E}_{\mathbf{P} \sim \mathrm{h}(x)}\left[P_c\right] \in [0, 1]$ for inputs $x \in \mathcal{X}$ derived from the predictive mean $\mathbb{E}_{\mathbf{P} \sim \mathrm{h}(x)}\left[\mathbf{P}\right]$. Framed as a binary prediction task, the goal of these estimators is to predict the probability of the predicted class’ correctness, defined as $v(x) = [\hat{y}(x) = y] \in \{0, 1\}$ with $\hat{y}(x) := \arg\max_{1\leq c\leq C} \mathbb{E}_{\mathbf{P} \sim \mathrm{h}(x)}\left[P_c\right] \in \{1, \dots, C\}$. We then measure the binary log probability score of $\tilde t(x)$, given by $v(x)\log t(x) + (1 - v(x)) \log (1 - t(x))$. \cref{fig:imagenet_log_prob_corr_pred} shows that our closed-form predictives outperform all Softmax predictives across all methods.

\begin{figure}[t]
    \centering
    \begin{subfigure}[t]{0.48\linewidth}
        \centering
        \includegraphics{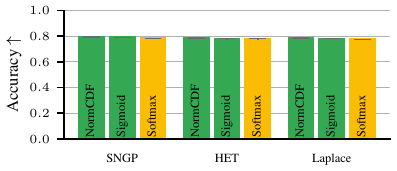}
        \caption{\textbf{Closed-form predictives do not sacrifice accuracy.} Closed-form predictives (\colorrectangle{GoogleGreen}) either outperform or are on par with Softmax (\colorrectangle{GoogleYellow}) across all methods.}
        \label{fig:imagenet_accuracy}
    \end{subfigure}%
    \hfill
    \begin{subfigure}[t]{0.48\linewidth}
        \centering
        \includegraphics{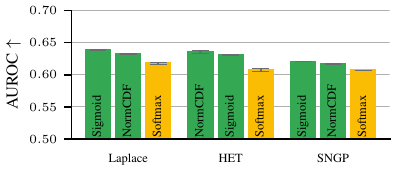}
        \caption{\textbf{Closed-form predictives yield superior OOD detection performance.} Across all methods, the best-performing predictive is closed-form (\colorrectangle{GoogleGreen}).}
        \label{fig:imagenet_ood_detection}
    \end{subfigure}
    \caption{ImageNet validation accuracy (left) and OOD detection AUROC results for ImageNet-C transforms of severity level one (right).}
\end{figure}


\paragraph{Closed-form predictives do not sacrifice accuracy.} \cref{fig:imagenet_accuracy} evidences this claim on ImageNet: our closed-form predictives either outperform or are on par with Softmax predictives. 
These results are in line with the findings of~\citet{wightman_resnet_2021}, who also recommend training with a per-class binary cross-entropy loss using the sigmoid activation function. \cref{app:bce_perf_gains} shows that the BCE loss already yields performance gains in the model backbones (without the logit-space Gaussians).


Finally, we consider another binary prediction task, where a general uncertainty estimator $u(x) \in \mathbb{R}$ (derived from predictives or second-order Dirichlet distributions) is tasked to separate ID and OOD samples from a balanced mixture thereof. As the uncertainty estimator can take on any real value, we measure the Area Under the Receiver Operating Characteristic curve (AUROC), which quantifies the separability of ID and OOD samples w.r.t.~the uncertainty estimator. As OOD inputs, we consider corrupted ImageNet-C~\cite{hendrycks2019benchmarking} samples. \cref{fig:imagenet_ood_detection} shows that our closed-form predictives outperform Softmax across all methods.
Importantly, these closed-form predictives and second-order Dirichlet distributions are considerably cheaper to calculate than Softmax MC predictions (see \cref{sec:computational_gains}). See~\cref{app:ood} for the other severity levels.

\section{Conclusion and Limitations} \label{sec:limitations}
We developed a framework that allows for obtaining predictives and other quantities of interest from logit space Gaussian distributions in closed form. Our experimental results suggest that the ubiquitous softmax activation should be replaced by normCDF or sigmoid for uncertainty quantification tasks.

A limitation is that our approximate predictives and approximate Dirichlet distributions do not encode correlations between classes. While our theoretical analysis suggests strong bounds on the approximations, leveraging more expressive yet tractable second-order distributions on the simplex that can model correlations presents an interesting area of future research.

\section*{Acknowledgments}

The authors gratefully acknowledge co-funding by the European Union (ERC, ANUBIS, 101123955). Views and opinions expressed are however those of the author(s) only and do not necessarily reflect those of the European Union or the European Research Council. Neither the European Union nor the granting authority can be held responsible for them. BM \& PH are supported by the DFG through Project HE 7114/6-1 in SPP2298/2. NDC is supported by the Fonds National de la Recherche, Luxembourg, Project 17917615. PH is a member of the Machine Learning Cluster of Excellence, funded by the Deutsche Forschungsgemeinschaft (DFG, German Research Foundation) under Germany's Excellence Strategy – EXC number 2064/1 – Project number 390727645.
The authors also gratefully acknowledge the German Federal Ministry of Education and Research (BMBF) through the Tübingen AI Center (FKZ: 01IS18039A); and funds from the Ministry of Science, Research and Arts of the State of Baden-Württemberg.

\bibliography{references}

@inproceedings{da_costa_geometric_2026,
	address = {Cham},
	title = {Geometric {Gaussian} {Approximations} of {Probability} {Distributions}},
	isbn = {978-3-032-03924-8},
	doi = {10.1007/978-3-032-03924-8_13},
	abstract = {Approximating complex probability distributions, such as Bayesian posterior distributions, is of central interest in many applications. We study the expressivity of geometric Gaussian approximations. These consist of approximations by Gaussian pushforwards through diffeomorphisms or Riemannian exponential maps. We first review these two different kinds of geometric Gaussian approximations. Then we explore their relationship to one another. We further provide a constructive proof that such geometric Gaussian approximations are universal, in that they can capture any probability distribution. Finally, we discuss whether, given a family of probability distributions, a common diffeomorphism can be found to obtain uniformly high-quality geometric Gaussian approximations for that family.},
	language = {en},
	booktitle = {Geometric {Science} of {Information}},
	publisher = {Springer Nature Switzerland},
	author = {Da Costa, Nathaël and Mucsányi, Bálint and Hennig, Philipp},
	editor = {Nielsen, Frank and Barbaresco, Frédéric},
	year = {2026},
	keywords = {Information geometry, Laplace approximation, Riemannian geometry, Rosenblatt transformation},
	pages = {125--132},
}

@article{atchison_logistic-normal_1980,
  title      = {Logistic-normal distributions: {Some} properties and uses},
  volume     = {67},
  issn       = {0006-3444},
  shorttitle = {Logistic-normal distributions},
  url        = {https://doi.org/10.1093/biomet/67.2.261},
  doi        = {10.1093/biomet/67.2.261},
  abstract   = {The logistic transformation applied to a d -dimensional normal distribution produces a distribution over the d -dimensional simplex which can sensibly be termed a logistic-normal distribution. Such distributions, implicitly used in a number of recent applications, are here given a formal identity and some useful properties are recorded. A main aim is to extend the area of application from the restricted role as a substitute for the Dirichlet conjugate prior class in the analysis of multinomial and contingency table data to the direct statistical description and analysis of compositional and probabilistic data.},
  number     = {2},
  urldate    = {2024-09-16},
  journal    = {Biometrika},
  author     = {Atchison, J. and Shen, S.M.},
  month      = jan,
  year       = {1980},
  pages      = {261--272}
}

@book{ay_information_2017,
  address   = {Cham},
  series    = {Ergebnisse der {Mathematik} und ihrer {Grenzgebiete} 34},
  title     = {Information {Geometry}},
  volume    = {64},
  copyright = {http://www.springer.com/tdm},
  isbn      = {978-3-319-56477-7 978-3-319-56478-4},
  url       = {http://link.springer.com/10.1007/978-3-319-56478-4},
  urldate   = {2024-09-16},
  publisher = {Springer International Publishing},
  author    = {Ay, Nihat and Jost, Jürgen and Lê, Hông Vân and Schwachhöfer, Lorenz},
  year      = {2017},
  doi       = {10.1007/978-3-319-56478-4},
  keywords  = {53B05, 46B20, 94A15, 94A17, 94B27, 60A10, 62B05, 62B10, 62G05, 53B21, Alpha Connections, Amari-Chentsov Tensor, Divergences, Fisher Metric, Information Geometry, data structures}
}

@article{charpentier2020posterior,
  title   = {Posterior network: Uncertainty estimation without ood samples via density-based pseudo-counts},
  author  = {Charpentier, Bertrand and Z{\"u}gner, Daniel and G{\"u}nnemann, Stephan},
  journal = {Advances in neural information processing systems},
  volume  = {33},
  pages   = {1356--1367},
  year    = {2020}
}

@inproceedings{collier2021correlated,
  title     = {Correlated input-dependent label noise in large-scale image classification},
  author    = {Collier, Mark and Mustafa, Basil and Kokiopoulou, Efi and Jenatton, Rodolphe and Berent, Jesse},
  booktitle = {Proceedings of the IEEE/CVF conference on computer vision and pattern recognition},
  pages     = {1551--1560},
  year      = {2021}
}

@incollection{daganzo_chapter_1979,
  series    = {Economic {Theory}, {Econometrics}, and {Mathematical} {Economics}},
  title     = {Chapter 1 - {An} {Introduction} to {Disaggregate} {Demand} {Modeling} in the {Transportation} {Field}},
  isbn      = {978-0-12-201150-4},
  url       = {https://www.sciencedirect.com/science/article/pii/B9780122011504500062},
  booktitle = {Multinomial {Probit}},
  publisher = {Academic Press},
  author    = {Daganzo, Carlos},
  year      = {1979},
  doi       = {10.1016/B978-0-12-201150-4.50006-2},
  pages     = {1--29}
}

@misc{daunizeau_semi-analytical_2017,
  title     = {Semi-analytical approximations to statistical moments of sigmoid and softmax mappings of normal variables},
  url       = {http://arxiv.org/abs/1703.00091},
  doi       = {10.48550/arXiv.1703.00091},
  abstract  = {This note is concerned with accurate and computationally efficient approximations of moments of Gaussian random variables passed through sigmoid or softmax mappings. These approximations are semi-analytical (i.e. they involve the numerical adjustment of parametric forms) and highly accurate (they yield 5\% error at most). We also highlight a few niche applications of these approximations, which arise in the context of, e.g., drift-diffusion models of decision making or non-parametric data clustering approaches. We provide these as examples of efficient alternatives to more tedious derivations that would be needed if one was to approach the underlying mathematical issues in a more formal way. We hope that this technical note will be helpful to modellers facing similar mathematical issues, although maybe stemming from different academic prospects.},
  urldate   = {2025-01-15},
  publisher = {arXiv},
  author    = {Daunizeau, Jean},
  month     = mar,
  year      = {2017},
  note      = {arXiv:1703.00091 [stat]},
  keywords  = {Quantitative Biology - Neurons and Cognition, Statistics - Machine Learning}
}

@article{daxberger2021laplace,
  title   = {Laplace redux-effortless bayesian deep learning},
  author  = {Daxberger, Erik and Kristiadi, Agustinus and Immer, Alexander and Eschenhagen, Runa and Bauer, Matthias and Hennig, Philipp},
  journal = {Advances in Neural Information Processing Systems},
  volume  = {34},
  pages   = {20089--20103},
  year    = {2021}
}

@inproceedings{deng2009imagenet,
  title        = {Imagenet: A large-scale hierarchical image database},
  author       = {Deng, Jia and Dong, Wei and Socher, Richard and Li, Li-Jia and Li, Kai and Fei-Fei, Li},
  booktitle    = {2009 IEEE conference on computer vision and pattern recognition},
  pages        = {248--255},
  year         = {2009},
  organization = {Ieee}
}

@inproceedings{depeweg2018decomposition,
  title        = {Decomposition of uncertainty in Bayesian deep learning for efficient and risk-sensitive learning},
  author       = {Depeweg, Stefan and Hernandez-Lobato, Jose-Miguel and Doshi-Velez, Finale and Udluft, Steffen},
  booktitle    = {International Conference on Machine Learning},
  pages        = {1184--1193},
  year         = {2018},
  organization = {PMLR}
}

@article{dosovitskiy2020image,
  title   = {An image is worth 16x16 words: Transformers for image recognition at scale},
  author  = {Dosovitskiy, Alexey and Beyer, Lucas and Kolesnikov, Alexander and Weissenborn, Dirk and Zhai, Xiaohua and Unterthiner, Thomas and Dehghani, Mostafa and Minderer, Matthias and Heigold, Georg and Gelly, Sylvain and others},
  journal = {arXiv preprint arXiv:2010.11929},
  year    = {2020}
}

@article{gneiting2007strictly,
  title     = {Strictly proper scoring rules, prediction, and estimation},
  author    = {Gneiting, Tilmann and Raftery, Adrian E},
  journal   = {Journal of the American statistical Association},
  volume    = {102},
  number    = {477},
  pages     = {359--378},
  year      = {2007},
  publisher = {Taylor \& Francis}
}

@article{hendrycks2019benchmarking,
  title   = {Benchmarking neural network robustness to common corruptions and perturbations},
  author  = {Hendrycks, Dan and Dietterich, Thomas},
  journal = {arXiv preprint arXiv:1903.12261},
  year    = {2019}
}

@article{hullermeier2021aleatoric,
  title     = {Aleatoric and epistemic uncertainty in machine learning: An introduction to concepts and methods},
  author    = {H{\"u}llermeier, Eyke and Waegeman, Willem},
  journal   = {Machine learning},
  volume    = {110},
  number    = {3},
  pages     = {457--506},
  year      = {2021},
  publisher = {Springer}
}

@inproceedings{kristiadi_being_2020,
  title     = {Being {Bayesian}, {Even} {Just} a {Bit}, {Fixes} {Overconfidence} in {ReLU} {Networks}},
  url       = {https://proceedings.mlr.press/v119/kristiadi20a.html},
  language  = {en},
  booktitle = {Proceedings of the 37th {International} {Conference} on {Machine} {Learning}},
  publisher = {PMLR},
  author    = {Kristiadi, Agustinus and Hein, Matthias and Hennig, Philipp},
  month     = nov,
  year      = {2020},
  issn      = {2640-3498},
  pages     = {5436--5446}
}

@article{krizhevsky2009learning,
  title     = {Learning multiple layers of features from tiny images},
  author    = {Krizhevsky, A. and Hinton, G.},
  journal   = {Master's thesis, Department of Computer Science, University of Toronto},
  year      = {2009},
  publisher = {Citeseer}
}

@article{le_brigant_fisher-rao_2021,
  title    = {Fisher-{Rao} geometry of {Dirichlet} distributions},
  volume   = {74},
  issn     = {0926-2245},
  url      = {https://www.sciencedirect.com/science/article/pii/S092622452030111X},
  doi      = {10.1016/j.difgeo.2020.101702},
  abstract = {In this paper, we study the geometry induced by the Fisher-Rao metric on the parameter space of Dirichlet distributions. We show that this space is a Hadamard manifold, i.e. that it is geodesically complete and has everywhere negative sectional curvature. An important consequence for applications is that the Fréchet mean of a set of Dirichlet distributions is uniquely defined in this geometry.},
  urldate  = {2024-09-16},
  journal  = {Differential Geometry and its Applications},
  author   = {Le Brigant, Alice and Preston, Stephen C. and Puechmorel, Stéphane},
  month    = feb,
  year     = {2021},
  pages    = {101702}
}

@article{liu2020simple,
  title   = {Simple and principled uncertainty estimation with deterministic deep learning via distance awareness},
  author  = {Liu, Jeremiah and Lin, Zi and Padhy, Shreyas and Tran, Dustin and Bedrax Weiss, Tania and Lakshminarayanan, Balaji},
  journal = {Advances in Neural Information Processing Systems},
  volume  = {33},
  pages   = {7498--7512},
  year    = {2020}
}

@misc{lu_mean-field_2021,
  title     = {Mean-{Field} {Approximation} to {Gaussian}-{Softmax} {Integral} with {Application} to {Uncertainty} {Estimation}},
  url       = {http://arxiv.org/abs/2006.07584},
  abstract  = {Many methods have been proposed to quantify the predictive uncertainty associated with the outputs of deep neural networks. Among them, ensemble methods often lead to state-of-the-art results, though they require modifications to the training procedures and are computationally costly for both training and inference. In this paper, we propose a new single-model based approach. The main idea is inspired by the observation that we can "simulate" an ensemble of models by drawing from a Gaussian distribution, with a form similar to those from the asymptotic normality theory, infinitesimal Jackknife, Laplacian approximation to Bayesian neural networks, and trajectories in stochastic gradient descents. However, instead of using each model in the "ensemble" to predict and then aggregating their predictions, we integrate the Gaussian distribution and the softmax outputs of the neural networks. We use a mean-field approximation formula to compute this analytically intractable integral. The proposed approach has several appealing properties: it functions as an ensemble without requiring multiple models, and it enables closed-form approximate inference using only the first and second moments of the Gaussian. Empirically, the proposed approach performs competitively when compared to state-of-the-art methods, including deep ensembles, temperature scaling, dropout and Bayesian NNs, on standard uncertainty estimation tasks. It also outperforms many methods on out-of-distribution detection.},
  urldate   = {2024-11-19},
  publisher = {arXiv},
  author    = {Lu, Zhiyun and Ie, Eugene and Sha, Fei},
  month     = may,
  year      = {2021},
  note      = {arXiv:2006.07584},
  keywords  = {Computer Science - Machine Learning, Statistics - Machine Learning}
}

@article{mackay_choice_1998,
  title    = {Choice of {Basis} for {Laplace} {Approximation}},
  volume   = {33},
  issn     = {1573-0565},
  url      = {https://doi.org/10.1023/A:1007558615313},
  doi      = {10.1023/A:1007558615313},
  abstract = {Maximum a posteriori optimization of parameters and the Laplace approximation for the marginal likelihood are both basis-dependent methods. This note compares two choices of basis for models parameterized by probabilities, showing that it is possible to improve on the traditional choice, the probability simplex, by transforming to the 'softmax' basis.},
  language = {en},
  number   = {1},
  urldate  = {2024-07-22},
  journal  = {Machine Learning},
  author   = {MacKay, David J.C.},
  month    = oct,
  year     = {1998},
  keywords = {Bayes factor, Bayesian inference, graphical models, hidden Markov models, latent variable models, marginal likelihood},
  pages    = {77--86}
}

@article{mackay_evidence_1992,
  title    = {The {Evidence} {Framework} {Applied} to {Classification} {Networks}},
  volume   = {4},
  issn     = {0899-7667},
  url      = {https://ieeexplore.ieee.org/document/6796959},
  doi      = {10.1162/neco.1992.4.5.720},
  abstract = {Three Bayesian ideas are presented for supervised adaptive classifiers. First, it is argued that the output of a classifier should be obtained by marginalizing over the posterior distribution of the parameters; a simple approximation to this integral is proposed and demonstrated. This involves a "moderation" of the most probable classifier's outputs, and yields improved performance. Second, it is demonstrated that the Bayesian framework for model comparison described for regression models in MacKay (1992a,b) can also be applied to classification problems. This framework successfully chooses the magnitude of weight decay terms, and ranks solutions found using different numbers of hidden units. Third, an information-based data selection criterion is derived and demonstrated within this framework.},
  number   = {5},
  urldate  = {2025-01-20},
  journal  = {Neural Computation},
  author   = {MacKay, David J. C.},
  month    = sep,
  year     = {1992},
  note     = {Conference Name: Neural Computation},
  pages    = {720--736}
}

@inproceedings{martens2015optimizing,
  title        = {Optimizing neural networks with kronecker-factored approximate curvature},
  author       = {Martens, James and Grosse, Roger},
  booktitle    = {International conference on machine learning},
  pages        = {2408--2417},
  year         = {2015},
  organization = {PMLR}
}

@misc{minka_2000,
  title  = {Estimating a {Dirichlet} distribution},
  author = {Minka, Thomas P.},
  year   = {2000},
  note   = {Technical Report, MIT},
  url    = {https://tminka.github.io/papers/dirichlet/minka-dirichlet.pdf}
}

@article{mucsanyi2023trustworthy,
  title   = {Trustworthy Machine Learning},
  author  = {Mucs{\'a}nyi, B{\'a}lint and Kirchhof, Michael and Nguyen, Elisa and Rubinstein, Alexander and Oh, Seong Joon},
  journal = {arXiv preprint arXiv:2310.08215},
  year    = {2023}
}

@inproceedings{mucsányi2024benchmarking,
  title     = {Benchmarking Uncertainty Disentanglement: Specialized Uncertainties for Specialized Tasks},
  author    = {B{\'a}lint Mucs{\'a}nyi and Michael Kirchhof and Seong Joon Oh},
  booktitle = {The Thirty-eight Conference on Neural Information Processing Systems Datasets and Benchmarks Track},
  year      = {2024},
  url       = {https://openreview.net/forum?id=x8RgF2xQTj}
}

@inproceedings{mukhoti2023deep,
  title     = {Deep Deterministic Uncertainty: A New Simple Baseline},
  author    = {Mukhoti, Jishnu and Kirsch, Andreas and van Amersfoort, Joost and Torr, Philip HS and Gal, Yarin},
  booktitle = {Proceedings of the IEEE/CVF Conference on Computer Vision and Pattern Recognition},
  pages     = {24384--24394},
  year      = {2023}
}

@inproceedings{naeini2015obtaining,
  title     = {Obtaining well calibrated probabilities using bayesian binning},
  author    = {Naeini, Mahdi Pakdaman and Cooper, Gregory and Hauskrecht, Milos},
  booktitle = {Proceedings of the AAAI conference on artificial intelligence},
  volume    = {29},
  year      = {2015}
}

@article{owen_table_1980,
  title     = {A table of normal integrals},
  copyright = {Copyright Taylor and Francis Group, LLC},
  url       = {https://www.tandfonline.com/doi/abs/10.1080/03610918008812164},
  language  = {EN},
  urldate   = {2024-09-14},
  journal   = {Communications in Statistics - Simulation and Computation},
  author    = {Owen, D. B.},
  month     = jan,
  year      = {1980},
  publisher = {Taylor \& Francis Group}
}

@inproceedings{pmlr-v180-hobbhahn22a,
  title     = {Fast predictive uncertainty for classification with Bayesian deep networks},
  author    = {Hobbhahn, Marius and Kristiadi, Agustinus and Hennig, Philipp},
  booktitle = {Proceedings of the Thirty-Eighth Conference on Uncertainty in Artificial Intelligence},
  pages     = {822--832},
  year      = {2022},
  editor    = {Cussens, James and Zhang, Kun},
  volume    = {180},
  series    = {Proceedings of Machine Learning Research},
  month     = {01--05 Aug},
  publisher = {PMLR},
  pdf       = {https://proceedings.mlr.press/v180/hobbhahn22a/hobbhahn22a.pdf},
  url       = {https://proceedings.mlr.press/v180/hobbhahn22a.html}
}

@inproceedings{pandey2023learn,
  title={Learn to accumulate evidence from all training samples: theory and practice},
  author={Pandey, Deep Shankar and Yu, Qi},
  booktitle={International Conference on Machine Learning},
  pages={26963--26989},
  year={2023},
  organization={PMLR}
}

@misc{jürgens2024epistemicuncertaintyfaithfullyrepresented,
      title={Is Epistemic Uncertainty Faithfully Represented by Evidential Deep Learning Methods?}, 
      author={Mira Jürgens and Nis Meinert and Viktor Bengs and Eyke Hüllermeier and Willem Waegeman},
      year={2024},
      eprint={2402.09056},
      archivePrefix={arXiv},
      primaryClass={cs.AI},
      url={https://arxiv.org/abs/2402.09056}, 
}

@inproceedings{NEURIPS2022_bc1d640f,
 author = {Bengs, Viktor and H\"{u}llermeier, Eyke and Waegeman, Willem},
 booktitle = {Advances in Neural Information Processing Systems},
 editor = {S. Koyejo and S. Mohamed and A. Agarwal and D. Belgrave and K. Cho and A. Oh},
 pages = {29205--29216},
 publisher = {Curran Associates, Inc.},
 title = {Pitfalls of Epistemic Uncertainty Quantification through Loss Minimisation},
 url = {https://proceedings.neurips.cc/paper_files/paper/2022/file/bc1d640f841f752c689aae20b31198c1-Paper-Conference.pdf},
 volume = {35},
 year = {2022}
}

@article{reynolds_hyperbolic_1993,
  title     = {Hyperbolic {Geometry} on a {Hyperboloid}},
  volume    = {100},
  issn      = {0002-9890},
  url       = {https://doi.org/10.1080/00029890.1993.11990430},
  doi       = {10.1080/00029890.1993.11990430},
  number    = {5},
  urldate   = {2024-09-17},
  journal   = {The American Mathematical Monthly},
  author    = {Reynolds, William F.},
  month     = may,
  year      = {1993},
  publisher = {Taylor \& Francis},
  pages     = {442--455}
}

@misc{rw2019timm,
  author       = {Ross Wightman},
  title        = {PyTorch Image Models},
  year         = {2019},
  publisher    = {GitHub},
  journal      = {GitHub repository},
  doi          = {10.5281/zenodo.4414861},
  howpublished = {\url{https://github.com/rwightman/pytorch-image-models}}
}

@article{sensoy2018evidential,
  title   = {Evidential deep learning to quantify classification uncertainty},
  author  = {Sensoy, Murat and Kaplan, Lance and Kandemir, Melih},
  journal = {Advances in neural information processing systems},
  volume  = {31},
  year    = {2018}
}

@article{shahriari2015taking,
  title     = {Taking the human out of the loop: A review of Bayesian optimization},
  author    = {Shahriari, Bobak and Swersky, Kevin and Wang, Ziyu and Adams, Ryan P and De Freitas, Nando},
  journal   = {Proceedings of the IEEE},
  volume    = {104},
  number    = {1},
  pages     = {148--175},
  year      = {2015},
  publisher = {IEEE}
}

@article{spiegelhalter_sequential_1990,
  title    = {Sequential updating of conditional probabilities on directed graphical structures},
  volume   = {20},
  issn     = {1097-0037},
  url      = {https://onlinelibrary.wiley.com/doi/abs/10.1002/net.3230200507},
  doi      = {10.1002/net.3230200507},
  abstract = {A directed acyclic graph or influence diagram is frequently used as a representation for qualitative knowledge in some domains in which expert system techniques have been applied, and conditional probability tables on appropriate sets of variables form the quantitative part of the accumulated experience. It is shown how one can introduce imprecision into such probabilities as a data base of cases accumulates. By exploiting the graphical structure, the updating can be performed locally, either approximately or exactly, and the setup makes it possible to take advantage of a range of well-established statistical techniques. As examples we discuss discrete models, models based on Dirichlet distributions and models of the logistic regression type.},
  language = {en},
  number   = {5},
  urldate  = {2025-01-20},
  journal  = {Networks},
  author   = {Spiegelhalter, David J. and Lauritzen, Steffen L.},
  year     = {1990},
  pages    = {579--605}
}

@inproceedings{tatzel2025debiasing,
  title     = {Debiasing Mini-Batch Quadratics for Applications in Deep Learning},
  author    = {Lukas Tatzel and Bálint Mucsányi and Osane Hackel and Philipp Hennig},
  booktitle = {The Thirteenth International Conference on Learning Representations},
  year      = {2025},
  url       = {https://openreview.net/forum?id=Q0TEVKV2cp}
}

@article{tran2022plex,
  title   = {Plex: Towards reliability using pretrained large model extensions},
  author  = {Tran, Dustin and Liu, Jeremiah and Dusenberry, Michael W and Phan, Du and Collier, Mark and Ren, Jie and Han, Kehang and Wang, Zi and Mariet, Zelda and Hu, Huiyi and others},
  journal = {arXiv preprint arXiv:2207.07411},
  year    = {2022}
}

@misc{wandb,
  title  = {Experiment Tracking with Weights and Biases},
  year   = {2020},
  note   = {Software available from wandb.com},
  url    = {https://www.wandb.com/},
  author = {Biewald, Lukas}
}

@misc{wightman_resnet_2021,
  title      = {{ResNet} strikes back: {An} improved training procedure in timm},
  shorttitle = {{ResNet} strikes back},
  url        = {http://arxiv.org/abs/2110.00476},
  doi        = {10.48550/arXiv.2110.00476},
  abstract   = {The influential Residual Networks designed by He et al. remain the gold-standard architecture in numerous scientific publications. They typically serve as the default architecture in studies, or as baselines when new architectures are proposed. Yet there has been significant progress on best practices for training neural networks since the inception of the ResNet architecture in 2015. Novel optimization \& data-augmentation have increased the effectiveness of the training recipes. In this paper, we re-evaluate the performance of the vanilla ResNet-50 when trained with a procedure that integrates such advances. We share competitive training settings and pre-trained models in the timm open-source library, with the hope that they will serve as better baselines for future work. For instance, with our more demanding training setting, a vanilla ResNet-50 reaches 80.4\% top-1 accuracy at resolution 224x224 on ImageNet-val without extra data or distillation. We also report the performance achieved with popular models with our training procedure.},
  urldate    = {2024-09-17},
  publisher  = {arXiv},
  author     = {Wightman, Ross and Touvron, Hugo and Jégou, Hervé},
  month      = oct,
  year       = {2021},
  note       = {arXiv:2110.00476 [cs]},
  keywords   = {Computer Science - Computer Vision and Pattern Recognition, Computer Science - Machine Learning}
}

@misc{li_bce_2025,
	title = {{BCE} vs. {CE} in {Deep} {Feature} {Learning}},
	url = {http://arxiv.org/abs/2505.05813},
	doi = {10.48550/arXiv.2505.05813},
	abstract = {When training classification models, it expects that the learned features are compact within classes, and can well separate different classes. As the dominant loss function for training classification models, minimizing cross-entropy (CE) loss maximizes the compactness and distinctiveness, i.e., reaching neural collapse (NC). The recent works show that binary CE (BCE) performs also well in multi-class tasks. In this paper, we compare BCE and CE in deep feature learning. For the first time, we prove that BCE can also maximize the intra-class compactness and inter-class distinctiveness when reaching its minimum, i.e., leading to NC. We point out that CE measures the relative values of decision scores in the model training, implicitly enhancing the feature properties by classifying samples one-by-one. In contrast, BCE measures the absolute values of decision scores and adjust the positive/negative decision scores across all samples to uniformly high/low levels. Meanwhile, the classifier biases in BCE present a substantial constraint on the decision scores to explicitly enhance the feature properties in the training. The experimental results are aligned with above analysis, and show that BCE could improve the classification and leads to better compactness and distinctiveness among sample features. The codes will be released.},
	urldate = {2025-05-14},
	publisher = {arXiv},
	author = {Li, Qiufu and Xiao, Huibin and Shen, Linlin},
	month = may,
	year = {2025},
	note = {arXiv:2505.05813 [cs]},
	keywords = {Computer Science - Machine Learning},
}

@inproceedings{wimmer2023quantifying,
  title        = {Quantifying aleatoric and epistemic uncertainty in machine learning: Are conditional entropy and mutual information appropriate measures?},
  author       = {Wimmer, Lisa and Sale, Yusuf and Hofman, Paul and Bischl, Bernd and H{\"u}llermeier, Eyke},
  booktitle    = {Uncertainty in Artificial Intelligence},
  pages        = {2282--2292},
  year         = {2023},
  organization = {PMLR}
}

@article{you2019large,
  title   = {Large batch optimization for deep learning: Training bert in 76 minutes},
  author  = {You, Yang and Li, Jing and Reddi, Sashank and Hseu, Jonathan and Kumar, Sanjiv and Bhojanapalli, Srinadh and Song, Xiaodan and Demmel, James and Keutzer, Kurt and Hsieh, Cho-Jui},
  journal = {arXiv preprint arXiv:1904.00962},
  year    = {2019}
}

@inproceedings{shekhovtsov_feed-forward_2018,
	title = {Feed-forward {Propagation} in {Probabilistic} {Neural} {Networks} with {Categorical} and {Max} {Layers}},
	url = {https://openreview.net/forum?id=SkMuPjRcKQ},
	language = {en},
	urldate = {2025-07-29},
	author = {Shekhovtsov, Alexander and Flach, Boris},
	month = sep,
	year = {2018},
}

@inproceedings{gast_lightweight_2018,
	address = {Salt Lake City, UT},
	title = {Lightweight {Probabilistic} {Deep} {Networks}},
	isbn = {978-1-5386-6420-9},
	url = {https://ieeexplore.ieee.org/document/8578453/},
	doi = {10.1109/CVPR.2018.00355},
	language = {en},
	urldate = {2025-07-29},
	booktitle = {2018 {IEEE}/{CVF} {Conference} on {Computer} {Vision} and {Pattern} {Recognition}},
	publisher = {IEEE},
	author = {Gast, Jochen and Roth, Stefan},
	month = jun,
	year = {2018},
	pages = {3369--3378},
}

@inproceedings{wright_analytic_2024,
	title = {An {Analytic} {Solution} to {Covariance} {Propagation} in {Neural} {Networks}},
	url = {https://proceedings.mlr.press/v238/wright24a.html},
	urldate = {2025-07-29},
	booktitle = {Proceedings of {The} 27th {International} {Conference} on {Artificial} {Intelligence} and {Statistics}},
	publisher = {PMLR},
	author = {Wright, Oren and Nakahira, Yorie and Moura, José M. F.},
	month = apr,
	year = {2024},
	note = {ISSN: 2640-3498},
	pages = {4087--4095},
}

@inproceedings{li_streamlining_2024,
	title = {Streamlining {Prediction} in {Bayesian} {Deep} {Learning}},
	url = {https://openreview.net/forum?id=pW387D5OUN},
	language = {en},
	urldate = {2025-07-29},
	author = {Li, Rui and Klasson, Marcus and Solin, Arno and Trapp, Martin},
	month = oct,
	year = {2024},
}

@inproceedings{achituve_gp-tree_2021,
	title = {{GP}-{Tree}: {A} {Gaussian} {Process} {Classifier} for {Few}-{Shot} {Incremental} {Learning}},
	shorttitle = {{GP}-{Tree}},
	url = {https://proceedings.mlr.press/v139/achituve21a.html},
	language = {en},
	urldate = {2025-07-29},
	booktitle = {Proceedings of the 38th {International} {Conference} on {Machine} {Learning}},
	publisher = {PMLR},
	author = {Achituve, Idan and Navon, Aviv and Yemini, Yochai and Chechik, Gal and Fetaya, Ethan},
	month = jul,
	year = {2021},
	note = {ISSN: 2640-3498},
	pages = {54--65},
}

@inproceedings{linderman_dependent_2015,
	title = {Dependent {Multinomial} {Models} {Made} {Easy}: {Stick}-{Breaking} with the {Polya}-gamma {Augmentation}},
	volume = {28},
	shorttitle = {Dependent {Multinomial} {Models} {Made} {Easy}},
	url = {https://papers.nips.cc/paper_files/paper/2015/hash/07a4e20a7bbeeb7a736682b26b16ebe8-Abstract.html},
	urldate = {2025-07-29},
	booktitle = {Advances in {Neural} {Information} {Processing} {Systems}},
	publisher = {Curran Associates, Inc.},
	author = {Linderman, Scott and Johnson, Matthew J and Adams, Ryan P},
	year = {2015},
}

@inproceedings{galy-fajou_multi-class_2020,
	title = {Multi-{Class} {Gaussian} {Process} {Classification} {Made} {Conjugate}: {Efficient} {Inference} via {Data} {Augmentation}},
	shorttitle = {Multi-{Class} {Gaussian} {Process} {Classification} {Made} {Conjugate}},
	url = {https://proceedings.mlr.press/v115/galy-fajou20a.html},
	language = {en},
	urldate = {2025-07-29},
	booktitle = {Proceedings of {The} 35th {Uncertainty} in {Artificial} {Intelligence} {Conference}},
	publisher = {PMLR},
	author = {Galy-Fajou, Théo and Wenzel, Florian and Donner, Christian and Opper, Manfred},
	month = aug,
	year = {2020},
	note = {ISSN: 2640-3498},
	pages = {755--765},
}

@inproceedings{kapoor_uncertainty_2022,
	address = {Red Hook, NY, USA},
	series = {{NIPS} '22},
	title = {On uncertainty, tempering, and data augmentation in {Bayesian} classification},
	isbn = {978-1-71387-108-8},
	urldate = {2025-07-29},
	booktitle = {Proceedings of the 36th {International} {Conference} on {Neural} {Information} {Processing} {Systems}},
	publisher = {Curran Associates Inc.},
	author = {Kapoor, Sanyam and Maddox, Wesley J. and Izmailov, Pavel and Wilson, Andrew Gordon},
	month = nov,
	year = {2022},
	pages = {18211--18225},
}

@inproceedings{titsias_rc_aueb_one-vs-each_2016,
	title = {One-vs-{Each} {Approximation} to {Softmax} for {Scalable} {Estimation} of {Probabilities}},
	volume = {29},
	url = {https://proceedings.neurips.cc/paper_files/paper/2016/hash/814a9c18f5abff398787c9cfcbf3d80c-Abstract.html},
	urldate = {2025-07-29},
	booktitle = {Advances in {Neural} {Information} {Processing} {Systems}},
	publisher = {Curran Associates, Inc.},
	author = {Titsias RC AUEB, Michalis},
	year = {2016},
}

@article{huijben_review_2023,
	title = {A {Review} of the {Gumbel}-max {Trick} and its {Extensions} for {Discrete} {Stochasticity} in {Machine} {Learning}},
	volume = {45},
	issn = {1939-3539},
	url = {https://ieeexplore.ieee.org/document/9729603},
	doi = {10.1109/TPAMI.2022.3157042},
	number = {2},
	urldate = {2025-07-29},
	journal = {IEEE Transactions on Pattern Analysis and Machine Intelligence},
	author = {Huijben, Iris A. M. and Kool, Wouter and Paulus, Max B. and van Sloun, Ruud J. G.},
	month = feb,
	year = {2023},
	keywords = {Computational modeling, Data models, Deep learning, Gumbel-max trick, Laplace equations, Random variables, Standards, Stochastic processes, categorical distribution, gradient estimation, gumbel-softmax, sampling, structured models},
	pages = {1353--1371},
}

@inproceedings{maddison_concrete_2017,
	title = {The {Concrete} {Distribution}: {A} {Continuous} {Relaxation} of {Discrete} {Random} {Variables}},
	shorttitle = {The {Concrete} {Distribution}},
	url = {https://openreview.net/forum?id=S1jE5L5gl},
	language = {en},
	urldate = {2025-07-29},
	author = {Maddison, Chris J. and Mnih, Andriy and Teh, Yee Whye},
	month = feb,
	year = {2017},
}

@inproceedings{jang_categorical_2017,
	title = {Categorical {Reparameterization} with {Gumbel}-{Softmax}},
	url = {https://openreview.net/forum?id=rkE3y85ee},
	language = {en},
	urldate = {2025-07-29},
	author = {Jang, Eric and Gu, Shixiang and Poole, Ben},
	month = feb,
	year = {2017},
}

@inproceedings{maddison_ast_2014,
	title = {A{\textbackslash}ast {Sampling}},
	volume = {27},
	url = {https://papers.nips.cc/paper_files/paper/2014/hash/937debc749f041eb5700df7211ac795c-Abstract.html},
	urldate = {2025-07-29},
	booktitle = {Advances in {Neural} {Information} {Processing} {Systems}},
	publisher = {Curran Associates, Inc.},
	author = {Maddison, Chris J. and Tarlow, Daniel and Minka, Tom},
	year = {2014},
}

\newpage
\clearpage
\appendix

\startcontents[sections]
\printcontents[sections]{l}{1}{\setcounter{tocdepth}{2}}
\counterwithin{figure}{section}
\counterwithin{table}{section}
\newpage
\section{Gaussian Integral Derivations}
In this appendix, we derive the closed-form formula for the mean of Gaussian pushforwards through exp (\cref{eq:gaussian_exp_integral}) and normCDF (\cref{eq:gaussian_normcdf_integral}), as well as the approximations for pushforwards through sigmoid (\cref{eq:gaussian_sigmoid_integral}) and softmax.
\subsection{Gaussian Exp Integral}\label{app:gaussian_exp_integral}
By absorbing the exponential into the Gaussian probability density function, we get
\begin{equation}
    \begin{aligned}
        \int_{\R}\exp(y)\;\mathcal N(\mu,\sigma^2)(dy) & = \int_{\R}\exp(y)\exp\left(-\frac{(y-\mu)^2}{2\sigma^2}\right)\;dy                                        \\
        & =\int_{\R}\exp\left(\mu+\frac{\sigma^2}2\right)\exp\left(-\frac{1}{2\sigma^2}(y-\mu-\sigma^2)^2\right)\;dy \\
        & =\int_{\R}\exp\left(\mu+\frac{\sigma^2}2\right)\;\mathcal N(\mu+\sigma^2,\sigma^2)(dy)                     \\
        & =\exp\left(\mu+\frac{\sigma^2}2\right).
    \end{aligned}
\end{equation}
\subsection{Gaussian NormCDF Integral}\label{app:gaussian_normcdf_integral}
Here, we derive the classical normCDF Gaussian integration formula \citep[Eq. 10,010.8]{owen_table_1980}.

For $\lambda>0$, $Z\sim\mathcal N(0,1)$ and $Y\sim \mathcal N(\mu,\sigma^2)$ with $Y$ and $Z$ independent,
\begin{equation}\label{eq:normcdf_gaussian_integral}
    \begin{aligned}
        \int_\R \Phi(\lambda y)\;\mathcal N(\mu, \sigma^2)(dy) & = \int_\R p(Z\leq y/\lambda)\;\mathcal N(\mu, \sigma^2)(dy)                                                          \\
        & = p\left(Z \leq \frac{Y}{\lambda}\right)                                                                             \\
        & = p\left(\frac{Z/\lambda - Y+ \mu}{\sqrt{\lambda^{-2}+\sigma^2}}\leq \frac{\mu}{\sqrt{\lambda^{-2}+\sigma^2}}\right) \\
        & = \Phi\left(\frac{\mu}{\sqrt{\lambda^{-2}+\sigma^2}}\right)
    \end{aligned}
\end{equation}
where we used $\frac{Z/\lambda - X+ \mu}{\sqrt{\lambda^{-2}+\sigma^2}} \sim \mathcal N(0,1)$ for the last equality. Taking $\lambda = 1$, this gives the formula for the exact predictive with normCDF.

\subsection{Probit Approximation for Gaussian Sigmoid Integral}\label{app:gaussian_sigmoid_integral}Taylor expanding $\rho$ to first order about $0$,
\begin{equation}
    \begin{aligned}
        \rho(y) & = \frac{1}{2} + \frac{1}{4}y + o(y),            \\
        \Phi(y) & = \frac{1}{2} + \frac{1}{\sqrt{2\pi}}y + o(y).
    \end{aligned}
\end{equation}
as $y\to 0$. Hence matching $\rho$ and $\Phi$ to first order we get the approximation $\rho(y) \approx \Phi\left(\sqrt\frac{\pi}{8}y\right)$. So using \cref{eq:normcdf_gaussian_integral} we derive the \emph{probit approximation} \cite{spiegelhalter_sequential_1990,mackay_evidence_1992}
\begin{equation}\label{eq:probit_approx_derivation}
    \int_\R \rho(y)\;\mathcal N(\mu, \sigma^2)(dy) \stackrel{(1)}\approx \int_\R \Phi\left(\sqrt\frac{\pi}{8} y\right)\;\mathcal N(\mu, \sigma^2)(dy) = \Phi\left(\frac{\mu}{\sqrt{\frac{8}{\pi}+\sigma^2}}\right) \stackrel{(2)}\approx \rho\left(\frac{\mu}{\sqrt{1+\frac{\pi}{8}\sigma^2}}\right).
\end{equation}
Note that the approximation (2) is not strictly needed, as $\Phi$ is computationally tractable. However, adding (2) empirically improves the quality of the overall approximation. This may be due to the fact the thicker tails of $\rho$ in the integrand of the left-hand side are better captured by $\rho$ than $\Phi$ on the right-hand side.
\subsection{Mean Field Approximation for Gaussian Softmax Integral}\label{app:gaussian_softmax_integral}
For $\bm \mu\in \R^C$, $\bm \sigma^2 \in \R^C_{>0}$ and $\bm \Sigma = \diag(\bm \sigma^2)$, the \emph{mean field approximation} to the Gaussian softmax integral \cite{lu_mean-field_2021} is obtained as follows
\begin{equation}\label{eq:mean_field_approx_derivation}
    \begin{aligned}
        \E_{\bm Y\sim \mathcal N(\bm \mu,\bm \Sigma)}[\operatorname{softmax}_c \bm Y] & = \E\left[\left(2-C+\sum_{c'\neq c}\rho(Y_c-Y_{c'})\inv\right)\inv\right]                                                              \\
                                                                                      & \stackrel{(1)}\approx \left(2-C+\sum_{c'\neq c}\E[\rho(Y_c-Y_{c'})]\inv\right)\inv                                                     \\
                                                                                      & \stackrel{(2)}\approx \left(2-C+\sum_{c'\neq c}\E[\rho(Y_c-\mu_{c'})]\inv\right)\inv                                                   \\
                                                                                      & \stackrel{(3)}\approx \left(2-C+\sum_{c'\neq c}\rho\left(\frac{\mu_c-\mu_{c'}}{\sqrt{1+\frac{\pi}{8}\sigma_c^2}}\right)\inv\right)\inv \\
                                                                                      & = \operatorname{softmax}_c\left(\frac{\bm \mu}{\sqrt{1+\frac{\pi}{8}\bm\sigma^2}}\right)
    \end{aligned}
\end{equation}
i.e.,
\begin{equation}
    \E_{\bm Y\sim \mathcal N(\bm \mu,\bm \Sigma)}[\softmax \bm Y] \approx \softmax\left(\frac{\bm \mu}{\sqrt{1+\frac{\pi}{8}\bm\sigma^2}}\right).
\end{equation}
(1) is the mean field approximation, and (3) uses the probit approximation \cref{eq:probit_approx_derivation}. \cite{lu_mean-field_2021} provides two other variants of this approximation with other choices of approximation (2).

\section{Comparison with the Multinomial Probit Model}\label{app:probit_model}
In this appendix, we show that a model whose output activation is a composition of an element-wise normCDF activation $\bm \Phi$ and a normalisation $\bm n$ is distinct from the classical multinomial probit model \cite{daganzo_chapter_1979}.

Given a logit $\bm y\in \R^C$, the multinomial probit model sets
\begin{equation}
    Z(\bm y)= \argmax_{1\leq c \leq C} Y_c
\end{equation}
where $Y_c = y_c +\epsilon_c$ and the $\epsilon_c$ are i.i.d.~standard Normal. So
\begin{equation}\label{eq:multinomial_probit_model}
    p(c\mid y) = p(y_c+\epsilon_c > y_{c'}+\epsilon_{c'}\all c'\neq c)
\end{equation}
which is generally not analytically tractable. On the other hand, a model that uses normCDF and normalisation as output activation yields
\begin{equation}\label{eq:our_model}
    p(c\mid y) = \frac{p(y_c+\epsilon_c>0)}{\sum_{c'=1}^Cp(y_{c'}+\epsilon_{c'}>0)} = \frac{\Phi(y_c)}{\sum_{c'=1}^C\Phi(y_c')}.
\end{equation}
In the case $C=2$, the multinomial probit model \cref{eq:multinomial_probit_model} outputs closed-form probabilities. This allows us to construct an explicit counterexample to the equivalence of the two models \cref{eq:multinomial_probit_model} and \cref{eq:our_model}:
\begin{equation}
    \begin{aligned}
        p(y_1+\epsilon_1> y_2+\epsilon_2) & = p\left(\frac{y_1-y_2}{2} +\frac{\epsilon_1-\epsilon_2}{2}> 0\right)= \Phi\left(\frac{y_1-y_2}{2}\right) \neq \frac{\Phi(y_1)}{\Phi(y_1)+\Phi(y_2)}.
    \end{aligned}
\end{equation}

\section{Theoretical Analyses}\label{app:quality}
In this appendix, we provide (formal and informal) theoretical analyses of the quality of various predictive approximations. This complements the empirical analyses, for instance in the synthetic experiment (\cref{fig:synthetic_exp}) or in \cite{daunizeau_semi-analytical_2017}.

Due to its information-theoretic interpretation, a natural divergence to equip the probability simplex $\Delta^{C-1}$ with is the Kullback-Leibler (KL) divergence
\begin{equation}\label{eq:kl}
    \KL(\bm p, \bm q) = \sum_{c=1}^Cp_c(\log p_c - \log q_c)
\end{equation}
which is well defined if $\bm p$, $\bm q$ lie in the interior of the simplex ($p_i,q_i\neq 0,1$ for all $i$). So, for a predictive approximation, $\bm {\hat p}$, we would like to analyse $\KL(\bm p, \bm {\hat p})$, where $\bm p := \E_{\bm P \sim\bm a_*\mathcal N(\bm \mu, \bm\Sigma)}[\bm P]$ is the true predictive, $\bm \mu$ and $\bm\Sigma$ are some logit space mean and covariance and $\bm a$ is an output activation (e.g.~$\bm a = \bm n\circ \bm\varphi$).
\subsection{Informal Analysis of Monte Carlo Approximations}\label{app:mc_quality}
An $N$ sample Monte Carlo estimate is defined as
\begin{equation}\label{eq:mc_estimate}
    \bm{\hat P}^{S}:= \frac{1}{S}\sum_{s=1}^S \bm{\hat P}^{(s)}
\end{equation}
where the $\bm{\hat P}^{(s)}$ are i.i.d.~$\bm a_*\mathcal N(\bm \mu, \bm\Sigma)$. The computational cost of MC integration is $\mathcal O(S\cdot C)$. This becomes prohibitive for large $S$ and $C$. Thus, for a fair assessment of the quality of such an estimate in terms of the number of classes, one should consider MC estimates $\bm{\hat P}^{\lceil S/C\rceil}$.

We now give an informal theoretical argument for the linear growth of the KL divergence between $\bm p$ and $\bm{\hat P}^{\lceil S/C\rceil}$ in terms of $C$, under the distributional conditions of the synthetic experiment (\cref{fig:synthetic_exp}).

Taylor expanding \cref{eq:kl} about $\bm p$ to second order we obtain
\begin{equation}\label{eq:kl_expansion}
    \begin{aligned}
        \KL(\bm p, \bm q) & = \sum_{c=1}^Cp_c(\log p_c - \log q_c)                                                                           \\
                          & \stackrel{(1)}\approx \sum_{c=1}^Cp_c\left(1-\frac{p_c}{q_c}+ \frac{(p_c-q_c)^2}{2p_c^2}\right)                  \\
                          & = \underbrace{\sum_{c=1}^C p_c}_{=1} -\underbrace{\sum_{c=1}^C q_c}_{=1} + \sum_{c=1}^C \frac{(p_c-q_c)^2}{2p_c} \\
                          & = \sum_{c=1}^C \frac{(p_c-q_c)^2}{2p_c}                                                                          \\
                          & \stackrel{(2)}\approx \frac{C}{2}\|\bm p-\bm q\|_2^2
    \end{aligned}
\end{equation}
where approximation (1) assumes $\|\bm p-\bm q\|_2$ is small, and (2) assumes $p_c\approx 1/C$.

In the synthetic experiment (\cref{fig:synthetic_exp}), the logit class-wise means $\mu_c$ and variances $\sigma_c^2$ are sampled in an i.i.d.~way. Let $\bm Q\sim\mathcal N(\bm\mu, \diag(\bm \sigma^2))$ the unnormalised `probabilities' and $\bm P:= \bm Q/\sum_{c=1}^C Q_c$ the probabilities. We have
\begin{equation}
    \Var [\bm P] = \E\left[\frac{\bm Q^2} {\left(\sum_{c=1}^CQ_c\right)^2}\right]-\E\left[\frac{\bm Q}{\sum_{c=1}^CQ_c}\right]^2 \approx \frac{\E[\bm Q^2]-\E[\bm Q]^2}{\left(\sum_{c=1}^C\E[Q_c]\right)^2} \approx \frac{\Var[\bm Q]}{C^2\E[Q_1]^2}
\end{equation}
where all operations are taken element-wise, and $\E[Q_c] \approx \E[Q_1]$ follows from the fact that the $\mu_c$ and $\sigma_c^2$ are i.i.d. Thus
\begin{equation}
    \E\left[\|\bm p-\bm P\|^2_2\right] = \sum_{c=1}^C \Var[P_c] \approx \sum_{c=1}^C \frac{\Var[Q_c]}{C^2\E[Q_1]^2} \approx \frac{\Var[Q_1]}{C\E[Q_1]^2}.
\end{equation}
Now the MC samples $\bm{\hat P}^{(s)}$ are i.i.d.~copies of $\bm P$. So we have
\begin{equation}
    \E\big[\|\bm p - \bm{\hat P}^{\lceil S/C\rceil}\|^2_2\big] = \frac{1}{\lceil S/C \rceil}\E\left[\|\bm p-\bm P\|^2_2\right] \approx \E\left[\frac{\Var[Q_1]}{S\E[Q_1]^2}\right].
\end{equation}
where the outer expectation is over the (third-order) distributions over $\bm P$. Plugging this into \cref{eq:kl_expansion} we get
\begin{equation}
    \E[\KL(\bm p, \bm{\hat P}^{\lceil S/C\rceil})] \approx \E\left[\frac{\Var[Q_1]}{2\E[Q_1]^2}\right]\cdot\frac{C}{S}
\end{equation}
which grows linearly with the number of classes, as observed in \cref{fig:synthetic_exp}.

\subsection{Analysis of the Closed-Form Approximations (\cref{thm:kl_approximation})}\label{app:closed_form_analysis}
In this section we extend and prove \cref{thm:kl_approximation}, and discuss whether its underlying assumptions are fulfilled in practice. We start with a lemma.
\begin{lemma}\label{lem:expectation_ratio}
    Let $\mathcal Y$ be a family of $(0,M)$-valued random variables such that $\sup_{Y\in \mathcal Y}\E[Y^{-k}] < \infty$ for all $k>0$. Then for $X,Y$ $(0,M)$-valued random variables with $Y\in \mathcal Y$ and any $\epsilon > 0$,
    \begin{equation}
        \left|\E\left[\frac{X}{Y}\right] - \frac{\E[X]}{\E[Y]}\right|
        = O\left(\Var(X)^{\frac{1}{2}-\epsilon}\Var(Y)^{\frac{1}{2}} + \Var(Y)^{1-\epsilon}\right)
    \end{equation}
    as $\Var(Y) \to 0$ or $\Var(X),\Var(Y)\to 0$.
\end{lemma}
\begin{proof}
    Let $X, Y$ be $(0,M)$-valued random variables. We take a bivariate Taylor expansion of $\frac{X}{Y}$ for the random variables $X$ and $Y$ around $\E[X]$ and $\E[Y]$ respectively:
    \begin{equation}
        \begin{aligned}
            \frac{X}{Y} = & \frac{\E[X]}{\E[Y]} + \frac{1}{\E[Y]}(X-\E[X]) - \frac{\E[X]}{\E[Y]^2}(Y-\E[Y]) \\
                          & - \frac{1}{\xi(Y)^2}(X-\E[X])(Y-\E[Y]) + \frac{\eta(X)}{\xi(Y)^3}(Y-\E[Y])^2
        \end{aligned}
    \end{equation}
    where $\eta(X)\in [\E[X], X] \text{ or } [X,\E[X]]$ and $\xi(Y)\in [\E[Y], Y] \text{ or } [Y,\E[Y]]$, using the Lagrange form of the remainder. So by the multivariate Hölder inequality we have for any $0<\epsilon<1/2$,
    \begin{equation}\label{eq:taylor_difference_expectation}
        \begin{aligned}
            \left|\E\left[\frac{X}{Y}\right] - \frac{\E[X]}{\E[Y]}\right| & =\left|-\E\left[\frac{1}{\xi(Y)^2}(X-\E[X])(Y-\E[Y])\right] +\E\left[\frac{\eta(X)}{\xi(Y)^3}(Y-\E[Y])^2\right]\right|                          \\
            & \leq \E\left[\frac{1}{\xi(Y)^{2/\epsilon}}\right]^{\epsilon}\E[(X-\E[X])^{\frac{2}{1-2\epsilon}}]^{\frac{1}{2}-\epsilon}\E[(Y-\E[Y])^{2}]^{\frac{1}{2}} \\
            & \quad+\E\left[\frac{\eta(X)^{1/\epsilon}}{\xi(Y)^{3/\epsilon}}\right]^{\epsilon}\E[(Y-\E[Y])^{\frac{2}{1-\epsilon}}]^{1-\epsilon}.
        \end{aligned}
    \end{equation}
    Note that
    \begin{equation}
        \begin{aligned}
            \E[(X-\E[X])^{\frac{2}{1-2\epsilon}}] &\leq M^{\frac{4\epsilon}{1-2\epsilon}}\E[(X-\E[X])^2] = M^{\frac{4\epsilon}{1-2\epsilon}}\Var(X), \\
            \eta(X)^{1/\epsilon} &\leq M^{1/\epsilon}, \\
            \sup_{Y\in \mathcal Y}\E\left[\frac{1}{\xi(Y)^k}\right] \leq \sup_{Y \in \mathcal Y}\E&\left[\frac{1}{\min(Y,\E[Y]^k)}\right] \leq \sup_{Y \in \mathcal Y}\E\left[\frac{1}{Y^k} + \frac{1}{\E[Y]^k}\right] < 
            \infty \text{ for } k\in\{2/\epsilon,3/\epsilon\}.
        \end{aligned}
    \end{equation}
    So from \cref{eq:taylor_difference_expectation} we obtain
    \begin{equation}
        \left|\E\left[\frac{X}{Y}\right] - \frac{\E[X]}{\E[Y]}\right| = O\left(\Var(X)^{\frac{1}{2}-\epsilon}\Var(Y)^{\frac{1}{2}} + \Var(Y)^{1-\epsilon}\right)
    \end{equation}
    as $\Var(Y) \to 0$ or $\Var(X),\Var(Y)\to 0$.
\end{proof}
Now recall our notation
\begin{equation}
    q = q(\mu,\sigma^2) := \E_{Q \sim \rho_*\mathcal N(\mu,\sigma^2)}[Q] \approx \rho\left(\frac{\mu}{\sqrt{1+\frac{\pi}8 \sigma^2}}\right) =: \hat q(\mu,\sigma^2) = \hat q
\end{equation} 
where $\hat q$ is the probit approximation (\cref{app:gaussian_sigmoid_integral}). Moreover we write $q_c := q(\mu_c, \sigma_c^2)$, $\hat q_c := \hat q (\mu_c,\sigma_c^2)$, 
$\bm p$ for the true predictive, $\bm{\hat p}$ for our approximate predictive, and in addition $\bm {\tilde p}$ for the approximate predictive using the exact one dimensional integrals, i.e.
\begin{equation}
    \bm p := \E_{\bm P\sim \bm a_*\mathcal N(\bm \mu, \bm \Sigma)}[\bm P], \quad \bm {\hat p} := \frac{\hat q_c}{\sum_{c'=1}^C \hat q_{c'}},\quad \bm {\tilde p} := \frac{q_c}{\sum_{c'=1}^C q_{c'}},
\end{equation}
where $\bm \Sigma$ has $\bm \sigma$ as diagonal. We now restate \cref{thm:kl_approximation} with an explicit expression for $M(\mathcal K)$ and an additional decay result for the $O$ term:
\begin{theorem}\label{thm:app_version}
    Suppose the means and variances $(\mu_c,\sigma_c^2)$ lie in some compact set $\mathcal K \subset \R\times [0,\infty)$ for each class $c$. Using the compactness of $\mathcal K$, define
    \begin{itemize}
        \item $\delta(\mathcal K) := \sup_{(\mu, \sigma^2)\in \mathcal K}(\hat q-q)$,
        \item $u(\mathcal K):= \inf_{(\mu, \sigma^2)\in \mathcal K} q>0$,
        \item $\Delta(\mathcal K) := \sup_{(\mu, \sigma^2)\in \mathcal K} \frac{q-\hat q}{q}$.
    \end{itemize}
    If $\Delta(\mathcal K)>1$ then
    \begin{equation}\label{eq:kl_bound}
        \KL(\bm p,\bm {\hat p}) \leq 
            \log\left(\frac{1+\delta/u}{1-\Delta}\right)+O
    \end{equation}
    where
    \begin{equation}\label{eq:o_decay_1}
        O = O\left(\Var\left(\sum_{c=1}^C Q_c\right)\right)  \qquad\text{as } \Var\left(\sum_{c=1}^C Q_c\right) \to 0
    \end{equation}
    and
    \begin{equation}\label{eq:o_decay_2}
        O = O\left(\max_{1\leq c\leq C}\Var(Q_c)^{2-\epsilon}\right) \;\forall\: \epsilon>0 \qquad\text{as } \Var( Q_c) \to 0 \;\forall\: 1\leq c \leq C.
    \end{equation}
\end{theorem}
\begin{proof}
    We have
    \begin{equation}
        \KL(\bm p, \bm {\hat p}) =\sum_{c=1}^C p_c(\log p_c - \log\hat p_c) = \underbrace{\sum_{c=1}^C p_c(\log p_c - \log\tilde p_c)}_{(1)} + \underbrace{\sum_{c=1}^C p_c(\log \tilde p_c - \log \hat p_c)}_{(2)}.
    \end{equation}
    Assuming $\Delta < 1$, we can bound (2):
    \begin{equation}
        \begin{aligned}
            (2) & = \sum_{c=1}^C\frac{q_c}{\sum_{c'=1}^C q_{c'}}\left(\log \left(\frac{q_c}{\sum_{c'=1}^C q_{c'}}\right) -\log \left(\frac{\hat q_c}{\sum_{c'=1}^C \hat q_{c'}}\right)\right)                          \\
                & = \frac{1}{\sum_{c=1}^Cq_{c}}\sum_{c=1}^Cq_c\left(-\log\left(\frac{\hat q_c}{q_c}\right)+\log\left(\frac{\sum_{c'=1}^C \hat q_{c'}}{\sum_{c'=1}^C q_{c'}}\right)\right)                              \\
                & = \frac{1}{\sum_{c=1}^Cq_{c}}\sum_{c=1}^Cq_c\left(-\log\left(1-\frac{q_c-\hat q_c}{q_c}\right)+\log\left(1+\frac{\sum_{c'=1}^C \hat q_{c'}-\sum_{c'=1}^C q_{c'}}{\sum_{c'=1}^C q_{c'}}\right)\right) \\
                & \leq \frac{1}{\sum_{c=1}^Cq_{c}}\sum_{c=1}^Cq_c\left(-\log\left(1-\Delta\right)+\log\left(1+\frac{C\delta}{Cu}\right)\right)                                                                         \\
                & = \frac{1}{\sum_{c=1}^Cq_{c}}\sum_{c=1}^Cq_c\log\left(\frac{1+\delta/u}{1-\Delta}\right)                                                                                                             \\
                & = \log\left(\frac{1+\delta/u}{1-\Delta}\right).
        \end{aligned}
    \end{equation}
    Now for (1), first note that $\E\left[\left(\sum_{c=1}^C Q_c\right)^{-k}\right]<\infty$ where\footnote{In the case $\varphi=\Phi$ and $Q_c \sim \Phi_*\mathcal N(\mu_c,\sigma_c^2)$, due to the fast decay of the tail of $\Phi$ we may have $\E\left[\left(\sum_{c=1}^C Q_c\right)^{-k}\right]=\infty$, and the proof strategy fails in that case.} $Q_c \sim \rho_*\mathcal N(\mu_c,\sigma_c^2)$ and $k>0$. By compactness of $\mathcal K$, we have in fact $\sup_{(\bm\mu, \bm\sigma^2)\in \mathcal K^C}\E\left[\left(\sum_{c=1}^C Q_c\right)^{-k}\right]<\infty$ for any $k>0$. Thus we can apply \cref{lem:expectation_ratio} with $\mathcal Y = \left\{\sum_{c=1}^C Q_c: (\bm\mu,\bm\sigma^2) \in \mathcal K^C\right\}$ to get that for any $\epsilon>0$,
    \begin{equation}
        |p_c-\tilde p_c| = O\left(\Var(Q_c)^{\frac{1}{2}-\epsilon}\Var\left(\sum_{c'=1}^C Q_{c'}\right)^{\frac{1}{2}} + \Var\left(\sum_{c'=1}^C Q_{c'}\right)^{1-\epsilon}\right)
    \end{equation}
    as $\Var\left(\sum_{c'=1}^C Q_{c'}\right)\to 0$ or $\Var(Q_c),\Var\left(\sum_{c'=1}^C Q_{c'}\right)\to 0$. Thus Taylor expanding each term of (1) around $p_c$ we get
    \begin{equation}\label{eq:variance_bound_derivation}
        \begin{aligned}
            (1) & = \sum_{c=1}^C p_c\left(\frac{p_c - \tilde p_c}{p_c} + \frac{(p_c-\tilde p_c)^2}{2\omega_c(\tilde p_c)^2}\right)                      \\
                & = \underbrace{\sum_{c=1}^C p_c}_{=1} - \underbrace{\sum_{c=1}^C \tilde p_c}_{=1} + \sum_{c=1}^Cp_c\frac{(p_c-\tilde p_c)^2}{2\omega_c(\tilde p_c)^2} \\
                & = \sum_{c=1}^C\frac{p_c}{2\omega_c(\tilde p_c)^2}\cdot O\Bigg(\Var(Q_c)^{\frac{1}{2}-\epsilon}\Var\left(\sum_{c'=1}^C Q_{c'}\right)^{\frac{1}{2}} + \Var\left(\sum_{c'=1}^C Q_{c'}\right)^{1-\epsilon}\Bigg)^2                                               \\
                & = \sum_{c=1}^C\frac{p_c}{2\omega_c(\tilde p_c)^2}\cdot O\Bigg(\Var(Q_c)^{1-2\epsilon}\Var\left(\sum_{c'=1}^C Q_{c'}\right) \\
                &\quad + \Var(Q_c)^{\frac{1}{2}-\epsilon}\Var\left(\sum_{c'=1}^C Q_{c'}\right)^{\frac{3}{2}-\epsilon} + \Var\left(\sum_{c'=1}^C Q_{c'}\right)^{2-2\epsilon}\Bigg) \\
                & = O\Bigg(\sum_{c=1}^C\Var(Q_c)^{1-2\epsilon}\Var\left(\sum_{c'=1}^C Q_{c'}\right) \\
                &\quad + \sum_{c=1}^C\Var(Q_c)^{\frac{1}{2}-\epsilon}\Var\left(\sum_{c'=1}^C Q_{c'}\right)^{\frac{3}{2}-\epsilon} + \Var\left(\sum_{c'=1}^C Q_{c'}\right)^{2-2\epsilon}\Bigg) \\
        \end{aligned}
    \end{equation}
    where $\omega_c(\tilde p_c)\in [p_c,\tilde p_c] \text{ or }[\tilde p_c,p_c]$, where we used that $p_c$, $\tilde p_c$, and hence $\omega_c(\tilde p_c)$ is bounded away from $0$ by compactness of $\mathcal K$. We now distinguish two cases. If we only assume $\Var\left(\sum_{c=1}^C Q_c\right)\to 0$, then we see from \cref{eq:variance_bound_derivation} that
    \begin{equation}
        (1) = O\Bigg(\Var\left(\sum_{c=1}^C Q_c\right)\Bigg).
    \end{equation}
    If we make the stronger assumption $\Var(Q_c)\to 0$ for all $c$, then using the inequality
    \begin{equation}\label{eq:variance-inequality}
        \Var\left(\sum_{c=1}^C Q_c\right) = \sum_{1\leq c_1,c_2\leq C}\operatorname{Cov}(Q_{c_1},Q_{c_2}) \leq \sum_{1\leq c_1,c_2\leq C}\Var(Q_{c_1})^{1/2}\Var(Q_{c_2})^{1/2}
    \end{equation}
    we see that $\Var\left(\sum_{c=1}^C Q_c\right) = O\left(\max_{1\leq c\leq C}\Var(Q_c)\right)$ and hence, from \cref{eq:variance_bound_derivation},
    \begin{equation}
        (1) = O\left(\max_{1\leq c\leq C}\Var(Q_c)^{2-2\epsilon}\right)
    \end{equation}
    and the result follows since $\epsilon$ is arbitrary.
\end{proof}

\begin{figure}[H]
\centering
\begin{minipage}{0.47\textwidth}
    \includegraphics{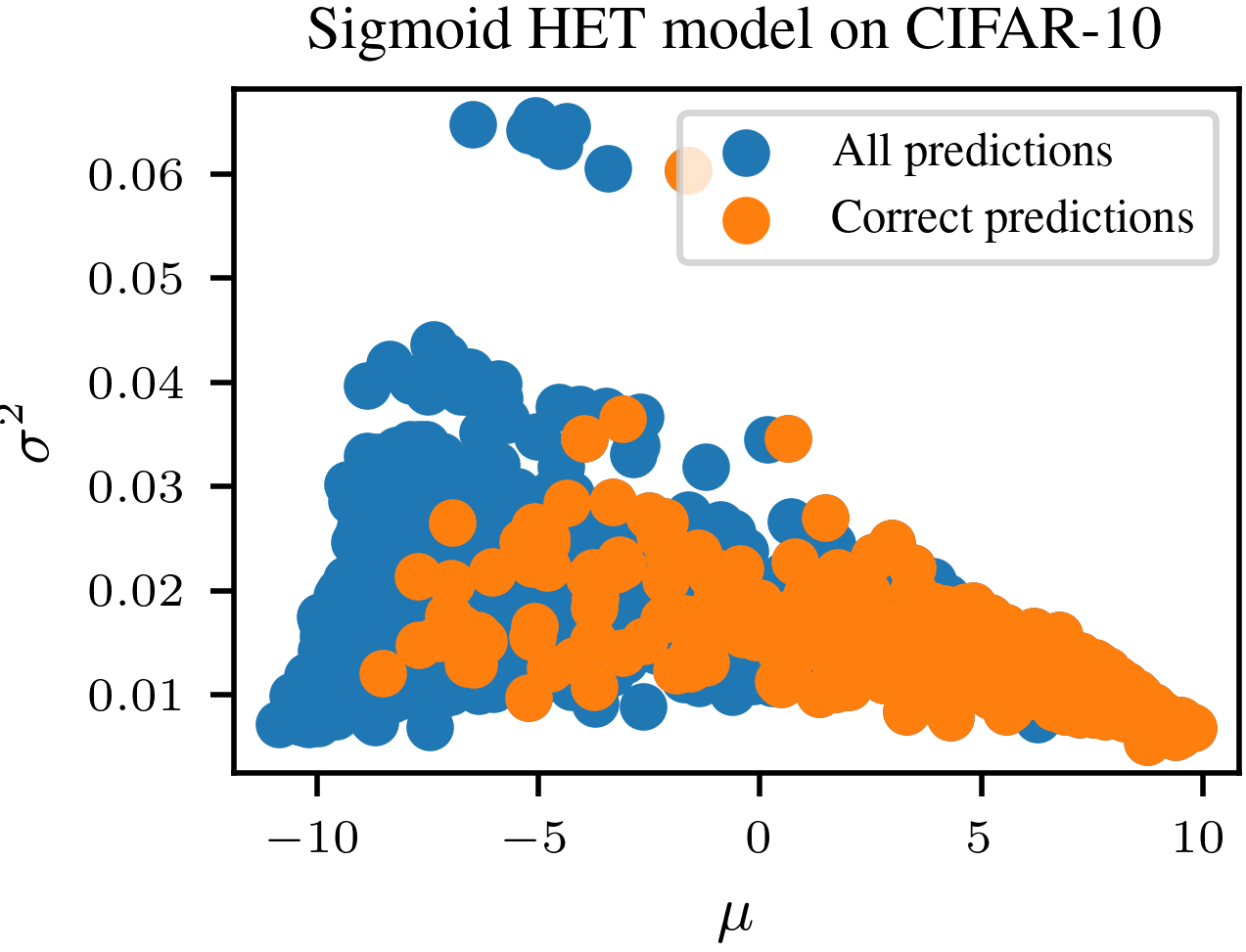}
\end{minipage}
\hfill
\begin{minipage}{0.47\textwidth}
    \includegraphics{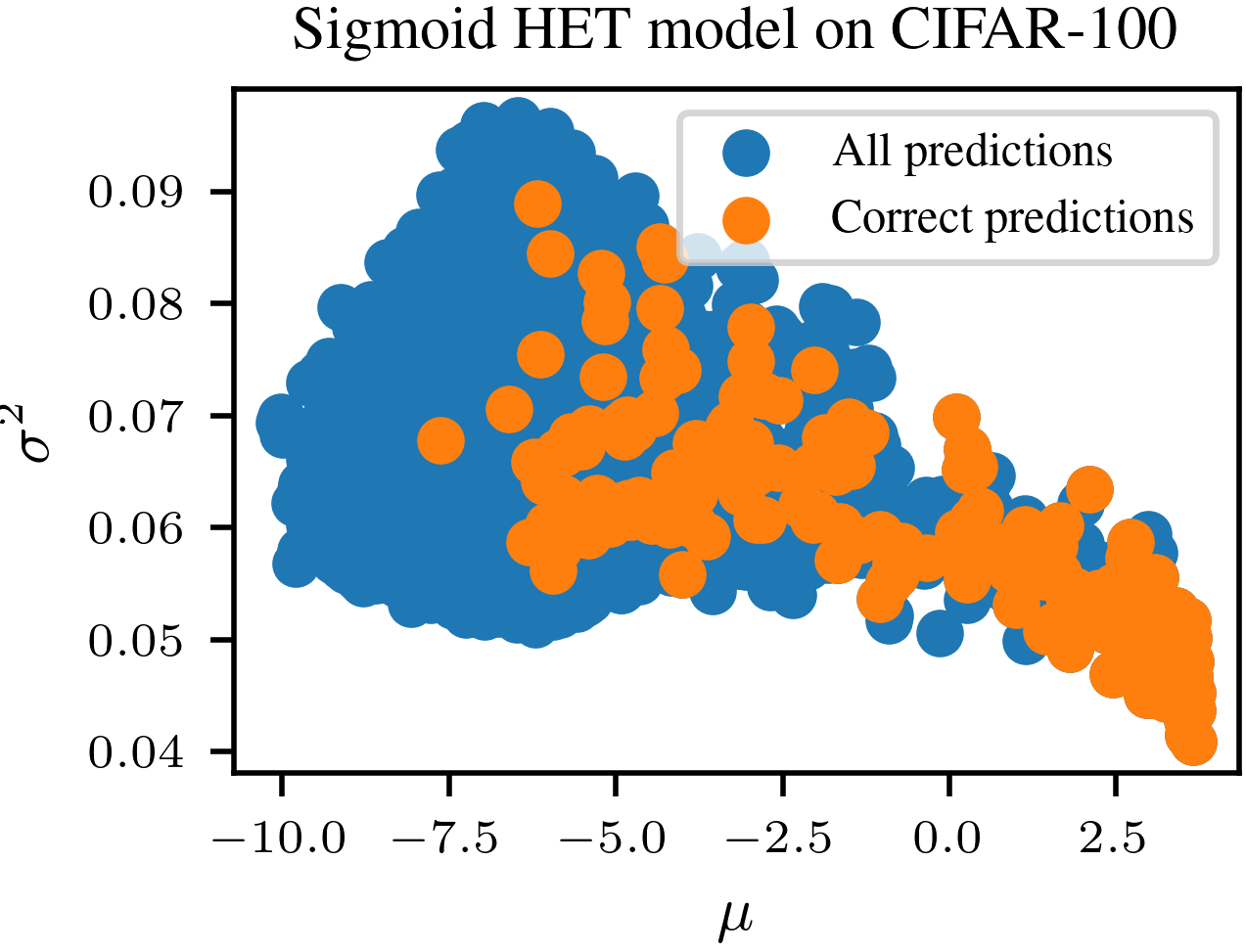}
\end{minipage}
\begin{minipage}{0.47\textwidth}
    \includegraphics{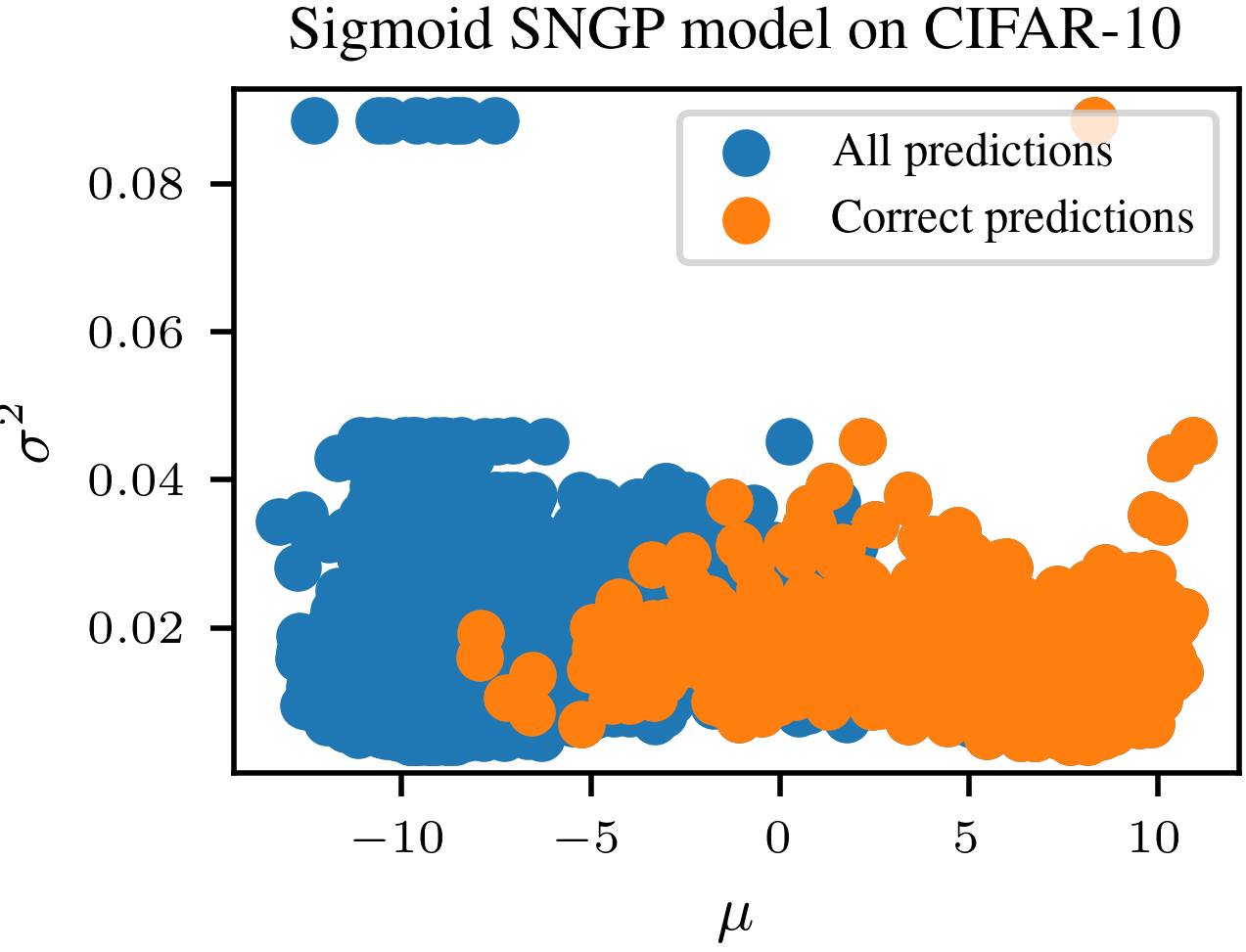}
\end{minipage}
\hfill
\begin{minipage}{0.47\textwidth}
    \includegraphics{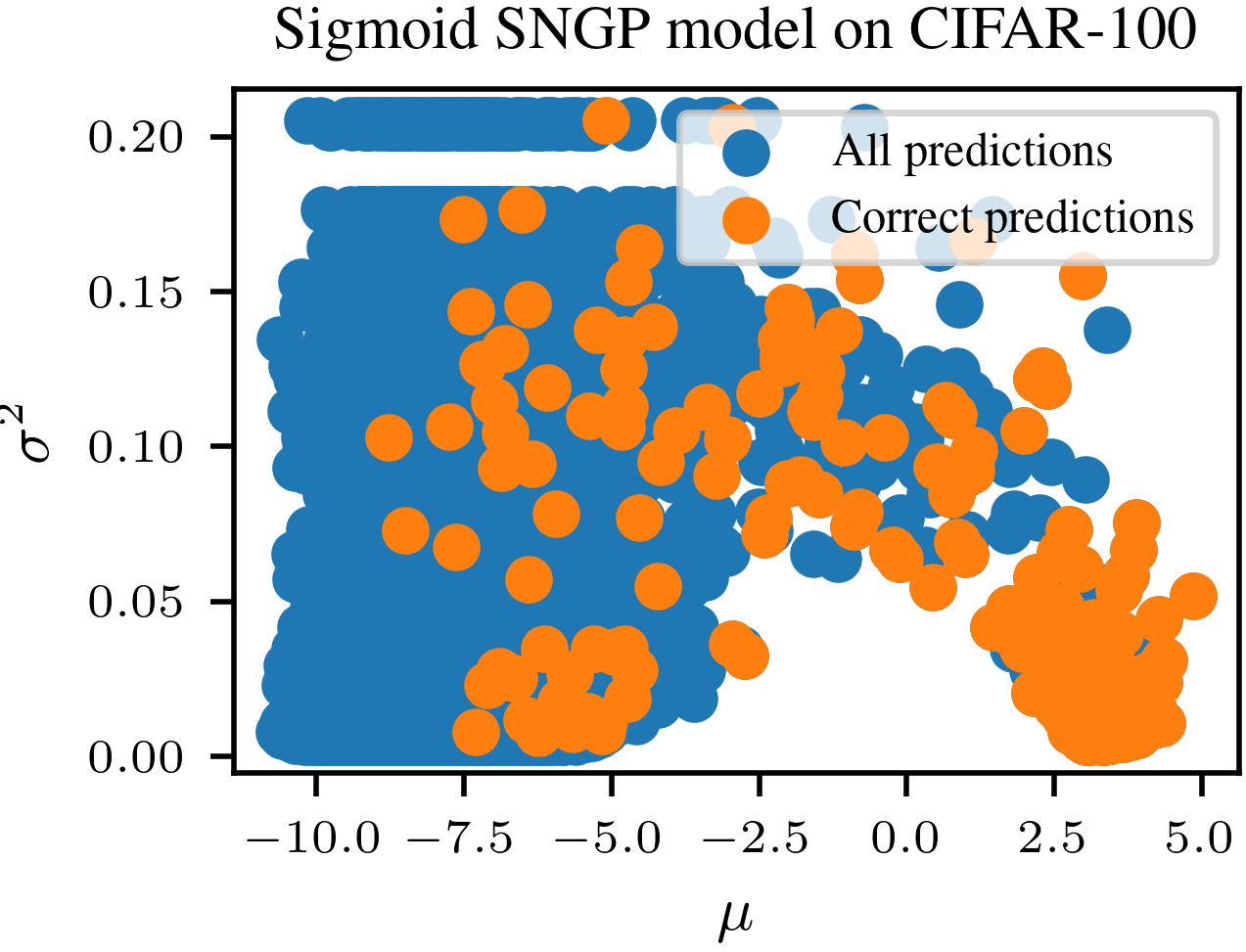}
\end{minipage}
\caption{Scatter plots of logit mean and variance pairs $(\mu_c,\sigma_c^2)$ on real models. We see that these seem to remain constrained to a compact set, with no much difference in the support as we increase the number of classes by an order of magnitude (from CIFAR-10 to CIFAR-100).}
\label{fig:logit_statistics}
\end{figure}

\Cref{eq:variance-inequality} shows that requiring $\Var\left(\sum_{c=1}^C Q_c\right) \to 0$ is weaker than requiring $\Var(Q_c)\to 0$ for all $c$; but we see in \cref{thm:app_version} that assuming the latter gives almost twice as strong a decay rate in $\KL$. This means that neither \cref{eq:o_decay_1} nor \cref{eq:o_decay_2} can be proven from the other; both results are thus of interest.

A key assumption in \cref{thm:app_version} is the logit means and variances being restricted to a compact set. In \cref{fig:logit_statistics} we see empirically that this is approximately the case for HET and SNGP models, independently of the number of classes.

\Cref{thm:app_version} is of practical value as $M(\mathcal K) :=  \log\left(\frac{1+\delta/u}{1-\Delta}\right)$ is well-behaved:
\begin{enumerate}
    \item $M(\mathcal K)$ is independent of $C$,
    \item $M(\mathcal K) \to 0$ as $\delta \to 0$.
\end{enumerate}
In other words, given knowledge of the worst case error in the approximation \cref{eq:gaussian_sigmoid_integral} on the compact set $\mathcal K$, we can bound the KL divergence in terms of that error independently of the number of classes. Due to the simplicity of our assumptions, the bound remains quite raw and could be strengthened with further distributional assumptions on the means and variances.

Finally, to obtain a meaningful bound in \cref{thm:app_version}, we needed to assume $\Delta(\mathcal K) := \sup_{(\mu, \sigma^2)\in \mathcal K} \frac{q-\hat q}{q} <1$. In \cref{fig:probit_approx_quality} we see that this can be assumed to hold on the compact sets on which logit means and variances tend to live (see also \cref{fig:logit_statistics}).

\begin{figure}[H]
    \centering
    \includegraphics[width=\textwidth]{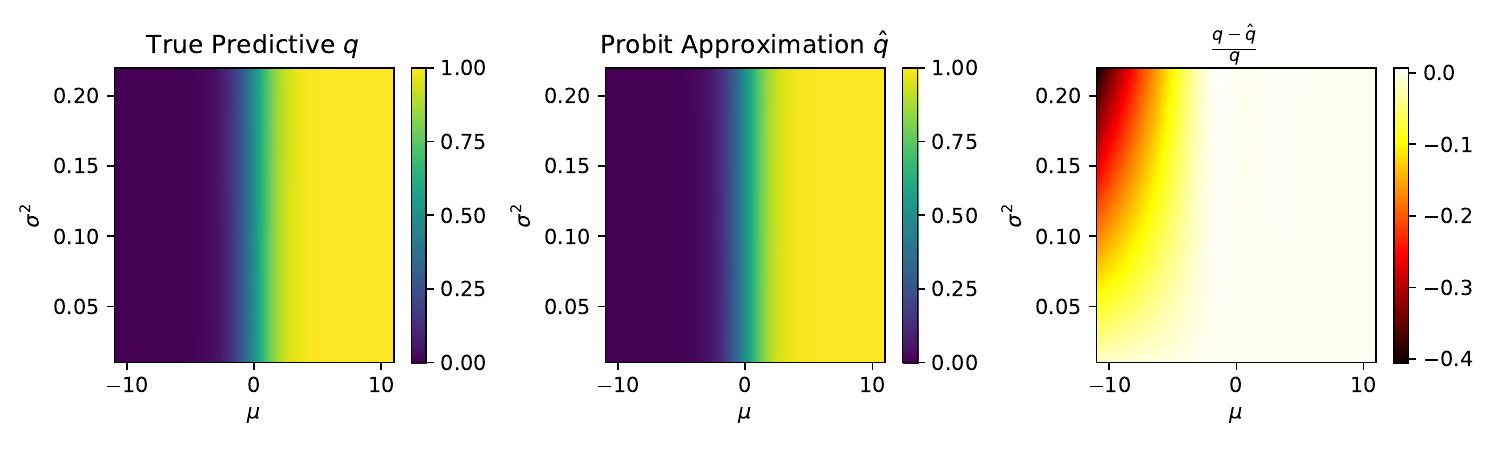}
    \caption{Plot of the `true' predictive $q$ (approximated with a $50000$ sample MC approximation) versus the probit approximation $\hat q$, as well as $\frac{q-\hat q}{q}$. We see that $\frac{q-\hat q}{q}$ tends to be negative, making the assumption $\sup_{(\mu, \sigma^2)\in \mathcal K} \frac{q-\hat q}{q} < 1$ from \cref{thm:app_version} easily fulfilled.}
    \label{fig:probit_approx_quality}
\end{figure}

Note that, in the proof of \cref{thm:app_version}, we did not use the form of the probit approximation explicitly, and the theorem holds for any approximation scheme $\hat q$ to the one dimensional integrals $q$.

\section{Moment Matching Beta Distributions}\label{app:beta_matching}
As noted in \cref{sec:beta_matching}, when $\varphi = \Phi\text{ or } \rho$, we can construct a mapping
\begin{equation}
    \mathrm p\colon \mathcal G(\R^C) \to \mathcal B((0,1))^C
\end{equation}
by moment matching. Specifically, the parameters $\bm\alpha,\bm\beta\in (0,\infty)^C$ that match the moments of $\bm Q \sim \bm\varphi_* \mathrm f$ for some $\mathrm f\in \mathcal G(\R^C)$ must satisfy
\begin{equation}\label{eq:beta_moments}
    \begin{aligned}
        \E[\bm Q]   & = \frac{\bm\alpha}{\bm\alpha+\bm\beta},                                      \\
        \E[\bm Q^2] & = \frac{\bm\alpha(\bm\alpha+1)}{(\bm\alpha+\bm\beta)(\bm\alpha+\bm\beta+1)},
    \end{aligned}
\end{equation}
where all vector operations are element-wise. Multiplying out the denominators on the right-hand side of the equations of \cref{eq:beta_moments}, we obtain a system of two linear equations with two unknowns (for each $c$), which can be solved uniquely, yielding
\begin{equation}\label{eq:beta_parameters}
    \begin{aligned}
        \bm\alpha & := \frac{\E[\bm Q] - \E[\bm Q^2]}{\E[\bm Q^2] - \E[\bm Q]^2}\E[\bm Q],                \\
        \bm\beta  & := \frac{\E[\bm Q] - \E[\bm Q^2]}{\E[\bm Q^2] - \E[\bm Q]^2}\left(1-\E[\bm Q]\right).
    \end{aligned}
\end{equation}
which give us the parameters of the Beta distributions $\mathrm p(\mathrm f)$.
\subsection{Information Geometric Interpretation of the Pushforward through NormCDF}\label{app:information_geometry}
\begin{figure}[H]
    \centering
    \includegraphics[trim={0cm 1.2cm 0cm 0.6cm},clip]{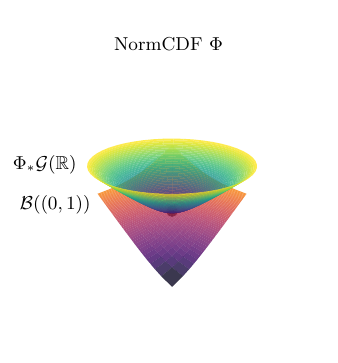}
    \includegraphics[trim={0cm 1.2cm 0cm 0.6cm},clip]{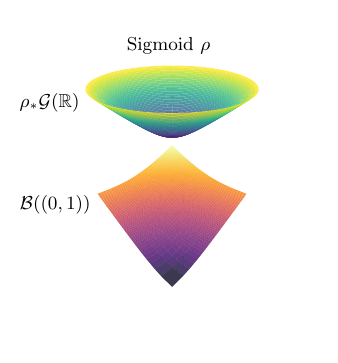}
    \caption{Illustration of the statistical manifolds of the pushforward Gaussian distributions $\mathcal G(\R)$ through the normCDF and the sigmoid respectively compared to the statistical manifold of Beta distributions $\mathcal B((0,1))$. NormCDF, unlike the sigmoid, makes the manifolds intersect at the point $\Phi_*\mathcal N(0,1) = B(1,1)$.}
    \label{fig:information_geometry}
\end{figure}
Here, we argue that the normCDF activation is an ideal choice for approximating Gaussian pushforwards with Beta distributions by interpreting \cref{fig:information_geometry}. This extends the work from \cite{mackay_choice_1998}, as it shows how one can make sense of the `right' basis for performing Laplace approximations in the classification setting. Also see the discussion in \cite{da_costa_geometric_2026}.

$\Phi_*\mathcal G(\R)$, $\rho_*\mathcal G(\R)$ the space of pushforwards of Gaussian distributions by normCDF and sigmoid respectively, and $\mathcal B((0,1))$, the space of Beta distributions, are statistical manifolds naturally equipped with Riemannian metrics, that is their respective Fisher information metrics. We would like to visualise these manifolds. However, two difficulties arise.

The first difficulty is that, while these manifolds all lie in the infinite-dimensional vector space of signed measures on the open unit interval $\mathcal M((0,1))$, there is \emph{no} subspace $\mathbb V\subset \mathcal M((0,1))$ which is 3-dimensional ($\mathbb V \cong \R^3$) and contains any two of these statistical manifolds ($\Phi_*\mathcal G(\R), \mathcal B((0,1)) \subset \mathbb V$ or $\rho_*\mathcal G(\R), \mathcal B((0,1)) \subset \mathbb V$). We will work around this by building distinct isometric embeddings $\Phi_*\mathcal G(\R) \hookrightarrow \mathbb V$, $\rho_*\mathcal G(\R) \hookrightarrow \mathbb V$ and $\mathcal B((0,1))\hookrightarrow \mathbb V$ for some 3-dimensional vector space $\mathbb V$. This means that while the shape of the manifold illustrations is meaningful, the positioning of a manifold with respect to another is not, apart from some design choices that we describe below.

The second difficulty is that some--if not all--of these manifolds cannot be embedded isometrically into Euclidean space. As a workaround, we instead embed them into the 3-dimensional Minkowski space $\R^{2,1}$, that is $\R^3$ equipped with the pseudo-Riemannian metric $dx_1^2+dx_2^2-dx_3^2$.

The key observation is that $\Phi_*\mathcal G(\R)$ and $\rho_*\mathcal G(\R)$ are isometric to $\mathcal G(\R)$. This is because $\Phi$ and $\rho$ are diffeomorphisms, so in particular sufficient statistics, and the Fisher information metric is invariant under sufficient statistics \citep[Section 5.1.4]{ay_information_2017}. Visually, this means that $\varphi_*\mathcal G(\R)$ has the same shape irrespectively of the diffeomorphism activation function $\varphi$. One can thus observe that, given that $\mathcal B((0,1))$ is not isometric to $\mathcal G(\R)$, there exists no activation $\varphi$ such that $\varphi_*\mathcal G(\R) = \mathcal B((0,1))$. To design an activation $\varphi$ that maps Gaussians to Betas, the best one can hope to do is to map one specific Gaussian distribution $\mathcal N(\mu,\sigma^2)$ to a specific Beta distribution $B(\alpha, \beta)$. This is done with the map $F_{\alpha,\beta}\inv\circ\Phi_{\mu,\sigma^2}$ where $\Phi_{\mu,\sigma^2}$ and $F_{\alpha,\beta}$ are the cumulative distribution functions of $\mathcal N(\mu,\sigma^2)$ and $B(\alpha, \beta)$ respectively. Taking $\mu=0$, $\sigma^2=1$, $\alpha = \beta = 1$ we get $\Phi_{0,1} = \Phi$ and $F_{1,1} = \operatorname{id}_{(0,1)}$, yielding $F_{\alpha,\beta}\inv\circ\Phi_{\mu,\sigma^2} = \Phi$.

Now $\mathcal G(\R)$, and hence $\Phi_*\mathcal G(\R)$ and $\rho_*\mathcal G(\R)$, is isometric to the hyperbolic plane \citet[Example 3.1]{ay_information_2017}. We can embed this isometrically into Minkowski space with the classical hyperboloid model of the hyperbolic plane \cite{reynolds_hyperbolic_1993},
\begin{equation}\label{eq:gaussian_embedding}
    \begin{aligned}
        \mathcal G(\R) \hookrightarrow \R^{2,1}.
    \end{aligned}
\end{equation}
For $\mathcal B((0,1))$, we use the isometric embedding from \citet[Proposition 2]{le_brigant_fisher-rao_2021}:
\begin{equation}\label{eq:beta_embedding}
    \begin{aligned}
        \mathcal B((0,1)) & \hookrightarrow \R^{2,1}                               \\
        (\alpha, \beta)   & \mapsto (\eta(\alpha), \eta(\beta),\eta(\alpha+\beta))
    \end{aligned}
\end{equation}
where $\eta(a):=\int_1^a \sqrt{\psi'(r)}\;dr$ and $\psi$ is the digamma function.

Finally, we choose our embedding \cref{eq:gaussian_embedding} for $\Phi_*\mathcal G(\R)$ such that it intersects the embedding \cref{eq:beta_embedding} at a point, to highlight that the statistical manifolds $\Phi_*\mathcal G(\R)$ and $\mathcal B((0,1))$ intersect at a point in the infinite-dimensional ambient space $\mathcal M((0,1))$, while $\rho_*\mathcal G(\R)$ and $\mathcal B((0,1))$ do not.

Moreover, since $\Phi_*\mathcal N(0,1) = B(1,1)$, our moment matching approximation \cref{eq:beta_moments} is exact when $\mathrm f(x)$ is the standard Normal distribution.

\section{List of Predictive and Dirichlet Parameters Formulas}\label{app:list_predictive}
\subsection{Predictive Formulas}
We gather formulas for all predictive estimators $\bm{\hat p}$ of the true predictive $\E_{\bm P\sim \bm a_* \mathcal N(\bm\mu,\bm\Sigma)}[\bm P]$, $\bm\Sigma = \bm\diag(\bm\sigma^2)$, used in our experiments (\cref{sec:experiments}).

\subsubsection{Softmax}

\paragraph{Monte Carlo} One can Monte Carlo estimate the true predictive as follows:
\begin{equation}
    \begin{aligned}
        \bm{\hat p}:= \frac{1}{S}\sum_{s=1}^S \bm{\hat p}^{(s)}
    \end{aligned}
\end{equation}
where the $\bm{\hat p}^{(s)}$ are sampled i.i.d.~from $\mathcal N(\bm \mu, \bm\Sigma)$.
\paragraph{Mean Field (\cref{app:gaussian_softmax_integral})} The Mean Field predictive~\cite{lu_mean-field_2021} uses the following approximation for the true predictive:
\begin{equation}
    \bm{\hat p} := \bm\softmax\left(\frac{\bm \mu}{\sqrt{1+\frac{\pi}{8}\bm\sigma^2}}\right).
\end{equation}

\paragraph{Laplace Bridge} The Laplace Bridge predictive~\cite{pmlr-v180-hobbhahn22a} approximates the true predictive as follows:
\begin{equation}
    \bm{\hat p} := \frac{\frac{1}{\bm {\tilde\sigma}^2}\left(1-\frac{2}{C}+\frac{e^{\bm{\tilde\mu}}}{C^2}\sum_{c=1}^Ce^{-\tilde\mu_c}\right)}{\sum_{c=1}^C\frac{1}{\tilde\sigma^2_c}\left(1-\frac{2}{C}+\frac{e^{\bm{\tilde\mu}}}{C^2}\sum_{c'=1}^Ce^{-\tilde\mu_{c'}}\right)}
\end{equation}
where
\begin{equation}
    \bm{\tilde\mu}^2 := \sqrt{\frac{\sqrt{C/2}} {\sum_{c=1}^C\sigma^2_c}}\bm\mu,\; \bm{\tilde\sigma^2} := \frac{\sqrt{C/2}} {\sum_{c=1}^C\sigma^2_c}\bm\sigma^2.
\end{equation}

\paragraph{Shekhovtsov \& Flach.} The \citet{shekhovtsov_feed-forward_2018} approximation of the Gaussian-softmax integral yields:

\begin{equation}
\bm{\hat p} = \frac{\bm n}{\bm n + \big(A - \bm b\big)^{s/\bm s}}
\end{equation}
where
\begin{equation}
A := \sum_{c=1}^C \exp(\tilde{\mu}_c / s),\;
\bm n := \exp(\bm{\tilde{\mu}} / \bm s),\;
\bm b := \exp(\bm{\tilde{\mu}} / s),
\end{equation}
$\bm{\tilde\mu} := \bm\mu - \max_c \mu_c$ for numerical stability,
\begin{equation}
    \bm\tau^2 := \bm\sigma^2 + G,\; 
\tau_\text{min}^2 := \min_{c} \tau_c^2,\;
s := \sqrt{\tfrac{2\,\tau_\text{min}^2}{L}},\;
\bm s := \sqrt{\tfrac{\bm \tau^2 + \tau_\text{min}^2}{L}}
\end{equation}
and $L := \pi^2/3$, $G := L/2$.

\subsubsection{Exp}
Our closed-form predictive for the exp activation function~(\cref{sec:motivation}) is given by
\begin{equation}
    \bm{\hat p} := \frac{\bm \exp\left(\bm\mu+\frac{\bm\sigma^2}{2}\right)}{ \sum_{c=1}^C\exp\left(\mu_c+\frac{\sigma^2_c}{2}\right)}.
\end{equation}

\subsubsection{NormCDF}
The closed-form predictive for the normCDF activation function~(\cref{sec:general_framework}) is computed as
\begin{equation}
    \bm{\hat p} := \frac{\bm\Phi\left(\frac{\bm \mu}{\sqrt{1+\bm \sigma^2}}\right)}{\sum_{c=1}^C \Phi\left(\frac{\mu_c}{\sqrt{1+\sigma^2_c}}\right)}.
\end{equation}

\subsubsection{Sigmoid}
For the sigmoid activation function~(\cref{sec:general_framework}), the closed-form predictive can be computed as
\begin{equation}
    \bm{\hat p} := \frac{\bm\rho\left(\frac{\bm \mu}{\sqrt{1+\frac{\pi}{8}\bm \sigma^2}}\right)} {\sum_{c=1}^C \rho\left(\frac{\mu_c}{\sqrt{1+\frac{\pi}{8}\sigma^2_c}}\right)}.
\end{equation}

\subsection{Dirichlet Parameters Formulas}
We now gather the formulas of the parameters $\bm\gamma$ for the Dirichlet approximations to the Gaussian pushforwards.

\subsubsection{Softmax}

The Laplace Bridge method~\cite{pmlr-v180-hobbhahn22a} uses the following Dirichlet parameters:
\begin{equation}
    \bm{\gamma} := \frac{1}{\bm {\tilde\sigma}^2}\left(1-\frac{2}{C}+\frac{e^{\bm{\tilde\mu}}}{C^2}\sum_{c=1}^Ce^{-\tilde\mu_c}\right)
\end{equation}
where
\begin{equation}
    \bm{\tilde\mu}^2 := \sqrt{\frac{\sqrt{C/2}} {\sum_{c=1}^C\sigma^2_c}}\bm\mu,\; \bm{\tilde\sigma^2} := \frac{\sqrt{C/2}} {\sum_{c=1}^C\sigma^2_c}\bm\sigma^2.
\end{equation}

\subsubsection{Exp}
Exp uses the closed-form parameters derived from Moment Matching the Gaussian pushforwards (\cref{sec:dirichlet_matching}):
\begin{equation}
    \bm\gamma := \left(\prod_{c=1}^C\frac{a_c\cdot \max(\sum_{c'=1}^Ca_{c'},1)-b_c}{b_c-a_c^2}\right)^{1/C}\frac{\bm a}{\sum_{c=1}^C a_c}
\end{equation}
where
\begin{equation}
    \bm a = \exp\left(\bm\mu+\frac{\bm\sigma^2}2\right),\; \bm b = \exp(2\bm\mu+2\bm\sigma^2).
\end{equation}
\subsubsection{NormCDF}
NormCDF uses the closed-form parameters derived from Moment Matching the Gaussian pushforwards (\cref{sec:dirichlet_matching}):
\begin{equation}
    \bm\gamma := \left(\prod_{c=1}^C\frac{a_c\cdot \max(\sum_{c'=1}^Ca_{c'},1)-b_c}{b_c-a_c^2}\right)^{1/C}\frac{\bm a}{\sum_{c=1}^C a_c}
\end{equation}
where
\begin{equation}
    \bm a = \bm\Phi\left(\frac{\bm \mu}{\sqrt{1+\bm \sigma^2}}\right),\; \bm b = \bm\Phi\left(\frac{\bm\mu}{\sqrt{1+\bm\sigma^2}}\right) - 2 \bm T\left(\frac{\bm\mu}{\sqrt{1+\bm\sigma^2}}, \frac{1}{\sqrt{1+2\bm\sigma^2}}\right).
\end{equation}
\subsubsection{Sigmoid}
Sigmoid uses the closed-form parameters derived from Moment Matching the Gaussian pushforwards (\cref{sec:dirichlet_matching}):
\begin{equation}
    \bm\gamma := \left(\prod_{c=1}^C\frac{a_c\cdot \max(\sum_{c'=1}^Ca_{c'},1)-b_c}{b_c-a_c^2}\right)^{1/C}\frac{\bm a}{\sum_{c=1}^C a_c}
\end{equation}
where
\begin{equation}
    \begin{aligned}
        \bm a & = \bm\rho\left(\frac{\bm \mu}{\sqrt{1+\frac{\pi}{8}\bm \sigma^2}}\right), \\ \bm b &=  \bm\rho\left(\frac{\bm \mu}{\sqrt{1+\frac{\pi}{8}\bm \sigma^2}}\right) - \frac{1}{\sqrt{1+\frac{\pi}{8}\bm\sigma^2}}\bm\rho\left(\frac{\bm \mu}{\sqrt{1+\frac{\pi}{8}\bm \sigma^2}}\right)\left(1-\bm\rho\left(\frac{\bm \mu}{\sqrt{1+\frac{\pi}{8}\bm \sigma^2}}\right)\right).
    \end{aligned}
\end{equation}

\section{List of Uncertainty Estimators}\label{app:list_estimators}
In this section, we list the uncertainty estimators used in our experiments (\cref{sec:experiments}).

\subsection{Predictive}
\label{app:predictive}
Given a predictive $\bm p$, we consider two uncertainty estimators.

\textbf{Maximum Probability}
\begin{equation}
    \argmax_{c\in \{1,\dots C\}} p_c.
\end{equation}
\textbf{Entropy}
\begin{equation}
    -\sum_{c=1}^C p_c\log p_c.
\end{equation}

\subsection{Monte Carlo}
Given $S$ Monte Carlo samples $\bm {\hat p}^{(1)},\dots,\bm {\hat p}^{(S)}$ with mean $\bm {\hat p}$, one can calculate a predictive as their average and derive the estimators in~\cref{app:predictive}. However, Monte Carlo samples allow one to calculate two additional estimators detailed below.

\textbf{Expected Entropy}
\begin{equation}
    -\frac{1}{S}\sum_{s=1}^S\sum_{c=1}^C \hat p^{(s)}_c \log \hat p^{(s)}_c.
\end{equation}

\textbf{Mutual Information/Jensen-Shannon Divergence}
\begin{equation}
    -\sum_{c=1}^C {\hat p}_c\log {\hat p}_c +\frac{1}{S}\sum_{s=1}^S\sum_{c=1}^C \hat p^{(s)}_c \log \hat p^{(s)}_c.
\end{equation}

\subsection{Dirichlet}
Given a second-order Dirichlet distribution with parameters $\bm \gamma$, one can obtain the expected entropy and mutual information estimators without the need for Monte Carlo samples.

\textbf{Expected Entropy}
\begin{equation}
    -\sum_{c=1}^C\frac{\gamma_c}{\sum_{c'=1}^C\gamma_{c'}}\left(\psi(\gamma_c+1) -\psi\left(\sum_{c'=1}^C \gamma_{c'}+1\right)\right)
\end{equation}
where $\psi$ is the digamma function.

\textbf{Mutual Information}
\begin{equation}
    \sum_{c=1}^C\frac{\gamma_c}{\sum_{c'=1}^C\gamma_{c'}}\left(-\log \gamma_c +\log \left(\sum_{c'=1}^C \gamma_{c'}\right)+\psi(\gamma_c+1) -\psi\left(\sum_{c'=1}^C \gamma_{c'}+1\right)\right).
\end{equation}

\section{Experimental Setup}
\label{app:experimental_setup}

This section describes our experimental setup in detail.

We have two main research questions:
\begin{itemize}
    \item What are the effects of changing the learning objective?
    \item Do we have to sacrifice performance for sample-free predictives?
\end{itemize}

To answer the first question, we evaluate our closed-form predictives (Sigmoid, NormCDF, Exp) and moment-matched Dirichlet distributions against softmax models equipped with approximate inference tools (Laplace Bridge~\cite{pmlr-v180-hobbhahn22a}, Mean Field~\cite{lu_mean-field_2021}, Monte Carlo sampling). We consider \textbf{Heteroscedastic Classifiers (HET)}~\cite{collier2021correlated}, \textbf{Spectral-Normalised Gaussian Processes (SNGP)}~\cite{liu2020simple}, and last-layer \textbf{Laplace approximation} methods~\cite{daxberger2021laplace} as backbones (see~\cref{app:benchmarked_methods} for details).

The resulting 21 (method, activation, predictive) triplets are evaluated on ImageNet-1k \citep{deng2009imagenet} and CIFAR-10 \citep{krizhevsky2009learning} on five metrics aligning with practical needs from uncertainty estimates~\cite{mucsanyi2023trustworthy}:
\begin{enumerate}
    \item Log probability proper scoring rule for the predictive,
    \item Expected calibration error of the predictive's maximum-probability confidence,
    \item Binary log probability proper scoring rule for the correctness prediction task,
    \item Accuracy of the predictive's argmax,
    \item AUROC for the out-of-distribution (OOD) detection task.
\end{enumerate}
See~\cref{app:benchmark_tasks} for details.

For ImageNet, we treat ImageNet-C~\citep{hendrycks2019benchmarking} samples with 15 corruption types and 5 severity levels as OOD samples. For CIFAR-10, we use the CIFAR-10C corruptions.

For the second question, we consider fixed (method, activation) pairs and test whether our methods perform on par with the Monte Carlo sampled predictives.

To provide a fair comparison, we reimplement each method as simple-to-use wrappers around deterministic backbones.

For ImageNet evaluation, we use a ResNet-50 backbone pretrained with the softmax activation function, and train each (method, activation) pair for 50 ImageNet-1k epochs following~\citet{mucsányi2024benchmarking}. We train with the LAMB optimiser~\cite{you2019large} using a batch size of $128$ and gradient accumulation across $16$ batches, resulting in an effective batch size of $2048$, following \citet{tran2022plex}. We further use a cosine learning rate schedule with a single warmup epoch using a warmup learning rate of $0.0001$. The learning rate is treated as a hyperparameter and selected from the interval $[0.0005, 0.05]$ based on the validation performance. The weight decay is selected from the set $\{0.01, 0.02\}$. During training, we keep track of the best-performing checkpoint on the validation set and load it before testing. We search for ideal hyperparameters with a ten-step Bayesian Optimization scheme~\cite{shahriari2015taking} in Weights \& Biases~\cite{wandb} based on the negative log-likelihood.

On CIFAR-10, we train ResNet-28 models from scratch for 100 epochs. The only exceptions are the SNGP models that are trained for 125 epochs~\cite{liu2020simple}. We train with Momentum SGD using a batch size of $128$ and no gradient accumulation. Similarly to ImageNet, we use a cosine learning rate schedule but with five warmup epochs and warmup learning rate $1\mathrm{e}{-5}$. The learning rate is also treated as a hyperparameter on CIFAR-10. We use the interval $[0.05, 1]$ for Sigmoid and NormCDF, and $[0.01, 0.15]$ for Softmax and Exp. The optimal learning rates for Sigmoid and NormCDF are generally larger, as the class-wise binary cross-entropies are averaged instead of summed. The weight decay is selected from the interval $[1\mathrm{e}{-6}, 1\mathrm{e}{-4}]$. Similarly to ImageNet, we use the best-performing checkpoint in the tests and use a ten-step Bayesian Optimization scheme to select performant hyperparameters.

The hyperparameter optimization, training, and evaluation of the methods used in this paper took $0.8$ GPU years on NVIDIA RTX 2080Ti GPUs in a university compute cluster. The individual runs required no more than $50$ GB of RAM and $3$ days of runtime.

\section{Runtime Overhead of Monte Carlo Sampling}
\label{app:runtime_overhead}

We measure the time to obtain the predictives from the logit-space Gaussians compared to the time of a forward pass on an NVIDIA RTX 2080 Ti GPU (with similar results on Tesla A100 GPUs). As shown in \Cref{tab:runtime-overhead}, ResNet-50 with 1000 samples shows 6-8\% overhead depending on the batch size, confirming our claim in the main text. In contrast, the runtime cost of computing our closed-form predictives is negligible compared to a forward pass, especially for larger batch sizes. These results demonstrate that even with a diagonal covariance structure, MC sampling yields a nontrivial overhead, especially for large numbers of classes and samples. In comparison, our closed-form approach has negligible computational cost.

\begin{table}[!t]
\centering
\caption{Runtime overhead of Monte Carlo sampling on ImageNet as a percentage of total inference time across models and batch sizes.}
\label{tab:runtime-overhead}
\small
\begin{tabular}{lccccc}
\toprule
\textbf{Batch Size} & \textbf{Mean Field} & \textbf{1 sample} & \textbf{10 samples} & \textbf{100 samples} & \textbf{1000 samples} \\
\midrule
\multicolumn{6}{l}{\textbf{ResNet-26} (C=10)} \\
\cmidrule(l){1-6}
1   & 2.75\% & 9.82\% & 10.63\% & 10.70\% & 10.70\% \\
16  & 2.04\% & 10.61\% & 11.51\% & 11.51\% & 11.92\% \\
32  & 2.11\% & 10.98\% & 11.94\% & 11.96\% & 12.42\% \\
64  & 2.11\% & 10.88\% & 11.90\% & 11.95\% & 12.34\% \\
128 & 2.04\% & 8.12\%  & 8.84\%  & 8.96\%  & 9.61\%  \\
256 & 1.17\% & 4.31\%  & 4.73\%  & 4.93\%  & 7.14\%  \\
\addlinespace
\multicolumn{6}{l}{\textbf{Wide-ResNet-26-5} (C=100)} \\
\cmidrule(l){1-6}
1   & 1.29\% & 8.54\% & 9.36\% & 9.40\% & 9.28\% \\
16  & 1.16\% & 5.94\% & 6.58\% & 6.84\% & 8.56\% \\
32  & 0.86\% & 3.36\% & 3.76\% & 3.98\% & 6.95\% \\
64  & 0.47\% & 1.83\% & 2.03\% & 2.12\% & 5.82\% \\
128 & 0.23\% & 0.90\% & 1.04\% & 1.04\% & 4.88\% \\
256 & 0.12\% & 0.46\% & 0.53\% & 0.84\% & 4.47\% \\
\addlinespace
\multicolumn{6}{l}{\textbf{ResNet-50} (C=1000)} \\
\cmidrule(l){1-6}
1   & 1.15\% & 5.43\% & 5.88\% & 6.09\% & 6.30\% \\
16  & 0.32\% & 1.25\% & 1.43\% & 1.70\% & 8.04\% \\
32  & 0.16\% & 0.64\% & 0.72\% & 1.34\% & 7.51\% \\
64  & 0.09\% & 0.36\% & 0.41\% & 1.13\% & 7.75\% \\
128 & 0.05\% & 0.20\% & 0.22\% & 1.00\% & 7.82\% \\
256 & 0.03\% & 0.10\% & 0.18\% & 0.95\% & 8.05\% \\
\bottomrule
\end{tabular}
\end{table}

\section{Benchmark Metrics}
\label{app:benchmark_tasks}

Our experiments use five tasks/metrics:
\begin{enumerate}
    \item Log probability proper scoring rule for the predictive,
    \item Expected calibration error of the predictive's maximum-probability confidence,
    \item Binary log probability proper scoring rule for the correctness prediction task,
    \item Accuracy of the predictive's argmax,
    \item AUROC for the out-of-distribution detection task.
\end{enumerate}

Below, we describe these metrics and their respective tasks.

\subsection{Log Probability Proper Scoring Rule for the Predictive}

First, we briefly discuss proper and strictly proper scoring rules over general probability measures based on~\cite{mucsanyi2023trustworthy}.

Consider a function $S\colon \mathcal{Q} \times \mathcal{Y} \to \mathbb{R}$ where $\mathcal{Q}$ is a family of probability distributions over the space $\mathcal{Y}$, called the label space.

$S$ is called a proper scoring rule if and only if
\begin{equation}
    \max_{q \in \mathcal{Q}} \mathbb{E}_{Y\sim p} S(q, Y) = \mathbb{E}_{Y\sim p} S(p, Y),
\end{equation}
i.e., $p$ is \emph{one of} the maximisers of $S$ in $q$ in expectation. $S$ is further \emph{strictly} proper if $\argmax_{q \in \mathcal{Q}} \mathbb{E}_{Y\sim p} S(q, Y) = p$ is the \emph{unique} maximiser of $S$ in $q$ in expectation.

The log probability scoring rule for categorical distributions is defined as
\begin{equation}
    S(q, c) = \sum_{c' = 1}^C \delta_{c,c'} \log q_{c'}(x) = \log q_c(x),
\end{equation}
where $c \in \{1, \dotsc, C\}$ is the true class and $\delta$ is the Kronecker delta. $S$ defined this way is a strictly proper scoring rule, i.e.~$\mathbb{E}_{Y\sim p}S(q, Y)$
is maximal if and only if
\begin{equation}
    q(Y=c\mid x) = p(Y = c \mid x)\; \forall c \in \{1, \dotsc, C\}.
\end{equation}

The score above is equivalent to the negative cross-entropy loss.

\subsection{Expected Calibration Error}

To set up the required quantities for the Expected Calibration Error (ECE) metric~\cite{naeini2015obtaining}, we follow the steps below, based on~\cite{mucsanyi2023trustworthy}.
\begin{enumerate}
    \item Train a neural network on the training dataset.
    \item Create predictions and confidence estimates on the test data.
    \item Group the predictions into \(M\) bins based on the confidences estimates. Define bin \(B_m\) to be the set of all indices $n$ of predictions \((\hat{y}_n, \tilde c_n)\) for which
          \begin{equation}
              \tilde c_n \in \left(\frac{m - 1}{M}, \frac{m}{M}\right].
          \end{equation}
\end{enumerate}

The Expected Calibration Error (ECE) metric~\cite{naeini2015obtaining} is then defined as
\begin{equation}
    \mathrm{ECE} = \sum_{m = 1}^M \frac{|B_m|}{n} \left|\mathrm{acc}(B_m) - \mathrm{conf}(B_m)\right|
\end{equation}
where
\begin{align}
    \mathrm{acc}(B_m)  & = \frac{1}{|B_m|} \sum_{n \in B_m} 1\left(\hat{y}_n = c_n\right),         \\
    \mathrm{conf}(B_m) & = \frac{1}{|B_m|} \sum_{n \in B_m} \max_{1\leq c\leq C}f_c(x_n).
\end{align}
Intuitively, the ECE is high when the model's per-bin confidences match its accuracy on the bin. We use $M = 15$ bins in this paper.

\subsection{Binary Log Probability Proper Scoring Rule for Correctness Prediction}

The correctness prediction task measures the models' ability to predict the correctness of their own predictions. We consider \emph{correctness estimators} $\tilde c(x) \in [0, 1]$ for inputs $x \in \mathcal{X}$ derived from the predictives. Framed as a binary prediction task, the goal of these estimators is to predict the probability of the predicted class' correctness. In particular, for an (input, target) pair $(x, y)$ with $x \in \mathcal{X}, y \in \mathcal{Y}$, we set the correctness target to
\begin{equation}
    \ell \equiv \ell(x, y) = 1\left(\max_{1\leq c\leq C}h_c(x) = y\right).
\end{equation}

Dropping the dependency on $x \in \mathcal{X}$ for brevity, the log probability score for binary targets $\ell \in \{0, 1\}$ and estimators $\tilde c \in [0, 1]$ is defined as
\begin{equation}
    S(\tilde c, \ell) = \begin{cases}\log c &\mathrm{if}\  \ell = 1 \\ \log(1 - \tilde c) &\mathrm{if}\  \ell = 0\end{cases} = \ell\log \tilde c + (1 - \ell)\log (1 - \tilde c).
\end{equation}
One can show that this is indeed a strictly proper scoring rule~\cite{mucsanyi2023trustworthy}.

\subsection{Accuracy}

For completeness, the accuracy of a predictive $h$ on a dataset $(x_n, c_n)_{n=1}^N$ is
\begin{equation}
    \operatorname{acc}(h; (x_n, c_n)_{n=1}^N) = \frac{1}{N}\sum_{n=1}^N 1\left(\argmax_{c \in \{1, \dotsc, C\}}h_c(x_n) = \tilde c_n\right).
\end{equation}

\subsection{Area Under the Receiver Operating Characteristic Curve for Out-of-Distribution Detection}

Out-of-distribution detection is another binary prediction task where a general uncertainty estimator $u(x) \in \mathbb{R}$ (derived from predictives or second-order Dirichlet distributions) is tasked to separate ID and OOD samples from a balanced mixture. The target OOD indicator variable $o(x)$ is, therefore, binary. As the uncertainty estimator can take on any real value, we measure the Area Under the Receiver Operating Characteristic curve (AUROC), which quantifies the separability of ID and OOD samples w.r.t.~the uncertainty estimator.

Given uncertainty estimates \(u_n \equiv u(x_n)\) and target binary labels \(o_i\) on a balanced dataset $(x_n, o_n)_{n=1}^N$, as well as a threshold \(t \in \mathbb{R}\), we predict $1$ (out-of-distribution) when \(u_n \ge t\) and $0$ (in-distribution) when \(u_n < t\). This lets us define the following index sets:
\begin{equation}
    \begin{aligned}
        \text{True positives: }\mathrm{TP}(t)  & = \left\{n: o_n = 1 \land u_n \ge t\right\} \\
        \text{False positives: }\mathrm{FP}(t) & = \left\{n: o_n = 0 \land u_n \ge t\right\} \\
        \text{False negatives: }\mathrm{FN}(t) & = \left\{n: o_n = 1 \land u_n < t\right\}   \\
        \text{True negatives: }\mathrm{TN}(t)  & = \left\{n: o_n = 0 \land u_n < t\right\}.
    \end{aligned}
\end{equation}

The Receiver Operating Characteristic (ROC) curve compares the following quantities:
\begin{equation}
    \begin{aligned}
        \mathrm{TPR}(t) & = \frac{|\mathrm{TP}(t)|}{|\mathrm{TP}(t)| + |\mathrm{FN}(t)|} = \frac{|\mathrm{TP}(t)|}{|\mathrm{P}|}  \\
        \mathrm{FPR}(t) & = \frac{|\mathrm{FP}(t)|}{|\mathrm{FP}(t)| + |\mathrm{TN}(t)|} = \frac{|\mathrm{FP}(t)|}{|\mathrm{N}|}.
    \end{aligned}
\end{equation}
Here, FPR tells us how many of the actual negative samples in the dataset are recalled (predicted positive) at threshold \(t\).

One can draw a curve of \((\mathrm{FPR}(t), \mathrm{TPR}(t))\) for all \(t\) from \(-\infty\) to \(+\infty\). This is the \emph{ROC curve}. The area under this curve quantifies how well the uncertainty estimator $u(x)$ can separate in-distribution and out-of-distribution inputs.

\section{Benchmarked Methods}
\label{app:benchmarked_methods}

This section describes our benchmarked methods and provides further implementation details.

\subsection{Spectral Normalised Gaussian Process}
\label{app:sngp}

Spectral normalised Gaussian processes (SNGP)~\cite{liu2020simple} use spectral normalization of the parameter tensors for distance-awareness and a last-layer Gaussian process approximated by Fourier features to capture uncertainty. For an input $x \in \mathcal{X}$ and number of classes $C$, they predict a $C$-variate Gaussian distribution
\begin{equation}
    \mathcal{N}\left(\bm{B}\bm{\phi}(x), \bm{\phi}(x)^\top\left(\bm{\Psi}^\top\bm{\Psi} + \bm{I}\right)^{-1}\bm{\phi}(x)\bm{I}_C\right)
\end{equation}
in logit space.
\begin{itemize}
    \item $\bm{B} \in \mathbb{R}^{C \times D}$ is a \emph{learned} parameter matrix that maps pre-logits to logits.
    \item $\bm{\phi}(x) = \bm\cos\left(\bm{W}\bm{f}^{L-1}(x) + \bm{b}\right) \in \mathbb{R}^D$ is a random pre-logit embedding of the input $x \in \mathcal{X}$. $\bm{f}^{L-1}(x)$ denotes the pre-logit embedding. $\bm{W}$ is a \emph{fixed} semi-orthogonal random matrix, and $\bm{b}$ is also a \emph{fixed} random vector but sampled from $\text{Uniform}(0, 2\pi)$.
    \item $\bm{\Psi}^\top\bm{\Psi}$ is the (unnormalised) empirical covariance matrix of the pre-logits of the training set. This is calculated by accumulating the mini-batch estimates during the last epoch.\footnote{As we use a cosine learning rate decay in all experiments, the model makes negligible changes in its pre-logit feature space in the last epoch. Thus, the empirical covariance matrix is approximately consistent.}
\end{itemize}

The method applies spectral normalization to the hidden weights in each layer using a power iteration scheme with a single iteration per batch to obtain the largest singular value. \citet{liu2020simple} claim this helps with input distance awareness.

\subsection{Heteroscedastic Classifier}\label{app:het}

Heteroscedastic classifiers (HET)~\cite{collier2021correlated} construct a Gaussian distribution in the logit space to model per-input uncertainties:
\begin{equation}
    \mathcal{N}(\bm{f}(x), \bm{\Sigma}(x)),
\end{equation}
where $\bm{f}(x) \in \mathbb{R}^D$ is the logit mean for input $x \in \mathcal{X}$ and
\begin{equation}
    \bm{\Sigma}(x) = \bm{V}(x)^\top \bm{V}(x) + \bm\diag(\bm{d}(x))
\end{equation}
is a (positive definite) covariance matrix. Both the low-rank term $\bm{V}(x)$ and the diagonal term $\bm{d}(x)$ are calculated as a linear function of the pre-logit layer's output.

To learn the per-input covariance matrices from the training set, one has to construct a predictive estimate from $\mathcal{N}(\bm{f}(x), \bm{\Sigma}(x))$ using any of the methods in~\cref{app:list_predictive}. This predictive estimate is then trained using a standard cross-entropy (NLL) loss.

HET uses a temperature parameter to scale the logits before calculating the BMA. This is chosen using a validation set.

The off-diagonal terms of the covariance matrix do not affect the approximate predictive (\cref{eq:approximate_predictive}). This means that, in our framework, one can discard the low-rank term $\bm{V}(x)$ and only model the diagonal term $\bm{d}(x)$ without a decrease in expressivity. To keep comparisons fair and use the same backbone with the same number of parameters, we also only model $\bm{d}(x)$ for softmax-based predictives.

\subsection{Laplace Approximation}

The Laplace approximation~\cite{daxberger2021laplace} approximates the posterior $p(\bm{\theta} \mid \mathcal{D})$ over the network parameters $\bm{\theta}$ for a Gaussian prior $p(\bm{\theta})$ and likelihood defined by the network architecture by a Gaussian. In its simplest form, it uses the maximum a posteriori (MAP) weights $\bm{\theta}_\text{MAP} \in \mathbb{R}^P$ as the mean and the inverse Hessian of the \emph{regularised} loss over the training set $\tilde{\mathcal{L}}(\bm{\theta}; \mathcal{D}) = \mathcal{L}(\bm{\theta}; \mathcal{D}) + \lambda \Vert\bm{\theta}\Vert_2^2$ evaluated at the MAP as the covariance matrix:
\begin{equation}
    \mathcal{N}\left(\bm{\theta}_\text{MAP}, \left(\left.\frac{\partial^2 \tilde{\mathcal{L}}(\bm{\theta}; \mathcal{D})}{\partial \theta_i \partial \theta_j}\right|_{\bm{\theta}_\text{MAP}}\right)^{-1}\right) = \mathcal{N}\left(\bm{\theta}_\text{MAP}, \left(\left.\frac{\partial^2 \mathcal{L}(\bm{\theta}; \mathcal{D})}{\partial \theta_i \partial \theta_j}\right|_{\bm{\theta}_\text{MAP}} + \lambda \bm{I}_P\right)^{-1}\right).
\end{equation}
This is a \emph{locally optimal} post-hoc Gaussian approximation of the true posterior $p(\bm{\theta} \mid \mathcal{D})$ based on a second-order Taylor approximation. For details, see~\cite{tatzel2025debiasing}.

For deep neural networks, the Hessian matrix is often replaced with the Generalised Gauss-Newton (GGN) matrix. The GGN is guaranteed to be positive semidefinite even for suboptimal weights and has efficient approximation schemes, such as Kronecker-Factored Approximate Curvature~\cite{martens2015optimizing} or low-rank approximations.

Denoting our curvature estimate of choice by $\bm{G}$, the logit-space Gaussian is obtained by pushing forward the weight-space Gaussian measure through the \emph{linearised} model around $\bm{\theta}_\text{MAP}$. For an input $x \in \mathcal{X}$, this results in
\begin{equation}
    \mathcal{N}\left(\bm{f}(x, \bm{\theta}_{\text{MAP}}), \left(\bm{J}_{\bm{\theta}_\text{MAP}} \bm{f}(x)\right)\left(\bm{G} + \lambda \bm{I}_P\right)^{-1}\left(\bm{J}_{\bm{\theta}_\text{MAP}} \bm{f}(x)\right)^\top\right),
\end{equation}
where $\bm{J}_{\bm{\theta}_\text{MAP}} \bm{f}(x) \in \mathbb{R}^{C \times P}$ is the model Jacobian matrix.

We use a last-layer KFAC Laplace variant in our experiments and use the \emph{full} training set for calculating the GGN instead of a mini-batch based on recent works on the bias in mini-batch estimates~\cite{tatzel2025debiasing}.

\section{Additional Results}

\subsection{Closed-Form Softmax Predictives}
\label{app:closed_form_softmax_predictives}

One can directly apply~\cref{eq:approximate_exp_predictive} to a neural network trained with the softmax cross-entropy loss in~\cref{eq:cross_entropy} to obtain closed-form predictives. However, we empirically found this approach to decrease performance, as the denominator of~\cref{eq:approximate_exp_predictive} increased to the order of billions during training. Considering a multivariate normal random variable $\bm{x} \sim \mathcal{N}(\bm{\mu}, \Sigma)$ representing a logit distribution,
\begin{equation}
    \Var\left(\sum_{i=1}^C e^{x_i}\right) \ge \Var(e^{x_j}) = e^{2\mu_j + \Sigma_{jj}}\underbrace{(e^{\Sigma_{jj}} - 1)}_{> 0},
\end{equation}
i.e., when $e^{\mu_j}$ is large for some $j \in \{1, \dots, C\}$, the variance of the denominator necessarily explodes, violating our assumption.

To mitigate this issue, one may optimise the regularised cross-entropy loss
\begin{equation}\label{eq:regularised_loss_1}
    \begin{aligned}
        \mathcal L((x_n, c_n)_{n=1}^N) =\tilde{\mathcal L}((x_n, c_n)_{n=1}^N)+ \lambda\sum_{n=1}^N\left(\sum_{c=1}^C \exp(f_c(x_n))-1\right)^2
    \end{aligned}
\end{equation}
or the more numerically stable version
\begin{equation}\label{eq:regularised_loss_2}
    \begin{aligned}
        \mathcal L((x_n, c_n)_{n=1}^N) =\tilde{\mathcal L}((x_n, c_n)_{n=1}^N)+ \lambda\sum_{n=1}^N\left(\log\sum_{c=1}^C \exp(f_c(x_n))\right)^2
    \end{aligned}
\end{equation}
for an appropriately tuned $\lambda$ hyperparameter. The latter formulation can be stably trained even at an ImageNet scale. However, there are fundamental limitations to the predictive given by~\cref{eq:approximate_exp_predictive}. Namely, for \emph{isotropic} logit-space Gaussian distributions, $\nicefrac{\sigma^2(x)}{2}$ introduces a constant shift in the logits, under which the softmax activation function is invariant. Therefore, the predictive approximation collapses into the MAP prediction. The SNGP method predicts such isotropic logit-space Gaussians. Further, we empirically found that the Laplace method's predictives for models trained with the regularised cross-entropy loss were also approximately isotropic; thus, this integral approximation yielded diminishing returns.

For completeness, we share the ImageNet and CIFAR-10 results of $\varphi = \exp$ in~\cref{tab:exp_cifar10,tab:exp_imagenet}.

\begin{table}[htbp]
    \centering
    \caption{Performance metrics of $\varphi = \exp$ on the CIFAR-10 validation dataset. We report the mean and two standard deviations.}
    \label{tab:exp_cifar10}
    \begin{tabular}{lccc}
        \toprule
        \textbf{Metric} & \textbf{Exp SNGP}            & \textbf{Exp HET}             & \textbf{Exp Laplace}         \\
        \midrule
        NLL             & $\phantom{-}0.376 \pm 0.007$ & $\phantom{-}0.353 \pm 0.059$ & $\phantom{-}0.370 \pm 0.024$ \\
        ECE             & $\phantom{-}0.041 \pm 0.006$ & $\phantom{-}0.029 \pm 0.007$ & $\phantom{-}0.045 \pm 0.007$ \\
        Log Prob.       & $-0.255 \pm 0.005$           & $-0.238 \pm 0.038$           & $-0.255 \pm 0.016$           \\
        Accuracy        & $\phantom{-}0.884 \pm 0.013$ & $\phantom{-}0.888 \pm 0.023$ & $\phantom{-}0.893 \pm 0.007$ \\
        AUROC           & $\phantom{-}0.592 \pm 0.006$ & $\phantom{-}0.603 \pm 0.009$ & $\phantom{-}0.596 \pm 0.011$ \\
        \bottomrule
    \end{tabular}
\end{table}

\begin{table}[htbp]
    \centering
    \caption{Performance metrics of $\varphi = \exp$ on the ImageNet validation dataset. We report the mean and two standard deviations.}
    \label{tab:exp_imagenet}
    \begin{tabular}{lccc}
        \toprule
        \textbf{Metric} & \textbf{Exp SNGP}            & \textbf{Exp HET}             & \textbf{Exp Laplace}         \\
        \midrule
        NLL             & $\phantom{-}0.866 \pm 0.031$ & $\phantom{-}0.929 \pm 0.042$ & $\phantom{-}0.986 \pm 0.149$ \\
        ECE             & $\phantom{-}0.058 \pm 0.004$ & $\phantom{-}0.022 \pm 0.007$ & $\phantom{-}0.058 \pm 0.013$ \\
        Log Prob.       & $-0.380 \pm 0.009$           & $-0.403 \pm 0.011$           & $-0.402 \pm 0.014$           \\
        Accuracy        & $\phantom{-}0.784 \pm 0.001$ & $\phantom{-}0.779 \pm 0.001$ & $\phantom{-}0.785 \pm 0.002$ \\
        AUROC           & $\phantom{-}0.606 \pm 0.001$ & $\phantom{-}0.611 \pm 0.001$ & $\phantom{-}0.615 \pm 0.003$ \\
        \bottomrule
    \end{tabular}
\end{table}



\subsection{Vision Transformer Results on ImageNet}
\label{app:vit_imagenet}

\cref{tab:vit_imagenet} shows results on ViT Little~\cite{dosovitskiy2020image} backbones from the \texttt{timm}~\citep{rw2019timm} library
on the ImageNet validation set. As in the main paper, the hyperparameters are optimised using a ten-step Bayesian hyperparameter
optimization scheme of Weights and Biases.

\begin{table}[htbp]
    \centering
    \caption{NLL and accuracy metrics of ViT Little models on the ImageNet validation set. Error bars represent two standard deviations.}
    \label{tab:vit_imagenet}
    \begin{tabular}{lccc}
        \toprule
        \textbf{Method} & \textbf{NLL}         & \textbf{Accuracy (\%)} \\
        \midrule
        \multicolumn{3}{l}{\textbf{Laplace}}                            \\
        \midrule
        Softmax         & $0.81988 \pm 0.0020$ & $79.598 \pm 0.145$     \\
        NormCDF         & $0.79603 \pm 0.0087$ & $80.678 \pm 0.042$     \\
        Sigmoid         & $0.79479 \pm 0.0002$ & $80.104 \pm 0.206$     \\
        \midrule
        \multicolumn{3}{l}{\textbf{HET}}                                \\
        \midrule
        Softmax         & $0.87096 \pm 0.0242$ & $78.604 \pm 0.378$     \\
        NormCDF         & $0.79106 \pm 0.0075$ & $80.514 \pm 0.425$     \\
        Sigmoid         & $0.88175 \pm 0.0356$ & $78.738 \pm 0.324$     \\
        \midrule
        \multicolumn{3}{l}{\textbf{GP}}                                 \\
        \midrule
        Softmax         & $0.86724 \pm 0.0229$ & $78.898 \pm 0.187$     \\
        NormCDF         & $0.76974 \pm 0.0088$ & $80.694 \pm 0.575$     \\
        Sigmoid         & $0.81178 \pm 0.0115$ & $79.980 \pm 0.412$     \\
        \bottomrule
    \end{tabular}
\end{table}

\subsection{CIFAR-10 Results}
\label{app:cifar_10}

This appendix section repeats the experiments presented in the main paper on the CIFAR-10 dataset. For a detailed description of the experimental setup, refer to \cref{app:experimental_setup}. \cref{app:benchmark_tasks} describes the used tasks and metrics.

As stated in the main paper, our two research questions are:
\begin{itemize}
    \item What are the effects of changing the learning objective? (\cref{app:changing_the_learning_objective})
    \item Do we have to sacrifice performance for sample-free predictives? (\cref{app:quality_of_sample_free_predictives})
\end{itemize}

\subsubsection{Quality of Sample-Free Predictives}
\label{app:quality_of_sample_free_predictives}

\begin{table}[t]
    \footnotesize
    \caption{Comparison of ECE results for different predictives using a fixed Laplace backbone.}
    \label{tab:cifar10_pred_probs}
    \centering
    \begin{tabular}{lc}
        \toprule
        \textbf{Method}           & \textbf{Mean $\pm$ 2 std} \\
        \midrule
        \multicolumn{2}{c}{\textbf{Softmax Laplace}}             \\
        \midrule
        MC 100                    & 0.0096 $\pm$ 0.0013       \\
        MC 1000                   & 0.0102 $\pm$ 0.0016       \\
        MC 10                     & 0.0120 $\pm$ 0.0013       \\
        Mean Field                & 0.0121 $\pm$ 0.0029       \\
        Laplace Bridge Predictive & 0.5933 $\pm$ 0.0105       \\
        \midrule
        \multicolumn{2}{c}{\textbf{NormCDF Laplace}}             \\
        \midrule
        Closed-form               & 0.0074 $\pm$ 0.0012       \\
        MC 1000                   & 0.0092 $\pm$ 0.0022       \\
        MC 100                    & 0.0095 $\pm$ 0.0015       \\
        MC 10                     & 0.0100 $\pm$ 0.0020       \\
        \bottomrule
    \end{tabular}
\end{table}

Similarly to the main paper, in this section, we investigate our first research question: whether there is a price to pay for sample-free predictives. \cref{tab:cifar10_pred_probs} showcases the two best-performing (activation, method) pairs on the ECE metric and the CIFAR-10 dataset: Softmax and NormCDF Laplace. Mean Field (MF) is a strong alternative for sample-free predictives, but it has no guarantees and can fall behind MC sampling (see also \cref{fig:synthetic_exp}). Empirically, our closed-form predictives always perform on par with MC sampling.

\subsubsection{Effects of Changing the Learning Objective}
\label{app:changing_the_learning_objective}

As in the main paper, in this section, we use the best-performing predictive and estimator (see~\cref{app:list_predictive}) for softmax models and employ our methods with the closed-form predictives.

\paragraph{Calibration and Proper Scoring}

We first evaluate calibration using the log probability scoring rule~\cite{gneiting2007strictly} and the Expected Calibration Error (ECE) metric~\cite{naeini2015obtaining}. \cref{fig:cifar10_log_prob} shows that on CIFAR-10, the score of our closed-form predictives (Sigmoid, NormCDF) are consistently better than the corresponding softmax results for all methods.

\begin{figure}[t]
    \centering
    \includegraphics{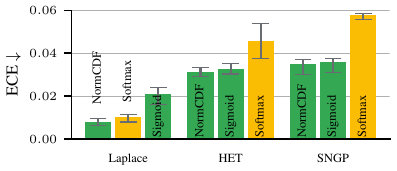}
    \caption{CIFAR-10 ECE results. Our closed-form predictives (\colorrectangle{GoogleGreen}) outperform Softmax (\colorrectangle{GoogleYellow}) on HET and SNGP. Laplace tunes its hyperparameters based on the ECE metric -- NormCDF Laplace is the overall best method. Note the restricted $y$-limits for readability.}
    \label{fig:cifar10_ece}
\end{figure}

\cref{fig:cifar10_ece} shows that on CIFAR-10, our closed-form predictives have a clear advantage on HET and SNGP. Laplace is a post-hoc method that tunes its hyperparameters on the ECE metric, hence its enhanced performance. Our NormCDF predictive is on par with Softmax.

\paragraph{Correctness Prediction}

\begin{figure}[t]
    \centering
    \includegraphics{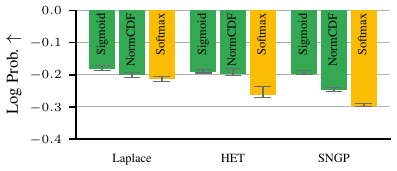}
    \caption{CIFAR-10 log probability proper scoring results for the binary correctness prediction task. Our closed-form predictives (\colorrectangle{GoogleGreen}) consistently outperform Softmax (\colorrectangle{GoogleYellow}) on all methods.}
    \label{fig:cifar10_log_prob_corr_pred}
\end{figure}

\cref{fig:cifar10_log_prob_corr_pred} shows that on the correctness prediction task, our closed-form predictives outperform all Softmax predictives across all methods, as measured by the log probability proper scoring rule.

\paragraph{Accuracy}

\begin{figure}[t]
    \centering
    \includegraphics{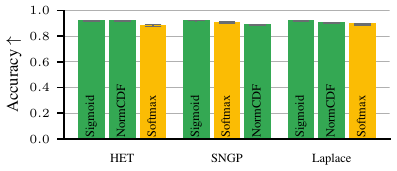}
    \caption{\textbf{Closed-form predictives do not sacrifice accuracy.} CIFAR-10 accuracies. Our closed-form predictives (\colorrectangle{GoogleGreen}) either outperform or are on par with Softmax (\colorrectangle{GoogleYellow}) across all methods.}
    \label{fig:cifar10_accuracy}
\end{figure}

Closed-form predictives do not sacrifice accuracy. \cref{fig:cifar10_accuracy} evidences this claim on CIFAR-10: our closed-form predictives either outperform or are on par with Softmax predictives. The most accurate method is Sigmoid HET. These results support the findings of~\citet{wightman_resnet_2021} that showcase desirable training dynamics of the class-wise cross-entropy loss.

\paragraph{Out-of-Distribution Detection}

\begin{figure}[t]
    \centering
    \includegraphics{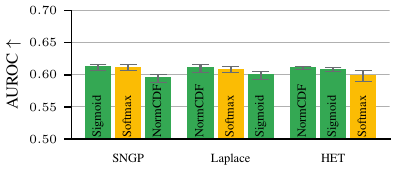}
    \caption{CIFAR-10C OOD detection AUROC results for severity level one. Across all methods, the best-performing predictive is closed-form (\colorrectangle{GoogleGreen}).}
    \label{fig:cifar10_ood_detection}
\end{figure}

Finally, we consider the OOD detection task on a balanced mixture of ID (CIFAR-10) and OOD inputs. As OOD inputs, we consider corrupted CIFAR-10C samples. We use the AUROC metric to evaluate the methods' performance. As shown in \cref{fig:cifar10_ood_detection}, the best-performing method is Sigmoid SNGP, a closed-form method. Generally, Softmax performs on par with our closed-form predictives. Intuitively, separating ID and OOD samples does not require a fine-grained representation of uncertainty, unlike the ECE or proper scoring rules. Nevertheless, the closed-form predictives and second-order Dirichlet distributions are considerably cheaper to calculate than Softmax MC predictions (see \cref{sec:computational_gains}).

\subsection{CIFAR-100 Results}
\label{app:cifar_100}

We repeat the NLL and accuracy evaluation on CIFAR-100 using WideResNet-28-5 models following a ten-step Bayesian hyperparameter sweep on Weights and Biases.
Results are shown in~\cref{tab:cifar_100}.

\begin{table}[htbp]
    \centering
    \caption{NLL and accuracy metrics of WideResNet-28-5 models on the CIFAR-100 test set. Error bars represent two standard deviations.}
    \label{tab:cifar_100}
    \begin{tabular}{lccc}
        \toprule
        \textbf{Method} & \textbf{NLL}         & \textbf{Accuracy (\%)} \\
        \midrule
        \multicolumn{3}{l}{\textbf{Laplace}}                            \\
        \midrule
        Softmax         & $0.92176 \pm 0.0128$ & $78.225 \pm 0.412$     \\
        NormCDF         & $0.88823 \pm 0.0094$ & $78.900 \pm 0.325$     \\
        Sigmoid         & $0.94209 \pm 0.0136$ & $78.050 \pm 0.436$     \\
        \midrule
        \multicolumn{3}{l}{\textbf{HET}}                                \\
        \midrule
        Softmax HET     & $0.98513 \pm 0.0254$ & $77.050 \pm 0.228$     \\
        NormCDF HET     & $0.94643 \pm 0.0022$ & $78.117 \pm 0.286$     \\
        Sigmoid HET     & $0.95251 \pm 0.0142$ & $77.583 \pm 0.175$     \\
        \midrule
        \multicolumn{3}{l}{\textbf{SNGP}}                               \\
        \midrule
        Softmax SNGP    & $0.97156 \pm 0.0348$ & $77.750 \pm 0.462$     \\
        NormCDF SNGP    & $0.90011 \pm 0.0106$ & $79.010 \pm 0.298$     \\
        Sigmoid SNGP    & $0.99996 \pm 0.0265$ & $78.350 \pm 0.215$     \\
        \bottomrule
    \end{tabular}
\end{table}

\subsection{Alignment with the True Predictives}
\label{app:kl_true_predictives}

\cref{tab:kl_divergence} shows the divergence of predictives using various approximation techniques from the true predictive estimated using $10,000$ Monte Carlo samples.
Our closed-form predictives, NormCDF and Sigmoid, are favorable to both the mean field and Laplace bridge approximations.

\begin{table}[htbp]
    \centering
    \caption{Kullback-Leibler (KL) divergence to the true predictive distributions (estimated using $10,000$ Monte Carlo samples) for different approximation methods on the ImageNet validation set using ViT Little backbones.
    The mean KL divergence is calculated over the validation set. Error bars represent two standard deviations over independently trained models using different seeds.}
    \label{tab:kl_divergence}
    \begin{tabular}{lc}
        \toprule
        \textbf{Method} & \textbf{KL Divergence to True Predictive} \\
        \midrule
        NormCDF & $0.0057 \pm 0.0008$ \\
        Sigmoid & $0.0064 \pm 0.0009$ \\
        Softmax mean field & $0.0330 \pm 0.0042$ \\
        Softmax Laplace bridge & $1.7900 \pm 0.1254$ \\
        \bottomrule
    \end{tabular}
\end{table}

\subsection{BCE Loss Performance Gains}
\label{app:bce_perf_gains}

\cref{tab:activation_nll} shows that the Softmax and Sigmoid activation functions (and corresponding losses)
already show improved performance on the \emph{vanilla models} compared to softmax, highlighting that
the performance gains we observe cannot only be attributed to our closed-form predictive approximations
but also the favorable training dynamics of the class-wise BCE loss.

\begin{table}[htbp]
    \centering
    \caption{NLL results on ImageNet using ResNet-50 backbones and different activation functions. Error bars represent two standard deviations.}
    \label{tab:activation_nll}
    \begin{tabular}{lc}
        \toprule
        \textbf{Activation Function} & \textbf{NLL} \\
        \midrule
        NormCDF & $0.9345 \pm 0.0072$ \\
        Softmax & $0.9369 \pm 0.0025$ \\
        Sigmoid & $0.9146 \pm 0.0046$ \\
        \bottomrule
    \end{tabular}
\end{table}

\subsection{Further Out-of-Distribution Detection Results}
\label{app:ood}

\cref{fig:all_ood} shows OOD detection results across all ImageNet-C severity levels. Our closed-form predictives consistently outperform Softmax.

\begin{figure}[t]
    \centering
    \subfloat[OOD detection AUROC with severity level one.]{\includegraphics[width=0.48\linewidth]{gfx/imagenet/auroc_oodness.pdf}}\\
    \subfloat[OOD detection AUROC with severity level two.]{\includegraphics[width=0.48\linewidth]{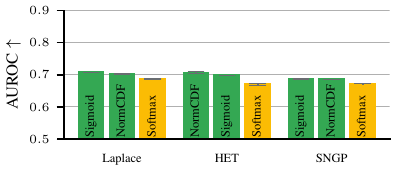}}\hfill
    \subfloat[OOD detection AUROC with severity level three.]{\includegraphics[width=0.48\linewidth]{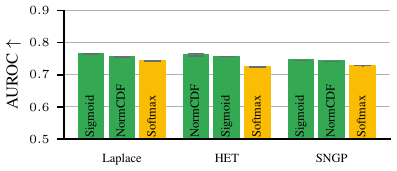}}\\
    \subfloat[OOD detection AUROC with severity level four.]{\includegraphics[width=0.48\linewidth]{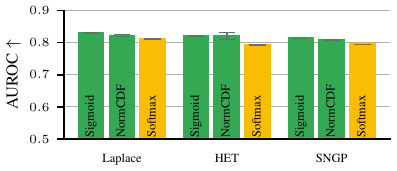}}\hfill
    \subfloat[OOD detection AUROC with severity level five.]{\includegraphics[width=0.48\linewidth]{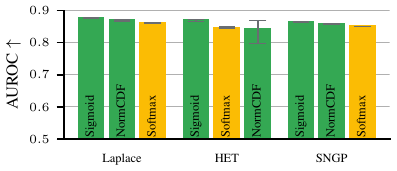}}\\
    \caption{The OOD detection performance of all methods increases steadily as we increase the severity of the perturbed half of the mixed dataset on the ImageNet validation dataset. Our closed-form predictives consistently outperform Softmax.}
    \label{fig:all_ood}
\end{figure}

\subsection{Comparison to Evidential Deep Learning Methods}
\label{app:edl}

\Cref{tab:imagenet_cifar_postnet_sensoy} provides results for two evidential deep learning methods, EDL~\cite{sensoy2018evidential} and PostNet~\cite{charpentier2020posterior}. (We adopt the naming convention of~\cite{mucsányi2024benchmarking}.) We make the comparison because these methods also yield closed-form predictives and second-order quantities of interest, as they build Dirichlet distributions whose advantages we discuss in \Cref{sec:dirichlet_matching}. Following our experimental setup (Section 6), we optimise the hyperparameters of both methods using a ten-step Bayesian Optimization scheme in Weights \& Biases. Compared with the results presented in the main text, our approach outperforms the two EDL methods across all metrics, significantly so on NLL and OOD AUROC. The results on the latter metric indicate that our moment-matched Dirichlet distributions capture second-order epistemic information more effectively than the loss-based EDL methods, which aligns with the findings of~\citet{NEURIPS2022_bc1d640f}.

\begin{table}[!t]
\centering
\caption{Evaluation results of PostNet and EDL across CIFAR-10, CIFAR-100, and ImageNet. Values are mean $\pm$ two standard deviations.}
\label{tab:imagenet_cifar_postnet_sensoy}
\small
\begin{tabular}{l l c c}
\toprule
\textbf{Dataset} & \textbf{Metric} & \textbf{PostNet} & \textbf{EDL} \\
\midrule
\multicolumn{4}{l}{\textbf{ImageNet}} \\
\cmidrule(l){1-4}
 & NLL         & $1.10202 \pm 0.052$  & $1.25544 \pm 0.076$ \\
 & Accuracy    & $75.602 \pm 0.627$   & $75.632 \pm 0.326$  \\
 & ECE         & $0.058168 \pm 0.002$ & $0.033219 \pm 0.001$ \\
 & OOD AUROC   & $0.60675 \pm 0.013$  & $0.60167 \pm 0.018$ \\
\addlinespace
\multicolumn{4}{l}{\textbf{CIFAR-100}} \\
\cmidrule(l){1-4}
 & NLL         & $1.17973 \pm 0.081$  & $1.3645 \pm 0.107$ \\
 & Accuracy    & $75.183 \pm 0.192$   & $75.383 \pm 0.273$ \\
 & ECE         & $0.12295 \pm 0.012$  & $0.088213 \pm 0.005$ \\
 & OOD AUROC   & $0.59152 \pm 0.017$  & $0.59471 \pm 0.02$ \\
\addlinespace
\multicolumn{4}{l}{\textbf{CIFAR-10}} \\
\cmidrule(l){1-4}
 & NLL         & $0.37406 \pm 0.013$  & $0.40403 \pm 0.023$ \\
 & Accuracy    & $91.757 \pm 0.021$   & $90.274 \pm 0.043$ \\
 & ECE         & $0.046192 \pm 0.002$ & $0.041158 \pm 0.002$ \\
 & OOD AUROC   & $0.5931 \pm 0.014$   & $0.60808 \pm 0.006$ \\
\bottomrule
\end{tabular}
\end{table}

\newpage
\clearpage

\section*{NeurIPS Paper Checklist}

\begin{enumerate}

    \item {\bf Claims}
    \item[] Question: Do the main claims made in the abstract and introduction accurately reflect the paper's contributions and scope?
    \item[] Answer: \answerYes{}
    \item[] Justification: The claims of improved uncertainty quantification capabilities are thoroughly verified in~\cref{sec:experiments,app:cifar_10,app:cifar_100,app:ood}.
    \item[] Guidelines:
          \begin{itemize}
              \item The answer NA means that the abstract and introduction do not include the claims made in the paper.
              \item The abstract and/or introduction should clearly state the claims made, including the contributions made in the paper and important assumptions and limitations. A No or NA answer to this question will not be perceived well by the reviewers.
              \item The claims made should match theoretical and experimental results, and reflect how much the results can be expected to generalize to other settings.
              \item It is fine to include aspirational goals as motivation as long as it is clear that these goals are not attained by the paper.
          \end{itemize}

    \item {\bf Limitations}
    \item[] Question: Does the paper discuss the limitations of the work performed by the authors?
    \item[] Answer: \answerYes{}
    \item[] Justification: The paper's limitations are discussed in~\cref{sec:limitations}.
    \item[] Guidelines:
          \begin{itemize}
              \item The answer NA means that the paper has no limitation while the answer No means that the paper has limitations, but those are not discussed in the paper.
              \item The authors are encouraged to create a separate "Limitations" section in their paper.
              \item The paper should point out any strong assumptions and how robust the results are to violations of these assumptions (e.g., independence assumptions, noiseless settings, model well-specification, asymptotic approximations only holding locally). The authors should reflect on how these assumptions might be violated in practice and what the implications would be.
              \item The authors should reflect on the scope of the claims made, e.g., if the approach was only tested on a few datasets or with a few runs. In general, empirical results often depend on implicit assumptions, which should be articulated.
              \item The authors should reflect on the factors that influence the performance of the approach. For example, a facial recognition algorithm may perform poorly when image resolution is low or images are taken in low lighting. Or a speech-to-text system might not be used reliably to provide closed captions for online lectures because it fails to handle technical jargon.
              \item The authors should discuss the computational efficiency of the proposed algorithms and how they scale with dataset size.
              \item If applicable, the authors should discuss possible limitations of their approach to address problems of privacy and fairness.
              \item While the authors might fear that complete honesty about limitations might be used by reviewers as grounds for rejection, a worse outcome might be that reviewers discover limitations that aren't acknowledged in the paper. The authors should use their best judgment and recognize that individual actions in favor of transparency play an important role in developing norms that preserve the integrity of the community. Reviewers will be specifically instructed to not penalize honesty concerning limitations.
          \end{itemize}

    \item {\bf Theory assumptions and proofs}
    \item[] Question: For each theoretical result, does the paper provide the full set of assumptions and a complete (and correct) proof?
    \item[] Answer: \answerYes{} 
    \item[] Justification: All theoretical claims include the full set of assumptions and are proved in the appendix.
    \item[] Guidelines:
          \begin{itemize}
              \item The answer NA means that the paper does not include theoretical results.
              \item All the theorems, formulas, and proofs in the paper should be numbered and cross-referenced.
              \item All assumptions should be clearly stated or referenced in the statement of any theorems.
              \item The proofs can either appear in the main paper or the supplemental material, but if they appear in the supplemental material, the authors are encouraged to provide a short proof sketch to provide intuition.
              \item Inversely, any informal proof provided in the core of the paper should be complemented by formal proofs provided in appendix or supplemental material.
              \item Theorems and Lemmas that the proof relies upon should be properly referenced.
          \end{itemize}

    \item {\bf Experimental result reproducibility}
    \item[] Question: Does the paper fully disclose all the information needed to reproduce the main experimental results of the paper to the extent that it affects the main claims and/or conclusions of the paper (regardless of whether the code and data are provided or not)?
    \item[] Answer: \answerYes{} 
    \item[] Justification: The paper accurately describes the experimental setup in~\cref{sec:experiments,app:experimental_setup} and the code is provided as supplementary material.
    \item[] Guidelines:
          \begin{itemize}
              \item The answer NA means that the paper does not include experiments.
              \item If the paper includes experiments, a No answer to this question will not be perceived well by the reviewers: Making the paper reproducible is important, regardless of whether the code and data are provided or not.
              \item If the contribution is a dataset and/or model, the authors should describe the steps taken to make their results reproducible or verifiable.
              \item Depending on the contribution, reproducibility can be accomplished in various ways. For example, if the contribution is a novel architecture, describing the architecture fully might suffice, or if the contribution is a specific model and empirical evaluation, it may be necessary to either make it possible for others to replicate the model with the same dataset, or provide access to the model. In general. releasing code and data is often one good way to accomplish this, but reproducibility can also be provided via detailed instructions for how to replicate the results, access to a hosted model (e.g., in the case of a large language model), releasing of a model checkpoint, or other means that are appropriate to the research performed.
              \item While NeurIPS does not require releasing code, the conference does require all submissions to provide some reasonable avenue for reproducibility, which may depend on the nature of the contribution. For example
                    \begin{enumerate}
                        \item If the contribution is primarily a new algorithm, the paper should make it clear how to reproduce that algorithm.
                        \item If the contribution is primarily a new model architecture, the paper should describe the architecture clearly and fully.
                        \item If the contribution is a new model (e.g., a large language model), then there should either be a way to access this model for reproducing the results or a way to reproduce the model (e.g., with an open-source dataset or instructions for how to construct the dataset).
                        \item We recognize that reproducibility may be tricky in some cases, in which case authors are welcome to describe the particular way they provide for reproducibility. In the case of closed-source models, it may be that access to the model is limited in some way (e.g., to registered users), but it should be possible for other researchers to have some path to reproducing or verifying the results.
                    \end{enumerate}
          \end{itemize}

    \item {\bf Open access to data and code}
    \item[] Question: Does the paper provide open access to the data and code, with sufficient instructions to faithfully reproduce the main experimental results, as described in supplemental material?
    \item[] Answer: \answerYes{} 
    \item[] Justification: See above.
    \item[] Guidelines:
          \begin{itemize}
              \item The answer NA means that paper does not include experiments requiring code.
              \item Please see the NeurIPS code and data submission guidelines (\url{https://nips.cc/public/guides/CodeSubmissionPolicy}) for more details.
              \item While we encourage the release of code and data, we understand that this might not be possible, so “No” is an acceptable answer. Papers cannot be rejected simply for not including code, unless this is central to the contribution (e.g., for a new open-source benchmark).
              \item The instructions should contain the exact command and environment needed to run to reproduce the results. See the NeurIPS code and data submission guidelines (\url{https://nips.cc/public/guides/CodeSubmissionPolicy}) for more details.
              \item The authors should provide instructions on data access and preparation, including how to access the raw data, preprocessed data, intermediate data, and generated data, etc.
              \item The authors should provide scripts to reproduce all experimental results for the new proposed method and baselines. If only a subset of experiments are reproducible, they should state which ones are omitted from the script and why.
              \item At submission time, to preserve anonymity, the authors should release anonymized versions (if applicable).
              \item Providing as much information as possible in supplemental material (appended to the paper) is recommended, but including URLs to data and code is permitted.
          \end{itemize}

    \item {\bf Experimental setting/details}
    \item[] Question: Does the paper specify all the training and test details (e.g., data splits, hyperparameters, how they were chosen, type of optimizer, etc.) necessary to understand the results?
    \item[] Answer: \answerYes{} 
    \item[] Justification: The main training, implementation, and hyperparameter optimization details are discussed in~\cref{sec:experiments,app:experimental_setup,app:benchmarked_methods}.
    \item[] Guidelines:
          \begin{itemize}
              \item The answer NA means that the paper does not include experiments.
              \item The experimental setting should be presented in the core of the paper to a level of detail that is necessary to appreciate the results and make sense of them.
              \item The full details can be provided either with the code, in appendix, or as supplemental material.
          \end{itemize}

    \item {\bf Experiment statistical significance}
    \item[] Question: Does the paper report error bars suitably and correctly defined or other appropriate information about the statistical significance of the experiments?
    \item[] Answer: \answerYes{} 
    \item[] Justification: We run each experiment on five different seeds and report the min, mean, and max metric values.
    \item[] Guidelines:
          \begin{itemize}
              \item The answer NA means that the paper does not include experiments.
              \item The authors should answer "Yes" if the results are accompanied by error bars, confidence intervals, or statistical significance tests, at least for the experiments that support the main claims of the paper.
              \item The factors of variability that the error bars are capturing should be clearly stated (for example, train/test split, initialization, random drawing of some parameter, or overall run with given experimental conditions).
              \item The method for calculating the error bars should be explained (closed-form formula, call to a library function, bootstrap, etc.)
              \item The assumptions made should be given (e.g., Normally distributed errors).
              \item It should be clear whether the error bar is the standard deviation or the standard error of the mean.
              \item It is OK to report 1-sigma error bars, but one should state it. The authors should preferably report a 2-sigma error bar than state that they have a 96\% CI, if the hypothesis of Normality of errors is not verified.
              \item For asymmetric distributions, the authors should be careful not to show in tables or figures symmetric error bars that would yield results that are out of range (e.g. negative error rates).
              \item If error bars are reported in tables or plots, The authors should explain in the text how they were calculated and reference the corresponding figures or tables in the text.
          \end{itemize}

    \item {\bf Experiments compute resources}
    \item[] Question: For each experiment, does the paper provide sufficient information on the computer resources (type of compute workers, memory, time of execution) needed to reproduce the experiments?
    \item[] Answer: \answerYes{} 
    \item[] Justification: The compute resources and time of execution are discussed at the end of~\cref{app:experimental_setup}.
    \item[] Guidelines:
          \begin{itemize}
              \item The answer NA means that the paper does not include experiments.
              \item The paper should indicate the type of compute workers CPU or GPU, internal cluster, or cloud provider, including relevant memory and storage.
              \item The paper should provide the amount of compute required for each of the individual experimental runs as well as estimate the total compute.
              \item The paper should disclose whether the full research project required more compute than the experiments reported in the paper (e.g., preliminary or failed experiments that didn't make it into the paper).
          \end{itemize}

    \item {\bf Code of ethics}
    \item[] Question: Does the research conducted in the paper conform, in every respect, with the NeurIPS Code of Ethics \url{https://neurips.cc/public/EthicsGuidelines}?
    \item[] Answer: \answerYes{} 
    \item[] Justification: We fulfill all requirements.
    \item[] Guidelines:
          \begin{itemize}
              \item The answer NA means that the authors have not reviewed the NeurIPS Code of Ethics.
              \item If the authors answer No, they should explain the special circumstances that require a deviation from the Code of Ethics.
              \item The authors should make sure to preserve anonymity (e.g., if there is a special consideration due to laws or regulations in their jurisdiction).
          \end{itemize}

    \item {\bf Broader impacts}
    \item[] Question: Does the paper discuss both potential positive societal impacts and negative societal impacts of the work performed?
    \item[] Answer: \answerYes{} 
    \item[] Justification: Our paper conducts no societally harmful research.
    \item[] Guidelines:
          \begin{itemize}
              \item The answer NA means that there is no societal impact of the work performed.
              \item If the authors answer NA or No, they should explain why their work has no societal impact or why the paper does not address societal impact.
              \item Examples of negative societal impacts include potential malicious or unintended uses (e.g., disinformation, generating fake profiles, surveillance), fairness considerations (e.g., deployment of technologies that could make decisions that unfairly impact specific groups), privacy considerations, and security considerations.
              \item The conference expects that many papers will be foundational research and not tied to particular applications, let alone deployments. However, if there is a direct path to any negative applications, the authors should point it out. For example, it is legitimate to point out that an improvement in the quality of generative models could be used to generate deepfakes for disinformation. On the other hand, it is not needed to point out that a generic algorithm for optimizing neural networks could enable people to train models that generate Deepfakes faster.
              \item The authors should consider possible harms that could arise when the technology is being used as intended and functioning correctly, harms that could arise when the technology is being used as intended but gives incorrect results, and harms following from (intentional or unintentional) misuse of the technology.
              \item If there are negative societal impacts, the authors could also discuss possible mitigation strategies (e.g., gated release of models, providing defenses in addition to attacks, mechanisms for monitoring misuse, mechanisms to monitor how a system learns from feedback over time, improving the efficiency and accessibility of ML).
          \end{itemize}

    \item {\bf Safeguards}
    \item[] Question: Does the paper describe safeguards that have been put in place for responsible release of data or models that have a high risk for misuse (e.g., pretrained language models, image generators, or scraped datasets)?
    \item[] Answer: \answerNA{} 
    \item[] Justification: There is no risk in abusing ImageNet, CIFAR-100, or CIFAR-10 classifiers.
    \item[] Guidelines:
          \begin{itemize}
              \item The answer NA means that the paper poses no such risks.
              \item Released models that have a high risk for misuse or dual-use should be released with necessary safeguards to allow for controlled use of the model, for example by requiring that users adhere to usage guidelines or restrictions to access the model or implementing safety filters.
              \item Datasets that have been scraped from the Internet could pose safety risks. The authors should describe how they avoided releasing unsafe images.
              \item We recognize that providing effective safeguards is challenging, and many papers do not require this, but we encourage authors to take this into account and make a best faith effort.
          \end{itemize}

    \item {\bf Licenses for existing assets}
    \item[] Question: Are the creators or original owners of assets (e.g., code, data, models), used in the paper, properly credited and are the license and terms of use explicitly mentioned and properly respected?
    \item[] Answer: \answerYes{} 
    \item[] Justification: We cite the ImageNet, ImageNet-ReaL, CIFAR-10, CIFAR-100, and CIFAR-10H datasets and the \texttt{timm} library for the code and the pretrained ResNet-50s.
    \item[] Guidelines:
          \begin{itemize}
              \item The answer NA means that the paper does not use existing assets.
              \item The authors should cite the original paper that produced the code package or dataset.
              \item The authors should state which version of the asset is used and, if possible, include a URL.
              \item The name of the license (e.g., CC-BY 4.0) should be included for each asset.
              \item For scraped data from a particular source (e.g., website), the copyright and terms of service of that source should be provided.
              \item If assets are released, the license, copyright information, and terms of use in the package should be provided. For popular datasets, \url{paperswithcode.com/datasets} has curated licenses for some datasets. Their licensing guide can help determine the license of a dataset.
              \item For existing datasets that are re-packaged, both the original license and the license of the derived asset (if it has changed) should be provided.
              \item If this information is not available online, the authors are encouraged to reach out to the asset's creators.
          \end{itemize}

    \item {\bf New assets}
    \item[] Question: Are new assets introduced in the paper well documented and is the documentation provided alongside the assets?
    \item[] Answer: \answerNA{} 
    \item[] Justification: We do not release new assets; we only record metrics.
    \item[] Guidelines:
          \begin{itemize}
              \item The answer NA means that the paper does not release new assets.
              \item Researchers should communicate the details of the dataset/code/model as part of their submissions via structured templates. This includes details about training, license, limitations, etc.
              \item The paper should discuss whether and how consent was obtained from people whose asset is used.
              \item At submission time, remember to anonymize your assets (if applicable). You can either create an anonymized URL or include an anonymized zip file.
          \end{itemize}

    \item {\bf Crowdsourcing and research with human subjects}
    \item[] Question: For crowdsourcing experiments and research with human subjects, does the paper include the full text of instructions given to participants and screenshots, if applicable, as well as details about compensation (if any)?
    \item[] Answer: \answerNA{} 
    \item[] Justification: We do not use crowdsourcing.
    \item[] Guidelines:
          \begin{itemize}
              \item The answer NA means that the paper does not involve crowdsourcing nor research with human subjects.
              \item Including this information in the supplemental material is fine, but if the main contribution of the paper involves human subjects, then as much detail as possible should be included in the main paper.
              \item According to the NeurIPS Code of Ethics, workers involved in data collection, curation, or other labor should be paid at least the minimum wage in the country of the data collector.
          \end{itemize}

    \item {\bf Institutional review board (IRB) approvals or equivalent for research with human subjects}
    \item[] Question: Does the paper describe potential risks incurred by study participants, whether such risks were disclosed to the subjects, and whether Institutional Review Board (IRB) approvals (or an equivalent approval/review based on the requirements of your country or institution) were obtained?
    \item[] Answer: \answerNA{} 
    \item[] Justification: We do not conduct experiments with humans.
    \item[] Guidelines:
          \begin{itemize}
              \item The answer NA means that the paper does not involve crowdsourcing nor research with human subjects.
              \item Depending on the country in which research is conducted, IRB approval (or equivalent) may be required for any human subjects research. If you obtained IRB approval, you should clearly state this in the paper.
              \item We recognize that the procedures for this may vary significantly between institutions and locations, and we expect authors to adhere to the NeurIPS Code of Ethics and the guidelines for their institution.
              \item For initial submissions, do not include any information that would break anonymity (if applicable), such as the institution conducting the review.
          \end{itemize}

    \item {\bf Declaration of LLM usage}
    \item[] Question: Does the paper describe the usage of LLMs if it is an important, original, or non-standard component of the core methods in this research? Note that if the LLM is used only for writing, editing, or formatting purposes and does not impact the core methodology, scientific rigorousness, or originality of the research, declaration is not required.
    \item[] Answer: \answerNA{} 
    \item[] Justification: The paper involves no use of LLMs.
    \item[] Guidelines:
          \begin{itemize}
              \item The answer NA means that the core method development in this research does not involve LLMs as any important, original, or non-standard components.
              \item Please refer to our LLM policy (\url{https://neurips.cc/Conferences/2025/LLM}) for what should or should not be described.
          \end{itemize}

\end{enumerate}

\end{document}